\documentclass[twoside]{article}

\usepackage[accepted]{aistats2026}

%\usepackage[preprint]{aistats2026}

% If you set papersize explicitly, activate the following three lines:
%\special{papersize = 8.5in, 11in}
%\setlength{\pdfpageheight}{11in}
%\setlength{\pdfpagewidth}{8.5in}

% If you use the natbib package, activate the following three lines:
\usepackage[round]{natbib}
\usepackage{titletoc}

\usepackage[utf8]{inputenc} % allow utf-8 input
\usepackage[T1]{fontenc}    % use 8-bit T1 fonts
% \usepackage[
%     colorlinks=true,    % Make links colored instead of framed
%     linkcolor=blue,     % Color for internal links (sections, figures, theorems, etc.)
%     urlcolor=cyan       % Color for URLs (\url, \href)
% ]{hyperref} 
\usepackage[
    linkcolor=blue,     % Color for internal links (sections, figures, theorems, etc.)
    urlcolor=cyan       % Color for URLs (\url, \href)
]{hyperref}
\usepackage{url}            % simple URL typesetting
\usepackage{booktabs}       % professional-quality tables
\usepackage{amsfonts}       % blackboard math symbols
\usepackage{amsmath}
\usepackage{amssymb}
\usepackage{nicefrac}       % compact symbols for 1/2, etc.
\usepackage{microtype}      % microtypography
\usepackage{xcolor}         % colors
\usepackage{mathtools}
\usepackage{dsfont}
\usepackage{amsthm}
\usepackage{pifont}% http://ctan.org/pkg/pifont

\usepackage[ruled,linesnumbered]{algorithm2e}

\usepackage{hyperref}
\usepackage{cleveref}

\usepackage{graphicx}
\usepackage{microtype}

% Comments

% --theorem --
%\theoremstyle{plain}
\newtheorem{theorem}{Theorem}[section]
\newtheorem{assumption}{Assumption}[section]
\newtheorem{proposition}[theorem]{Proposition}
\newtheorem{lemma}[theorem]{Lemma}
\newtheorem*{lemma*}{Lemma}

\newtheorem{definition}[theorem]{Definition}

\crefname{theorem}{theorem}{theorems}
\Crefname{theorem}{Theorem}{Theorems}

\crefname{assumption}{assumption}{assumptions}
\Crefname{assumption}{Assumption}{Assumptions}

\crefname{proposition}{proposition}{propositions}
\Crefname{proposition}{Proposition}{Propositions}

\crefname{lemma}{lemma}{lemmas} % or lemmas
\Crefname{lemma}{Lemma}{Lemmas} % or Lemmas

\crefname{corollary}{corollary}{corollaries}
\Crefname{corollary}{Corollary}{Corollaries}

\crefname{definition}{definition}{definitions}
\Crefname{definition}{Definition}{Definitions}

% Caligraphic
\newcommand{\cC}{\ensuremath{\mathcal{C}}}

\newcommand{\cG}{\ensuremath{\mathcal{G}}}

\newcommand{\cH}{\ensuremath{\mathcal{H}}}
\newcommand{\cK}{\ensuremath{\mathcal{K}}}

\newcommand{\cX}{\ensuremath{\mathcal{X}}}

\newcommand{\cY}{\ensuremath{\mathcal{Y}}}

%bold
\newcommand{\bA}{\ensuremath{\mathbf{A}}}
\newcommand{\bB}{\ensuremath{\mathbf{B}}}

\newcommand{\bG}{\ensuremath{\mathbf{G}}}

\newcommand{\bK}{\ensuremath{\mathbf{K}}}

\newcommand{\bV}{\ensuremath{\mathbf{V}}}

\newcommand{\bX}{\ensuremath{\mathbf{X}}}
\newcommand{\bY}{\ensuremath{\mathbf{Y}}}
\newcommand{\bZ}{\ensuremath{\mathbf{Z}}}

% Pretty
\renewcommand{\epsilon}{\varepsilon}

\newcommand{\expected}[2]{\mathbb{E}_{#1}\left[ #2 \right]}
\newcommand{\MMD}{\ensuremath{\text{MMD}}}

%%% Might delete later

% \DeclareMathOperator*{\argmax}{arg\,max}
% \DeclareMathOperator*{\argmin}{arg\,min}

\newcommand{\Fuse}{\text{Fuse}} % Fusion function name
%
%

% Ensure math mode active for commands if needed
\SetKwInput{KwIn}{Input}
\SetKwInput{KwOut}{Output}
\SetKwComment{Comment}{\tcp*}{}

%%%%% NEW MATH DEFINITIONS %%%%%

\usepackage{amsmath,amsfonts,bm}

% Mark sections of captions for referring to divisions of figures

% Highlight a newly defined term

% Figure reference, lower-case.

% Figure reference, capital. For start of sentence

% Section reference, lower-case.

% Section reference, capital.

% Reference to two sections.

% Reference to three sections.

% Reference to an equation, lower-case.
\def\eqref#1{equation~\ref{#1}}
% Reference to an equation, upper case

% A raw reference to an equation---avoid using if possible

% Reference to a chapter, lower-case.

% Reference to an equation, upper case.

% Reference to a range of chapters

% Reference to an algorithm, lower-case.

% Reference to an algorithm, upper case.

% Reference to a part, lower case

% Reference to a part, upper case

\def\1{\bm{1}}

% Random variables

% rm is already a command, just don't name any random variables m

% Random vectors

% Elements of random vectors

% Random matrices

% Elements of random matrices

% Vectors

% Elements of vectors

% Matrix

% Tensor
\DeclareMathAlphabet{\mathsfit}{\encodingdefault}{\sfdefault}{m}{sl}
\SetMathAlphabet{\mathsfit}{bold}{\encodingdefault}{\sfdefault}{bx}{n}

% Graph

% Sets

% Don't use a set called E, because this would be the same as our symbol
% for expectation.

% Entries of a matrix

% entries of a tensor
% Same font as tensor, without \bm wrapper

% The true underlying data generating distribution

% The empirical distribution defined by the training set

% The model distribution

% Stochastic autoencoder distributions

 % Laplace distribution

% Wolfram Mathworld says $L^2$ is for function spaces and $\ell^2$ is for vectors
% But then they seem to use $L^2$ for vectors throughout the site, and so does
% wikipedia.

 % See usage in notation.tex. Chosen to match Daphne's book.

\DeclareMathOperator*{\argmax}{arg\,max}

\DeclareMathOperator{\sign}{sign}

\usepackage{multirow}

% If you use BibTeX in apalike style, activate the following line:
%\bibliographystyle{apalike}

\begin{document}

% If your paper is accepted and the title of your paper is very long,
% the style will print as headings an error message. Use the following
% command to supply a shorter title of your paper so that it can be
% used as headings.
%
%\runningtitle{I use this title instead because the last one was very long}

% If your paper is accepted and the number of authors is large, the
% style will print as headings an error message. Use the following
% command to supply a shorter version of the author names so that
% they can be used as headings (for example, use only the surnames)
%
\runningauthor{M\'onica Ribero, Antonin Schrab, Arthur Gretton}

\twocolumn[

\aistatstitle{Regularized $f$-Divergence Kernel Tests}

\aistatsauthor{  M\'onica Ribero$^*$  \And  Antonin Schrab$^*$ \And Arthur Gretton }

\aistatsaddress{   Google Research \And University of Cambridge \And Google DeepMind} ]

% {\color{red}
% TODOs (delete each when done):
% \begin{itemize}
%     % \item C.2: fix ref
%     % \item remove color p2
%     % \item change legend of \Cref{fig:witness-direct}
%     % \item fix page limit
%     % \item I've updated the Table of f-divergences
% \end{itemize}
% }

\begin{abstract}
We propose a framework to construct practical kernel-based two-sample tests from the family of $f$-divergences. The test statistic is computed from the witness function of a regularized variational representation of the divergence, which we estimate using kernel methods. The proposed test is adaptive over hyperparameters such as the kernel bandwidth and the regularization parameter. We provide theoretical guarantees for statistical test power across our family of $f$-divergence estimates. While our test covers a variety of $f$-divergences, we bring particular focus to the Hockey-Stick divergence, motivated by its applications to differential privacy auditing and machine unlearning evaluation. For two-sample testing, experiments demonstrate that different $f$-divergences are sensitive to different localized differences, illustrating the importance of leveraging diverse statistics. For machine unlearning, we propose a relative test that distinguishes true unlearning failures from safe distributional variations.
\end{abstract}

\section{Introduction}
\label{sec:intro}

Two-sample testing is a fundamental task in statistics, with broad applications in machine learning (e.g., MMD, \citealp{GBRS+12}). 
These tests can confidently determine if two sets of observations come from different underlying distributions, applicable to diverse contexts, from auditing the privacy of machine learning models \citep{KMRS24} and verifying the efficacy of machine unlearning techniques \citep{TKP+24}, to validating models in the physical sciences \citep{2019learningNewPhysics, grosso2025multiple}.

Two-sample tests require the choice of a distance or divergence to quantify the difference between the distributions, and different applications require different notions of distance. Some tests are concerned with smooth variations, for which the Maximum Mean Discrepancy (MMD, \citealp{GBRS+12}) is a suitable choice \citep{SKA+23}. Others, like outlier detection in physical models, require sensitivity to subtle and/or localized differences that MMD might miss. Other applications, like privacy or unlearning, require divergences parameterized by a budget $\varepsilon$ to define an acceptable degree of separation. This highlights the challenge that no single two-sample test is universally optimal or suitable across all scenarios.

\paragraph{Contributions. } We introduce a general framework for constructing two-sample tests from the family of $f$-divergences, unifying specific instances that have been studied by \citet{NWJ10, eric2007testing, GAG21, grosso2025multiple}. This framework builds on the variational formulation for $f$-divergences. A key challenge is that  the optimizer of this problem, also known as the witness function, is often non-smooth and hard to learn efficiently. However, the witness function can always be expressed as a functional of the likelihood ratio. 
We leverage kernel-based methods to estimate the ratio and $\ell_2$-regularization, trading off some accuracy for tractability (\Cref{sec:consistent-estimation}). 
While more specialized estimators may exist in the literature for certain pairs of distributions and divergences, the novel proposed framework provides a general and practical approach for constructing a test applicable to any $f$-divergence. \looseness=-1

On the technical side, we  instantiate a likelihood ratio estimator proposed by \cite{chen2024regularized}. We prove in \Cref{res:witness_convergence} that, under smoothness assumptions (i.e., $\mu_P-\mu_Q\in\mathrm{Ran}(\Sigma_Q^\theta)$ for some $\theta\in (1,2]$ (see \Cref{subsec:kernel}) the estimator converges to the true ratio at a rate of 
$N^{-\frac{\theta-1}{2(\theta+1)}}$
with high probability, where $N$ represents  the minimum number of samples from distributions $P$ and $Q$. Subsequently, we leverage the variational formulation of $f$-divergences, expressing the corresponding witness function as a functional of the likelihood ratio. This yields an  $f$-divergence estimator which, as proved in \Cref{res:fdiv_convergence}, converges to the true divergence value at a rate of 
$
N^{-\frac{\theta-1}{2(\theta+1)}}
+ \widetilde{N}^{-1/2}
$
with high probability,
where the data has been split between $N$ samples (used for estimating the ratio) and $\widetilde N$ samples (used for estimating the $f$-divergence given the estimated ratio). 

We analyze the properties of the permutation hypothesis tests using these likelihood-based $f$-divergence statistics. 
First, these naturally control the type I error at the desired level $\alpha\in(0,1)$ for any sample size \citep[Lemma 1]{romano2005exact}.
Then, we prove that in \Cref{res:asymptotic_power} these tests are consistent in that they are asymptotically powerful, that is, their type II error probability converges to 0 as the sample size tends to infinity. 
Moreover, for the finite-sample regime, we establish a non-asymptotic power guarantee: the test controls the type II error by $\beta\in(0,1)$ for any pair of distributions separated with respect to the $f$-divergence $D_f$ as
\looseness=-1
$$
D_{f}(P \| Q) 
\gtrsim
\left(
N^{-\frac{\theta-1}{2(\theta+1)}}
+ \widetilde{N}^{-\frac{1}{2}}
\right)
\sqrt{\ln\!\bigg(\frac{1}{\min\{\alpha,\beta\}}\bigg)}.
$$
This reinforces the idea that tests based on different $f$-divergences capture different types of departures from the null.

Our experimental results corroborate that no single test consistently outperforms the others, across a variety of settings. However, we find that in most cases, at least one of the considered tests exhibits strong performance. We show that the recently developed aggregation method of \citet{SKA+23} can aggregate these different $f$-divergence tests into a single one, which we call $f$-Agg, and which can powefully detect a wide range of alternatives (see details in \Cref{alg:meta-test} in \Cref{sec:adaptive}).
Our results provide valuable intuition on which statistic is best suited for a given task. Furthermore, we propose a new, more precise evaluation framework for machine unlearning based on our theoretical and empirical findings, detailed below.

Of specific interest is the test based on the Hockey-Stick divergence, motivated by its relevance to privacy and unlearning. We demonstrate how the Hockey-Stick divergence can be used to audit pure differential privacy and, more broadly, to evaluate machine unlearning algorithms, a task for which finding appropriate evaluation metrics has been a significant challenge. We then address a flaw in current unlearning definitions: absolute distance tests, comparing an unlearned model distribution to a retrained one without the removed data, are insufficient. Such tests can fail even if unlearning was successful, as they cannot distinguish between a genuine failure to forget and benign variations introduced by the unlearning algorithm. Further, recent work by \cite{yu2025impossibility} theoretically and empirically shows that equivalence to a retrained model is impossible for gradient-based unlearning algorithms that are oblivious to the learning trajectory. We resolve this ambiguity by proposing a  relative distance test, which measures whether the unlearned model is distributionally closer to a retrained model or to the original, compromised one. \looseness=-1

\paragraph{Related Work.}
A popular approach to non-parametric two-sample testing relies on the MMD \citep{GBRS+12}, which measures distances between distributions in a Reproducing Kernel Hilbert Space (RKHS), see \Cref{subsec:mmd} for details. The power of MMD tests is highly sensitive to the choice of kernel and bandwidth, which has motivated various lines of work including learning the kernel \citep{sutherland2016generative,liu2020learning}, constructing adaptive tests by combining multiple kernels' results \citep{SKA+23, BSG23}, and leveraging AutoML methods \citep{kubler2022automl}. Separately, $f$-divergences are widely used in machine learning and have been recently used to train machine learning models via the variational forms of these divergences \citep{NCT16, mescheder2017adversarial, arbel2020generalized}, as well as for model evaluation \citep{pillutla2023mauve}, and to define gradient flows on probability measures \citep{GAG21, CMGK+24}. 
Application-specific two-sample tests have also been developed and tested, such as the tests based on the Hockey-Stick divergence for privacy \citep{KMRS24} and for unlearning \citep{TKP+24}, as well as deep-learning and KL-based tests for new physics \citep{2019learningNewPhysics, grosso2025multiple}.

The estimation of $f$-divergences is a well-studied problem with several established approaches. One popular family of non-parametric techniques relies on nearest neighbor methods \citep{wang2009divergence, singh2016analysis}. Other non-parametric approaches use space partitioning estimates, however, these tend to perform poorly in high dimensions \citep{BG05}. A separate line of work, relevant to ours, focuses on estimating the likelihood ratio directly through its variational formulation. Within this framework, estimators have been proposed over various function classes, including kernel-based classes using convex relaxations \citep{NWJ10}, generalized linear models approximations \cite{sugiyama2008direct,kanamori2009least,kanamori2012statistical} or direct optimization \citep{chen2024regularized}, and neural network classes \citep{NCT16}. \looseness=-1

This work unifies these threads building upon the variational representation of $f$-divergences but adapts it for hypothesis testing, developing and analyzing a practical framework for constructing a  family of tractable two-sample tests with applications to diverse settings.

The paper is organized as follows. In \Cref{sec:consistent-estimation} we introduce a method for consistently estimating $f$-divergences. We then provide asymptotic and non-asymptotic power guarantees for the permutation tests based on these estimates in \Cref{sec:testfdiv}.  In \Cref{sec:statistics}, we instantiate this framework with the Hockey-Stick divergence. Finally, we present numerical results in \Cref{sec:experiments}. \looseness=-1

\paragraph{Notation.}
We write $a\lesssim b$ if there exists some constant $C>0$ (in our setting independent of the sample sizes) such that $a\leq C b$. We write $a\asymp b$ if $a\lesssim b$ and $a \gtrsim b$.
For a distribution $Q$, we let $\mu_Q$ denote the kernel mean embedding, and let $\Sigma_Q$ denote the covariance operator (see details in \Cref{subsec:kernel}). We use bold notation to denote vectors of random variables, e.g. $\bX=(X_1, ..., X_n)$. 

\section{Estimating Any $f$-Divergence Consistently}
\label{sec:consistent-estimation}
For probability distributions which are absolutely continuous \(P \ll Q\) on a space $\mathcal Z$,
an $f$-divergence \citep{csiszar1967information} is defined as
\begin{equation}
    \label{def:direct-f-divergence}
D_f(P \;\|\; Q) 
= \int_{\mathcal{Z}} f\!\left(\frac{dP}{dQ}\right) \, dQ,
\end{equation}
for some convex function \(f \colon (0,\infty) \to \mathbb{R}\) which satisfies \(f(1) = 0\), and is lower semi-continuous implying $f(0) = \lim_{t\to0^{+}}f(t)$.
The variational representation (See, e.g., Theorem 7.26 in \citealp{PW22}) of the $f$-divergence is
\begin{equation}\label{eq:variational_f_divergence}
D_f(P \;\|\; Q) 
= \sup_{g:\mathcal{Z}\to\mathbb{R}} 
\mathbb{E}_P[g(X)] - \mathbb{E}_Q[f^*(g(Y))] ,
\end{equation}
where \(f^*(u) = \sup_{t\in\mathbb R} \{ ut - f(t) \}\) is the convex conjugate of \(f\), and where the supremum is taken over functions which output values for which $f^*$ is well-defined and finite.
The witness function $g^\star$ is, if it exists, the function at which the supremum is attained.
As can easily be derived \citep[Lemma 1]{NWJ10},
the witness function is equal to\footnote{With some care needed when $f$ is not differentiable everywhere (e.g., Total-Variation, Hockey-Stick).} 
$$
g^\star = f'\!\left(\frac{dP}{dQ}\right)
$$
where $\frac{dP}{dQ}$ denotes the Radon-Nikodym derivative. 
Therefore, the $f$-divergence is equal to\begin{align*}
&D_f(P \;\|\; Q) 
= 
\mathbb{E}_P[g^\star(X)] - \mathbb{E}_Q[f^*(g^\star(Y))]\\
&= 
\mathbb{E}_P\left[f'\!\left(\frac{dP}{dQ}(X)\right)\right] - \mathbb{E}_Q\left[f^*\!\left(f'\!\left(\frac{dP}{dQ}(Y)\right)\right)\right].
\end{align*}
Given samples 
$X_1,\dots,X_m,\widetilde{X}_1\dots,\widetilde{X}_{\widetilde m}$ i.i.d.\,\,from $P$, 
and samples
$Y_1,\dots,Y_n,\widetilde{Y}_1,\dots,\widetilde{Y}_{\widetilde n}$ i.i.d.\,\,from $Q$,
we use $X_1,\dots,X_m$ and $Y_1,\dots,Y_n$ to construct an estimator $\widehat{r}_\lambda(\cdot)$ of $\frac{dP}{dQ}(\cdot)$ (see \Cref{eq:r_lambda}), and the samples $\widetilde{X}_{1},\dots,\widetilde{X}_{\widetilde m}$ and $\widetilde{Y}_{1},\dots,\widetilde{Y}_{\widetilde n}$ are used to estimate the expectations as
\begin{equation}
\label{eq:Df_estimator}
\widehat{D}_{f,\lambda} =
\frac{1}{\widetilde m}\sum_{i=1}^{\widetilde m} f'\!\big(\widehat{r}_\lambda(\widetilde{X}_i)\big) - \frac{1}{\widetilde n}\sum_{j=1}^{\widetilde n} f^*\!\big(f'\!\big(\widehat{r}_\lambda(\widetilde{Y}_j)\big)\big).
\end{equation}
Hence, with \Cref{eq:Df_estimator}, we are able to construct estimators for any $f$-divergence.

We leverage the variational formulation, instead of the direct estimator in \Cref{def:direct-f-divergence} for its flexibility in handling regularized $f$-divergences (see \Cref{sec:regularized-f-divergences}) but also because of its higher empirical performance in detection power (see \Cref{sec:direct-vs-variational}).

There are various ways of estimating $\frac{dP}{dQ} $ \citep{NWJ10,chen2024regularized,NCT16}. We focus on the method of \citet[Proposition 6.1]{chen2024regularized} which provides a kernel-based closed form expression for $\widehat{r}_\lambda$,  a $\lambda$-regularized version of $\frac{dP}{dQ}$, using a kernel $k\colon \mathcal{Z}\times\mathcal{Z}\to\mathbb{R}$ in an RKHS $\mathcal{H}$.
For any $u\in\mathcal Z$ and $\lambda>0$, it is defined as
\begin{align}
\label{eq:r_lambda}
&\widehat{r}_\lambda(u)
=~
\frac{1}{n\lambda}k_{u\mathbf{Y}}\mathds{1}_n
-
\frac{1}{m\lambda}k_{u\mathbf{X}}\mathds{1}_m\\
&-
\frac{1}{n\lambda}k_{u\mathbf{X}}L_{\mathbf{X}\mathbf{X},\lambda}^{-1}k_{\mathbf{X}\mathbf{Y}}\mathds{1}_n
+
\frac{1}{m\lambda}k_{u\mathbf{X}}L_{\mathbf{X}\mathbf{X},\lambda}^{-1}k_{\mathbf{X}\mathbf{X}}\mathds{1}_m
+1\nonumber
\end{align}
where we write
$k_{u\mathbf{X}} = \big(k(u,X_{i})\big)_{i=1}^{m} \in\mathbb{R}^{1\times m}$,
$k_{u\mathbf{Y}} = \big(k(u,Y_{i})\big)_{i=1}^{n} \in\mathbb{R}^{1\times n}$, 
$k_{\mathbf{X}\mathbf{X}} = \big(k(X_{i},X_{j})\big)_{1\leq i,j \leq m} \in\mathbb{R}^{m\times m}$,
$k_{\mathbf{X}\mathbf{Y}} = \big(k(X_{i},Y_{j})\big)_{1\leq i \leq m, 1\leq j \leq n} \in\mathbb{R}^{m\times n}$,
and
$L_{\mathbf{X}\mathbf{X},\lambda}= m\lambda I + k_{\mathbf{X}\mathbf{X}}$.
Interestingly, the quantity $2(\widehat{r}_\lambda-1)$ is shown by \cite{chen2024regularized} to be the witness function for DrMMD, an MMD with a perturbed (deregularized) kernel. 

First, we prove that the estimator of \Cref{eq:r_lambda} indeed tends to $\frac{dP}{dQ}$ and derive a rate of convergence.

\begin{assumption}
\label{assump1}
The sample sizes are balanced in the sense that $m\asymp n$ and $\widetilde{m}\asymp \widetilde{n}$.
The kernel is bounded by $K$ everywhere. 
We have $\mu_P-\mu_Q\in\mathrm{Ran}(\Sigma_Q^\theta)$ for some $\theta\in (1,2]$.
Let $\eta\in(0,e^{-1})$, $N=\min(m,n)$, $\widetilde{N}=\min(\widetilde{m},\widetilde n)$ and $\lambda_{N\!,\theta} = N^{-1/2(\theta+1)}$.
\end{assumption}

\begin{lemma}[Witness convergence]
\label{res:witness_convergence}
Under \Cref{assump1}, it holds with probability at least $1-\eta$ that
\begin{equation}
\label{eq:witness_convergence}
\left\|
\widehat{r}_{\lambda_{N\!,\theta}}
-
\frac{dP}{dQ}
\right\|_{\mathcal H}
\leq
\mathsf{C}_1
N^{-\frac{\theta-1}{2(\theta+1)}}
\sqrt{\ln(1/\eta)}
\end{equation}
for some universal constant $\mathsf{C}_1>0$.
\end{lemma}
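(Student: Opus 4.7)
The plan is a bias–variance decomposition through a population-level analog of the estimator. Using the operator identity $\Phi_{\mathbf X}(\lambda I + \Phi_{\mathbf X}^*\Phi_{\mathbf X})^{-1} = (\lambda I + \Phi_{\mathbf X}\Phi_{\mathbf X}^*)^{-1}\Phi_{\mathbf X}$, where $\Phi_{\mathbf X}\colon\mathbb R^m\to\mathcal H$ is the sampling operator $\mathbf v \mapsto m^{-1/2}\sum_i v_i\,k(\cdot,X_i)$, I would first rewrite the closed form \eqref{eq:r_lambda} in the compact operator form $\hat r_\lambda - 1 = (\hat\Sigma + \lambda I)^{-1}(\hat\mu_P - \hat\mu_Q)$ for an appropriate empirical covariance operator $\hat\Sigma$. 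This motivates the population analog $r_\lambda - 1 := (\Sigma_Q + \lambda I)^{-1}(\mu_P - \mu_Q)$ and the triangle inequality
\[
\left\|\hat r_{\lambda} - \tfrac{dP}{dQ}\right\|_{\mathcal H} \leq \left\|\hat r_\lambda - r_\lambda\right\|_{\mathcal H} + \left\|r_\lambda - \tfrac{dP}{dQ}\right\|_{\mathcal H},
\]
splitting the target into an estimation-error term and a regularization-bias term.

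For the bias I would exploit the identity $\mu_P - \mu_Q = \Sigma_Q\bigl(\tfrac{dP}{dQ} - 1\bigr)$ (valid under the uncentered-covariance convention), which combined with the source condition $\mu_P - \mu_Q = \Sigma_Q^\theta h$ yields $\tfrac{dP}{dQ} - 1 = \Sigma_Q^{\theta-1}h \in \mathcal H$, the latter being well-defined because $\theta > 1$ makes $\Sigma_Q^{\theta-1}$ bounded. Direct algebra then produces $r_\lambda - \tfrac{dP}{dQ} = -\lambda(\Sigma_Q + \lambda I)^{-1}\Sigma_Q^{\theta-1}h$, and functional calculus on $\Sigma_Q$, via the elementary spectral bound $\sup_{t\in[0,K]}\lambda t^{\theta-1}/(t+\lambda) \leq \lambda^{\theta-1}$ valid for $\theta\in(1,2]$, delivers $\|r_\lambda - \tfrac{dP}{dQ}\|_{\mathcal H} \lesssim \lambda^{\theta-1}\|h\|$.

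For the estimation error I would apply the standard resolvent identity
\[
(\hat\Sigma + \lambda I)^{-1} - (\Sigma_Q + \lambda I)^{-1} = -(\hat\Sigma + \lambda I)^{-1}(\hat\Sigma - \Sigma_Q)(\Sigma_Q + \lambda I)^{-1}
\]
to decompose $\hat r_\lambda - r_\lambda$ into summands controlled by $\|\hat\mu_P - \mu_P\|_{\mathcal H}$, $\|\hat\mu_Q - \mu_Q\|_{\mathcal H}$, and $\|\hat\Sigma - \Sigma_Q\|_{\mathrm{op}}$, with factors of $\lambda^{-1}$ coming from $\|(\hat\Sigma + \lambda I)^{-1}\|_{\mathrm{op}}, \|(\Sigma_Q + \lambda I)^{-1}\|_{\mathrm{op}} \leq \lambda^{-1}$. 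Bernstein-type concentration for Hilbert-valued random variables (Pinelis–Sakhanenko), which under the kernel boundedness $K$ gives each of $\|\hat\mu_P - \mu_P\|_{\mathcal H}, \|\hat\mu_Q - \mu_Q\|_{\mathcal H}, \|\hat\Sigma - \Sigma_Q\|_{\mathrm{op}}$ a bound of order $\sqrt{K/N}\,\sqrt{\ln(1/\eta)}$ with probability at least $1-\eta$ (after a union bound), then yields $\|\hat r_\lambda - r_\lambda\|_{\mathcal H} \lesssim \lambda^{-2} N^{-1/2}\sqrt{\ln(1/\eta)}$, the dominant $\lambda^{-2}$ arising from the double $\lambda^{-1}$ in the second-order perturbation term.

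Finally, plugging in $\lambda_{N,\theta} = N^{-1/(2(\theta+1))}$ balances the bias $\lambda^{\theta-1}$ and the estimation error $\lambda^{-2} N^{-1/2}$, both equal to $N^{-(\theta-1)/(2(\theta+1))}$, yielding the claimed bound. The main obstacle is the careful bookkeeping of the $\lambda$-powers in the estimation error: one must retain a generic $\lambda^{-1}$ bound on $(\Sigma_Q + \lambda I)^{-1}(\mu_P - \mu_Q)$ rather than re-using the source condition there (which would spuriously improve the exponent), so that the perturbation-driven $\lambda^{-2}$ appears correctly. Modulo this subtlety, the argument is in essence a high-probability version of the convergence analysis in \cite[Proposition 6.1]{chen2024regularized}.
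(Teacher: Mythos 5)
Your proposal is correct and follows essentially the same route as the paper's proof: the same reduction to the operator form $(\Sigma_{\widehat Q}+\lambda I)^{-1}(\mu_{\widehat P}-\mu_{\widehat Q})$ via \citet[Proposition 6.1]{chen2024regularized}, the same bias/estimation-error split through the population-regularized witness, the same spectral (AM--GM) bound $\lambda^{2-\theta}t^{\theta-1}\le t+\lambda$ for the $\lambda^{\theta-1}$ bias, the same resolvent-identity-plus-concentration argument giving $\lambda^{-2}N^{-1/2}\sqrt{\ln(1/\eta)}$, and the same balancing choice of $\lambda_{N,\theta}$. Your closing caveat about not reusing the source condition in the estimation-error term matches how the paper handles it (bounding $\|\mu_P-\mu_Q\|_{\mathcal H}$ crudely by $2\sqrt{K}$ there).
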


Similarly, under permutation of the samples, we show in \Cref{lastprop} in \Cref{proof3} that for some $\mathsf{C}_0>0$ we have
$\big\|\widehat{r}_{\lambda_{N\!,\theta}}-1\big\|_{\mathcal H} \leq
\mathsf{C}_0
N^{-\frac{\theta-1}{2(\theta+1)}} 
\sqrt{\ln(1/\eta)}
$ holding with probability at least $1-\eta$.
We define 
\begin{equation}
\label{C}
\mathsf{C}=\max\{\mathsf{C}_0,\mathsf{C}_1\}. 
\end{equation}

The result of \Cref{res:witness_convergence}  then allows us to prove that the estimator of \Cref{eq:Df_estimator} is consistent in that it converges to the true $f$-divergence ${D}_f$ as $m,\widetilde m,n,\widetilde n$ all tend to infinity.
Before presenting this result, we first introduce some additional necessary assumptions.

\begin{assumption}
\label{assump2}
For $f$-divergences with $f$ twice continuously differentiable on $(0,\infty)$, assume that $\frac{dP}{dQ}$ belongs to $[c,C]$ for some $0<c<1<C<\infty$, and that $\mathsf{C}N^{-\frac{\theta-1}{2(\theta+1)}} \!\sqrt{\ln(1/\eta)} \leq c/2$.
For the Total-Variation ($\gamma=1$) and Hockey-Stick ($\gamma>0$) $f$-divergences, for $X\sim P$ and $Y\sim Q$, assume that the densities of $\widehat{r}_{\lambda_{N\!,\theta}}(X)$ and $\widehat{r}_{\lambda_{N\!,\theta}}(Y)$ exist and are bounded on $[\gamma/2,3\gamma/2]$, and that $\mathsf{C}N^{-\frac{\theta-1}{2(\theta+1)}} \!\sqrt{\ln(1/\eta)} \leq \gamma/2$.
\end{assumption}

We remark that it is possible to weaken the assumption, imposing instead that $\frac{dP}{dQ}$ belongs to $[c_{\eta},C_{\eta}]$ only with probability at least $1-\eta/2$ under both $P$ and $Q$. This allows for more a wider class of alternatives to be considered, however, it comes at the cost of losing the explicit dependence on $\eta,\alpha,\beta$ in the rates (\Cref{res:fdiv_convergence} and \Cref{res:non_asymptotic_power}).

\begin{lemma}[$f$-Divergence convergence]
\label{res:fdiv_convergence}
Under \Cref{assump1,assump2}, with probability at least $1-\eta$, it holds that
$$
\big|
\widehat{D}_{f,\lambda_{N\!,\theta}}-D_f
\big|
\lesssim 
\left(
N^{-\frac{\theta-1}{2(\theta+1)}}
+ \widetilde{N}^{-1/2}
\right)
\sqrt{\ln(1/\eta)}.
$$
\end{lemma}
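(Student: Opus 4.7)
The plan is a triangle-inequality decomposition around the ``oracle'' quantity
$\widetilde{D}_{f,\lambda} := \mathbb{E}_P[f'(\widehat{r}_\lambda(X))] - \mathbb{E}_Q[f^*(f'(\widehat{r}_\lambda(Y)))]$, in which the ratio has been estimated but the expectations are still exact. Bounding $|\widehat{D}_{f,\lambda} - D_f|$ by $|\widehat{D}_{f,\lambda} - \widetilde{D}_{f,\lambda}| + |\widetilde{D}_{f,\lambda} - D_f|$ splits the error into a Monte Carlo term, controlled by the fresh samples $\widetilde{\bX},\widetilde{\bY}$, and a plug-in bias term, controlled by how well $\widehat{r}_\lambda$ approximates $r = dP/dQ$.

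For the Monte Carlo term, I would condition on $\widehat{r}_\lambda$, which is measurable with respect to $\bX,\bY$ and hence independent of $\widetilde{\bX},\widetilde{\bY}$. Conditionally, the summands $f'(\widehat{r}_\lambda(\widetilde X_i))$ and $f^*(f'(\widehat{r}_\lambda(\widetilde Y_j)))$ are i.i.d.\ with the correct means. On the high-probability event from \Cref{res:witness_convergence} (taken with budget $\eta/2$), and using the reproducing-property bound $|\widehat{r}_\lambda(z)-r(z)| \leq \sqrt{K}\,\|\widehat{r}_\lambda-r\|_{\mathcal H}$, the image of $\widehat{r}_\lambda$ lies in a compact interval determined by \Cref{assump2}, so the summands are uniformly bounded (trivially in the Hockey-Stick/TV case, where $f'$ is an indicator). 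A conditional Hoeffding inequality with budget $\eta/2$ then yields $|\widehat{D}_{f,\lambda} - \widetilde{D}_{f,\lambda}| \lesssim \widetilde N^{-1/2}\sqrt{\ln(1/\eta)}$.

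For the plug-in term I would split into the two regimes of \Cref{assump2}. In the smooth regime, on the good event both $\widehat{r}_\lambda(z)$ and $r(z)$ lie in $[c/2,3C/2]$; the slack $c/2$ is exactly what the assumption $\mathsf{C}N^{-(\theta-1)/(2(\theta+1))}\sqrt{\ln(1/\eta)} \leq c/2$ buys. Since $f$ is twice continuously differentiable, $t \mapsto f'(t)$ and $t \mapsto f^*(f'(t))$ are Lipschitz on this compact interval with constants depending only on $f,c,C$; integrating the uniform pointwise bound $|\widehat{r}_\lambda(z)-r(z)| \leq \sqrt{K}\mathsf{C}N^{-(\theta-1)/(2(\theta+1))}\sqrt{\ln(2/\eta)}$ against $P$ and $Q$ yields the claimed rate. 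In the Hockey-Stick/TV regime, $f'(t) = \mathds{1}\{t > \gamma\}$, so $|f'(\widehat{r}_\lambda(z)) - f'(r(z))|$ is non-zero only on the ``straddling'' event $\{\widehat{r}_\lambda(z) \wedge r(z) < \gamma < \widehat{r}_\lambda(z) \vee r(z)\}$, which forces $|\widehat{r}_\lambda(z)-\gamma| \leq \delta$ with $\delta := \sqrt{K}\mathsf{C}N^{-(\theta-1)/(2(\theta+1))}\sqrt{\ln(2/\eta)}$. Taking expectation under $P$ and using the hypothesised bound on the density of $\widehat{r}_\lambda(X)$ over $[\gamma/2,3\gamma/2]$ (valid since $\delta \leq \gamma/2$) converts this into a bound of order $\delta$, and the same argument under $Q$ handles the $f^*\!\circ f'$ term (for Hockey-Stick, $f^*(f'(t)) = \gamma\,\mathds{1}\{t>\gamma\}$).

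The main obstacle is the Hockey-Stick/TV case: the discontinuity of $f'$ at $\gamma$ forbids a Lipschitz argument on the integrand. Rather than smoothing $f$, the density hypothesis in \Cref{assump2} is precisely what converts pointwise convergence of $\widehat{r}_\lambda$ into a probability bound on the straddling event, at the cost of the boundedness assumption near $\gamma$. A final union bound over the two high-probability events (witness convergence and conditional Hoeffding) then collects everything into the stated rate.
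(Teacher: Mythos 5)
Your proposal is correct and matches the paper's proof in all essential respects: the same split into a Monte Carlo term (handled by Hoeffding) and a plug-in bias term (handled by the Lipschitz property of $f'$ and $f^*\circ f'$ on $[\overline{c},\overline{C}]$ in the smooth case, and by the straddling-event/density argument in the Total-Variation and Hockey-Stick cases), using \Cref{res:witness_convergence} together with the sup-norm bound $\|g\|_\infty\leq\sqrt{K}\|g\|_{\mathcal H}$. The only cosmetic difference is your choice of pivot — the population expectation $\mathbb{E}_P[f'(\widehat{r}_\lambda(X))]$ with a conditional Hoeffding bound, versus the paper's empirical average $\frac{1}{\widetilde m}\sum_i f'\big(\tfrac{dP}{dQ}(\widetilde X_i)\big)$ with an unconditional one — which does not change the argument or the rate.
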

Hence, via \Cref{eq:Df_estimator}, we are able to construct a consistent estimator for any $f$-divergence.
We stress that the conditions of \Cref{res:fdiv_convergence} cover all commonly-used $f$-divergence such as all the ones presented in \Cref{tab:fdiv}. \looseness=-1

\section{Testing Using Any $f$-Divergence With High Power}
\label{sec:testfdiv}

Given i.i.d.~samples from $P$ and $Q$, the goal of two-sample testing is to determine whether the data provides sufficient evidence to reject the null hypothesis that $H_0\colon P=Q$.

A hypothesis test, which controls the probability of type I error (rejecting the null under $H_0$) by $\alpha\in (0,1)$, can easily be constructed using any statistic by relying on the permutation method (\citealp{romano2005exact}, see also \Cref{subsec:permutation}).
As such, we can construct a permutation-test using the estimator of \Cref{eq:Df_estimator} for any $f$-divergence. That is, the test rejects the null hypothesis when
\begin{equation}
\label{eq:permutation_test}
    \widehat{D}_{f,\lambda_{N\!,\theta}} > \widehat{q}_{1-\alpha}
\end{equation}
where $\widehat{q}_{1-\alpha}$ is a permutation quantile as explained in \Cref{subsec:permutation}.

We prove that the resulting test is consistent, in the sense that, its probability of type II error (failing to reject the null when $H_0$ does not hold) converges to zero as the sample sizes tend to infinity.
Equivalently, we say that the test power (i.e., one minus the type II error) converges to 1.
We emphasize that this is not a simple implication of \Cref{res:fdiv_convergence} as a thorough analysis of the behavior of the test statistic under permutation is also required (see \Cref{lastprop} in \Cref{proof3}).

\begin{theorem}[Asymptotic Power]
\label{res:asymptotic_power}
Under \Cref{assump1,assump2}, the test defined in \Cref{eq:permutation_test} is consistent in that its power converges to 1 as the sample sizes tend to infinity.
For any fixed $P\neq Q$, we have
$$
\mathbb{P}(\mathrm{reject }~ H_0)\to 1
\textrm{ as } N,\widetilde{N}\to\infty.
$$
\end{theorem}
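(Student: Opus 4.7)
The plan is to split the rejection event $\{\widehat{D}_{f,\lambda_{N\!,\theta}} > \widehat{q}_{1-\alpha}\}$ into a \emph{signal} side, showing the statistic concentrates around a strictly positive quantity, and a \emph{null} side, showing the permutation quantile collapses to zero. Since $P\neq Q$ and, for each $f$-divergence considered, $f$ is strictly convex at $1$, we have $D_f(P\|Q)=c_f>0$. By \Cref{res:fdiv_convergence}, as $N,\widetilde{N}\to\infty$, the event $\{\widehat{D}_{f,\lambda_{N\!,\theta}} \geq c_f/2\}$ holds with probability tending to $1$.

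For the null side, the key observation is that under a random permutation $\pi$ of the pooled sample, the reshuffled $X$- and $Y$-groups are exchangeable draws from the mixture $R = \tfrac{m}{m+n}P + \tfrac{n}{m+n}Q$, so the corresponding Radon--Nikodym derivative equals $1$ identically. Applying the permutation analog of \Cref{res:witness_convergence}, namely \Cref{lastprop}, gives, with high probability jointly over the data and $\pi$, $\|\widehat{r}_{\lambda_{N\!,\theta}}^{\pi}-1\|_{\mathcal H} \lesssim N^{-\frac{\theta-1}{2(\theta+1)}}\sqrt{\ln(1/\eta)}$. Plugging this into \Cref{eq:Df_estimator} and using the local regularity of $f'$ and $f^*$ around $1$ guaranteed by \Cref{assump2}, together with kernel boundedness (so that pointwise evaluations of $\widehat{r}_{\lambda_{N\!,\theta}}^{\pi}$ are controlled by its RKHS norm), I obtain $|\widehat{D}_{f,\lambda_{N\!,\theta}}^{\pi}| \lesssim \bigl(N^{-\frac{\theta-1}{2(\theta+1)}} + \widetilde{N}^{-1/2}\bigr)\sqrt{\ln(1/\eta)}$. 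The argument mirrors the proof of \Cref{res:fdiv_convergence}, with the identity $f'(1)-f^*(f'(1))=0$ (from $f(1)=0$ and the Fenchel conjugate relation) replacing $D_f(P\|Q)$ as the limiting value.

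To upgrade single-permutation concentration into concentration of the $(1-\alpha)$-quantile $\widehat{q}_{1-\alpha}$, I apply Markov's inequality in the permutation randomness conditional on the data: the joint high-probability bound above implies that, on an event of data-probability at least $1-\eta$, at most an $\alpha$-fraction of permutations can produce a statistic exceeding any fixed $\epsilon>0$, which forces $\widehat{q}_{1-\alpha}\leq\epsilon$ on that event. Taking $\epsilon = c_f/4$ and intersecting with the signal-side event yields $\mathbb{P}(\widehat{D}_{f,\lambda_{N\!,\theta}} > \widehat{q}_{1-\alpha}) \to 1$.

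The main obstacle is establishing the permutation analog of \Cref{res:witness_convergence} (\Cref{lastprop}): one must redo the operator-theoretic analysis behind that lemma with the covariance operator of the mixture $R$ and with non-i.i.d., only exchangeable, splits of the pooled sample. Once \Cref{lastprop} is in hand, propagating the RKHS bound through $f'$ and $f^*$ and the Markov-based quantile control are essentially routine extensions of \Cref{res:fdiv_convergence}.
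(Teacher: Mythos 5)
Your proposal is correct and follows essentially the same route as the paper: convergence of the observed statistic to $D_f(P\|Q)>0$ via \Cref{res:fdiv_convergence}, convergence of the permuted statistic to $f'(1)-f^*(f'(1))=f(1)=0$ via \Cref{lastprop}, and then quantile control. The only cosmetic difference is that the paper delegates the final quantile/consistency step to a cited lemma of Kim et al.\ rather than spelling out your Markov argument, and its proof of \Cref{lastprop} proceeds by directly bounding the closed-form permuted witness through a permuted-MMD concentration inequality rather than by redoing the covariance-operator analysis for the pooled mixture as you suggest.
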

This asymptotic power guarantee is a desired property for any hypothesis test.
Nonetheless, a much more challenging result to obtain is to characterize a set of alternatives which can be detected with high probability at any fixed sample size.
\begin{theorem}[Non-asymptotic Power]
\label{res:non_asymptotic_power}
Under \Cref{assump1,assump2}, the test defined in \Cref{eq:permutation_test} with level $\alpha$ can achieve power at least $1-\beta$ for any distributions $P$ and $Q$ satisfying
$$
D_{f}(P \| Q) 
\gtrsim
\left(
N^{-\frac{\theta-1}{2(\theta+1)}}
+ \widetilde{N}^{-\frac{1}{2}}
\right)
\sqrt{\ln\!\bigg(\frac{1}{\min\{\alpha,\beta\}}\bigg)}.
$$
\end{theorem}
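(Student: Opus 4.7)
The plan is to control the type II error by $\beta$ via a union bound on two bad events: the statistic being too small, $\{\widehat D_{f,\lambda_{N\!,\theta}}\le L\}$, and the permutation quantile being too large, $\{\widehat q_{1-\alpha}\ge U\}$. Writing $R_{N,\widetilde N}:=N^{-(\theta-1)/(2(\theta+1))}+\widetilde N^{-1/2}$, both $L$ and $U$ will be of order $R_{N,\widetilde N}\sqrt{\ln(1/\min\{\alpha,\beta\})}$, so that the assumed separation $D_f(P\|Q)\gtrsim R_{N,\widetilde N}\sqrt{\ln(1/\min\{\alpha,\beta\})}$ with a sufficiently large implicit constant forces $L\ge U$. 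On the complement of both bad events, $\widehat D_{f,\lambda_{N\!,\theta}}\ge L\ge U\ge \widehat q_{1-\alpha}$, so the test rejects.

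The lower bound on the statistic is immediate from \Cref{res:fdiv_convergence}: applying it with confidence $1-\beta/2$ yields, with probability at least $1-\beta/2$,
$$
\widehat D_{f,\lambda_{N\!,\theta}}\ge D_f(P\|Q)-\mathsf{C}_1' R_{N,\widetilde N}\sqrt{\ln(2/\beta)}=:L.
$$

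For the quantile bound, I would first establish a joint tail bound on the permuted statistic $\widehat D_{f,\lambda_{N\!,\theta}}^\pi$. The permutation analog of \Cref{res:witness_convergence}, stated just after it with constant $\mathsf{C}_0$ and detailed in the appendix, gives $\|\widehat r_{\lambda_{N\!,\theta}}^\pi-1\|_{\mathcal H}\le \mathsf{C}_0 N^{-(\theta-1)/(2(\theta+1))}\sqrt{\ln(1/\eta)}$ over the joint randomness of the data and a uniform random permutation $\pi$ with probability at least $1-\eta$. Plugging this perturbation into \eqref{eq:Df_estimator} and Taylor-expanding $f'$ and $f^*\circ f'$ around $r=1$ (as in the proof of \Cref{res:fdiv_convergence}, noting $f^*(f'(1))=f'(1)-f(1)=f'(1)$ so that the constant contributions to the two empirical averages cancel, and incorporating a Hoeffding-type Monte Carlo term of order $\widetilde N^{-1/2}$) produces $|\widehat D_{f,\lambda_{N\!,\theta}}^\pi|\le \mathsf{C}_2' R_{N,\widetilde N}\sqrt{\ln(1/\eta)}$ with the same joint probability. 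Setting $\eta=\alpha\beta/2$ and applying Markov's inequality to $\mathbb{P}_\pi(\widehat D_{f,\lambda_{N\!,\theta}}^\pi>U\mid\mathrm{data})$, the data-probability that more than an $\alpha$-fraction of permutations exceed $U:=\mathsf{C}_2' R_{N,\widetilde N}\sqrt{\ln(2/(\alpha\beta))}$ is at most $\eta/\alpha=\beta/2$, so $\widehat q_{1-\alpha}\le U$ holds with probability at least $1-\beta/2$.

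A union bound over the two events then gives type II error at most $\beta$, provided $L\ge U$; using $\ln(2/(\alpha\beta))\le 2\ln(2/\min\{\alpha,\beta\})$, this is implied by the stated separation with a sufficiently large hidden constant. The main obstacle is the quantile bound: one must convert a joint tail bound into a data-conditional quantile bound, which the Markov step handles at the cost of an extra $\alpha^{-1}$ inside the logarithm (and explains the $\min\{\alpha,\beta\}^{-1}$ factor in the final rate). A technical subtlety is verifying that the Taylor expansion of $f'$ and $f^*\circ f'$ remains valid under permutation, where the effective likelihood ratio concentrates near $1$ rather than in the range $[c,C]$ of \Cref{assump2}; this is ensured by kernel boundedness and the smallness of $\|\widehat r_{\lambda_{N\!,\theta}}^\pi-1\|_{\mathcal H}$ at the chosen confidence level.
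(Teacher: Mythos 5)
Your proposal is correct and follows essentially the same route as the paper's proof: a union bound combining the concentration of the statistic from \Cref{res:fdiv_convergence} with a tail bound on the permuted statistic obtained from \Cref{lastprop}, the Fenchel--Young identity $f'(1)-f^*(f'(1))=f(1)=0$, and the Lipschitz (resp.\ indicator-plus-bounded-density) arguments for smooth (resp.\ Total-Variation/Hockey-Stick) generators. The only substantive difference is the quantile-conversion step: you use the classical Markov argument with $\eta=\alpha\beta/2$, which controls the exact $(1-\alpha)$-quantile over the full permutation distribution, whereas the paper invokes \citet[Lemma 21]{kim2023differentially} to additionally account for the Monte Carlo error of the empirical quantile computed from $B$ sampled permutations (requiring $B\gtrsim \ln(1/\beta)/\alpha$); your argument would need that extra binomial concentration step to cover the test as actually implemented, but this does not change the rate.
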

We stress that the same power guarantees also hold for the permutation test based on the regularized $\chi^2$-divergence (DrMMD, \citealp{chen2024regularized}), as presented in \Cref{drmmd_power}.
\Cref{res:non_asymptotic_power} characterizes the set of alternatives detectable by our $f$-divergence permutation tests. 
In particular, any distributions $P$ and $Q$ for which the $f$-divergence is greater than some rate decreasing the sample sizes, can correctly be detected with high accuracy.

We note that the two-sample problem is inherently symmetric, while $f$-divergences are not. To address this issue, it is possible to instead work with either the average or the maximum of 
${D}_{f}(P \;\|\; Q)$
and
${D}_{f}(Q \;\|\; P)$.
An important problem is the one of the choice of hyperparameters such as the kernel bandwidth and the regularization parameter $\lambda$. To address this issue, we construct an adaptive permutation test (detailed in \Cref{alg:perm_fuse} in \Cref{sec:adaptive}) which adaptively combines results from multiple hyperparameter configurations \citep{BSG23}.
Finally, building on \citet{SKA+23}, we also construct a test, which we call $f$-Agg, which aggregates these adaptive permutation tests over a collection of $f$-divergences to be powerful against a wide range of alternatives (detailed in \Cref{alg:meta-test} in \Cref{sec:adaptive}). \looseness=-1

\section{Hockey-Stick $f$-Divergence}
\label{sec:statistics}

Motivated by its practical relevance for privacy and unlearning, we focus on the Hockey-Stick divergence with parameter $\gamma$.
It is an $f$-divergence with $f(t)= \max(t-\gamma,0)$, and its variational form is
\begin{align*}
\mathrm{HS}_{\gamma}(P||Q) &\coloneqq \sup_{0 \leq g \leq 1}  \mathbb{E}_{X \sim P}[g(X)] 
- \gamma \mathbb{E}_{X \sim Q}[g(X)].
\end{align*}
The Hockey-Stick witness function is equal to
\begin{equation}
\label{eq:hswitness}
g^\star_{HS}(x)=\mathds{1}\left(\frac{dP}{dQ}(x)\geq \gamma\right)
\end{equation}
As in explained \Cref{sec:consistent-estimation}, by estimating $\frac{dP}{dQ}$ with $\widehat{r}_\lambda$ from \Cref{eq:r_lambda}, we obtain the witness estimator
$
\mathds{1}({\widehat{r}_\lambda(x) \geq \gamma}).
$
Noting that \(f^*(u) = \sup_{t\in\mathbb R} \{ ut - f(t) \}=\gamma u\) for all $u\in[0,1]$, the Hockey-Stick estimator equivalent of \Cref{eq:Df_estimator} is
\begin{equation}
\label{eq:hs_estimator}
\widehat{\mathrm{HS}}_{\gamma,\lambda} =
\frac{1}{\widetilde m}
\sum_{i=1}^{\widetilde m} 
\mathds{1}({\widehat{r}_\lambda(\widetilde{X}_i) \geq \gamma})
- 
\frac{\gamma}{\widetilde n}
\sum_{j=1}^{\widetilde n} 
\mathds{1}({\widehat{r}_\lambda(\widetilde{Y}_j) \geq \gamma}).
\end{equation}

A permutation two-sample test can then be constructed using this Hockey-Stick estimator as explained and analysed in \Cref{sec:testfdiv}. 
See also \Cref{alg:perm_fuse} in \Cref{sec:adaptive} for details on the adaptivity over the kernel and the regularization parameter $\lambda$.

\section{Regularized $f$-divergences}
\label{sec:regularized-f-divergences}
So far we have studied $f$-divergence based tests that leverage a regularized likelihood ratio $\hat{r}$  to compute the divergence via substitution into the $f$-divergence variational definition, i.e., through \Cref{eq:Df_estimator}.  Alternatively, one could consider estimating the regularized $f$-divergence by solving its corresponding regularized variational formulation directly, as follows.

\begin{definition}{(Regularized $f$-divergence.)}
\label{def:reg-f-divergence}
Let $D_f(\cdot || \cdot)$ be an $f$-divergence,  $\cH$ be a RKHS, and $\lambda>0$. For two distributions $P$ and $Q$ over a set $\cX$ we define the regularized $f$-divergence $D^{\cH,\tau}_f(P||Q)$ as
$$
\sup_{g:\cX \to \mathrm{Dom}(f^*)} \expected{P}{g(x)}-\expected{Q}{f^*(g(x))}-\frac{\tau}{2}\|g \|^2_{\cH}.
$$
\end{definition}

While the concept of regularized $f$-divergences  has appeared before, prior work focused on specific divergences, such as the regularized KL \citep{NWJ10, NCT16, arbel2020generalized} or KALE, and regularized-$\chi^2$ or DrMMD \citep{eric2007testing, chen2024regularized}. These previous works focused on specific applications, leveraging these statistics for asymptotic tests \citep{eric2007testing}, and particle descent and gradient flows as measures for optimization and modeling \citep{arbel2020generalized, GAG21, chen2024regularized}, while the focus of the present work is in constructing permutation-based non-asymptotic tests with adaptation over the kernel and regularization parameter. \looseness=-1

 Testing with regularized $f$-divergences is challenging, as it requires specific optimization techniques for each $f$-divergence, based on the objective and constraints in \Cref{def:reg-f-divergence}. For example, the  KALE (regularized KL) statistic formulation leads  to a convex optimization problem that can be solved efficiently, although without a closed-form solution. In contrast, the DrMMD (regularized $\chi^2$) has a closed-form expression that can be derived using tools from functional analysis. We provide their detailed formulations in \Cref{sec:appendix-kale-drmmd} and below present an approach for testing with a regularized hockey-stick. 

\paragraph{Regularized hockey-stick divergence} 

The regularized Hockey-Stick divergence is defined by
\begin{equation}
\label{eq:hstau}
\begin{aligned}
    \mathrm{HS}_{\gamma}^{\tau}(P||Q) &:= \sup_{g \in \mathcal{H}, \, 0 \leq g \leq 1} \big( \mathbb{E}_{X \sim P}[g(X)] .\\
&\qquad \quad  - \gamma \mathbb{E}_{X \sim Q}[g(X)] - \tau \|g\|_{\mathcal{H}}^2 \big),
\end{aligned}
\end{equation}
where $\tau > 0$ controls the regularization strength.
We emphasize that this $\tau$-regularization of the $f$-divergence itself is different from the $\lambda$-regularization of the $\frac{dP}{dQ}$ estimator used in \Cref{eq:Df_estimator}.

In \Cref{sec:alternatives-hs}, we provide a theoretical framework for solving the optimization problem in \Cref{eq:hstau} using the plug-in estimator and techniques from optimization with non-negative constraints. The problem turns into a semidefinite program, and can be efficiently solved using accelerated proximal splitting methods like FISTA \citep{beck2009fast}. 

In practice, we do not use this approach, since the RKHS function class employed in \Cref{eq:hstau} is not suited to representing discontinuous functions  of the form of \Cref{eq:hswitness}, and empirical performance in testing is consequently poor (see \Cref{fig:witness-direct} in the appendix for an illustration). We nonetheless present this direct approximation as it will be of interest to revisit with more suitable function classes in future work. 
\section{Experiments}
\label{sec:experiments}

We begin by evaluating the performance of the tests proposed in \Cref{sec:testfdiv} on two synthetic benchmarks, 
illustrating that different alternatives are more effectively captured by different $f$-divergence tests.
We then test on practical applications from differential privacy auditing and unlearning evaluation. All our code is publicly available at \url{https://github.com/google-research/google-research/tree/master/f_divergence_tests}

\paragraph{Experimental details.} 
For our two-sample $f$-divergence tests, we focus on the Hockey-Stick test presented in \Cref{sec:statistics}. 
In \Cref{sec:test-all-f-divergences}, we also present results for various other $f$-divergences and regularized $f$-divergences. However, in the main body we restrict ourselves to the ones with highest performance. 

We use the state-of-the-art Maximum Mean Discrepancy (MMD, \citealp{GBRS+12}, see \Cref{subsec:mmd}) as our primary baseline. 
We also compare against two novel permutation tests based on regularized $f$-divergence statistics: DrMMD (regularized $\chi^2$, \citealp{chen2024regularized}) and KALE (regularized KL, \citealp{arbel2020generalized,GAG21, grosso2025multiple}), detailed in \Cref{sec:appendix-kale-drmmd}.
Additionally, we include $f$-Agg, a test which leverages the aggregation method of \cite{SKA+23} to perform multiple testing across all divergences considered (\Cref{alg:meta-test}). For privacy experiments, we also report the power of the Hockey-Stick tester from the DP-Auditorium library \citep{KMRS24}.

All experiments use 500 samples from each distribution unless otherwise noted, with the exception of the Expo-1D experiment, which uses 1000 samples.
All tests are performed at significance level $\alpha=0.05$. For each test, we report the empirical power calculated over 100 repetitions, along with Clopper-Pearson confidence intervals. All experiments were run on NVIDIA A100 GPUs.

\subsection{Perturbed Uniforms} \label{sec:experiments-perturbed}This benchmark was introduced by \cite{SKA+23}. We test power over detection of perturbations on one and two dimensional uniform distributions, varying the amplitude of the perturbation from $a=0$ (no perturbation, corresponding to the null $P=Q$) to $a=1$ (maximum value of the perturbation for the density to remain non-negative). Consistent with previous experiments, we confirm in \Cref{fig:perturbed-uniform} that MMD is optimal at detecting these smooth differences. This is unsurprising as MMD is proved to be minimax optimal in this setting \citep{SKA+23}. Nonetheless, even in this setting optimized for MMD, we observe that the $f$-divergence tests perform well with power relatively close to MMD. 
We note that the adaptive $f$-Agg test achieves the highest power after the MMD test, showing the benefit of aggregating over multiple statistics. 
\begin{figure}
    \centering

    \includegraphics[width=0.7\linewidth]{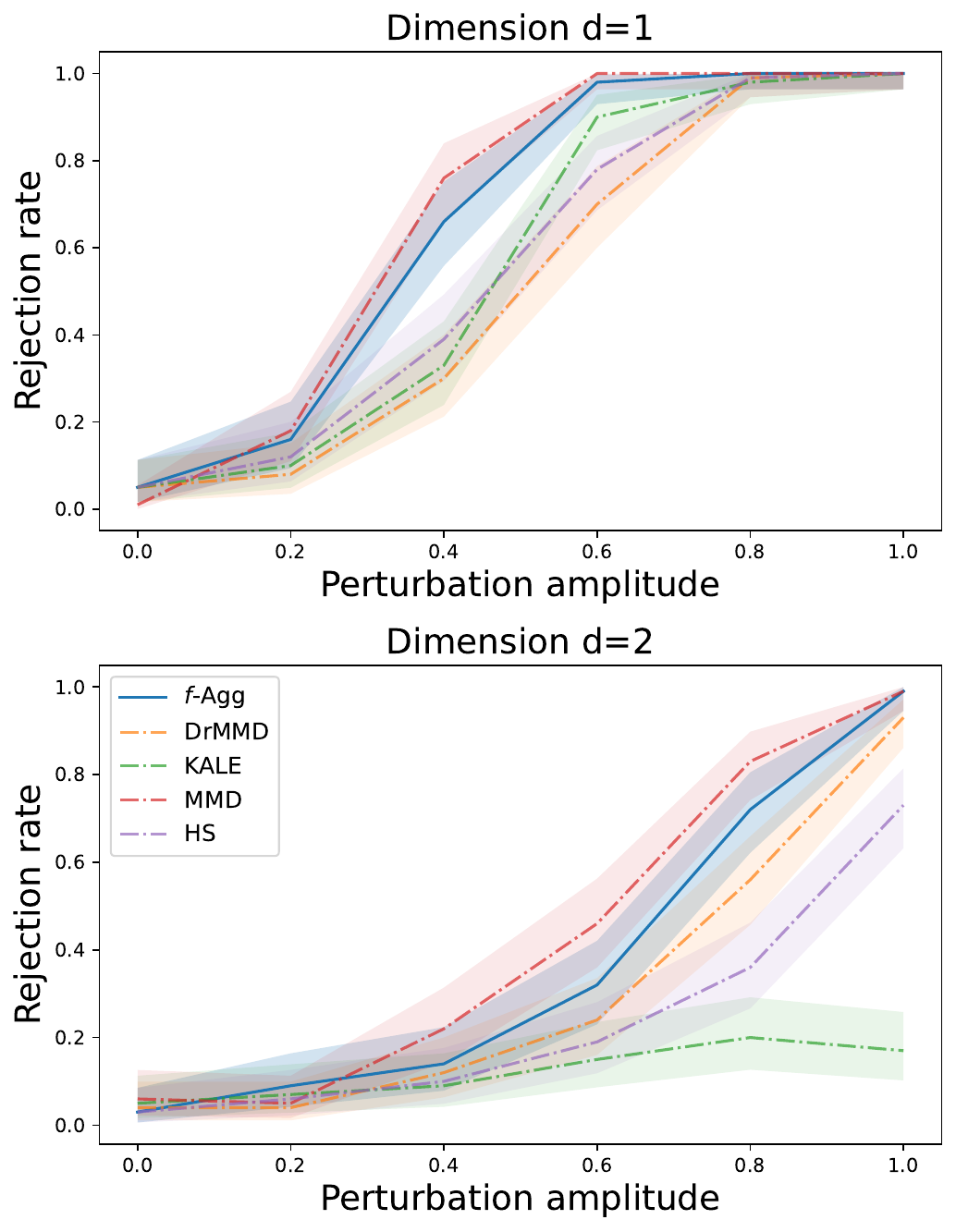}
    \caption{Performance on perturbed $d$-dimensional uniform alternatives with varying perturbation amplitudes. As the amplitude increases, the deviation from uniformity grows, leading to higher statistical power. Both MMD and DrMMD perform well in this setting, and the performance decreases for all methods in two dimensions. The $f$-Agg test maintains almost the same performance as the best method.
}
    \label{fig:perturbed-uniform}
\end{figure}
\subsection{Outlier Detection on Physics Datasets}
\label{sec:experiments-expo1d}
New physics address limitations within the current standard model of particle physics. In this context, two-sample tests are often used to identify anomalies or  validate new models, among other tasks \citep{grosso2025multiple}. 
We consider the Expo-1D benchmark, first introduced in \cite{2019learningNewPhysics}, and then considered in various works \citep{letizia2022learning,grosso2025multiple,grosso2024goodness,grosso2025sparse}. 
In this dataset, there is a reference model given by an energy spectrum that decays exponentially, described by the unnormalized density ${n_0(x) = 2000e^{-x}}$. The alternative hypothesis is given by the density $n_0(x)+k\frac{1}{\sigma\sqrt{2\pi}}\exp\left({-\frac{x-\mu^2}{2\sigma^2}}\right)$. 
In our experiments of \Cref{fig:expo1d}, we vary the values of the multiplier $k$ while fixing $\mu=4$ and $\sigma=0.01$. 
For convenience, we plot the perturbed densities in \Cref{fig:explo1d-densities} for $k\in\{0,20,40,60, 80, 100\}$.

\begin{figure}
    \centering
    \includegraphics[width=0.75\linewidth]{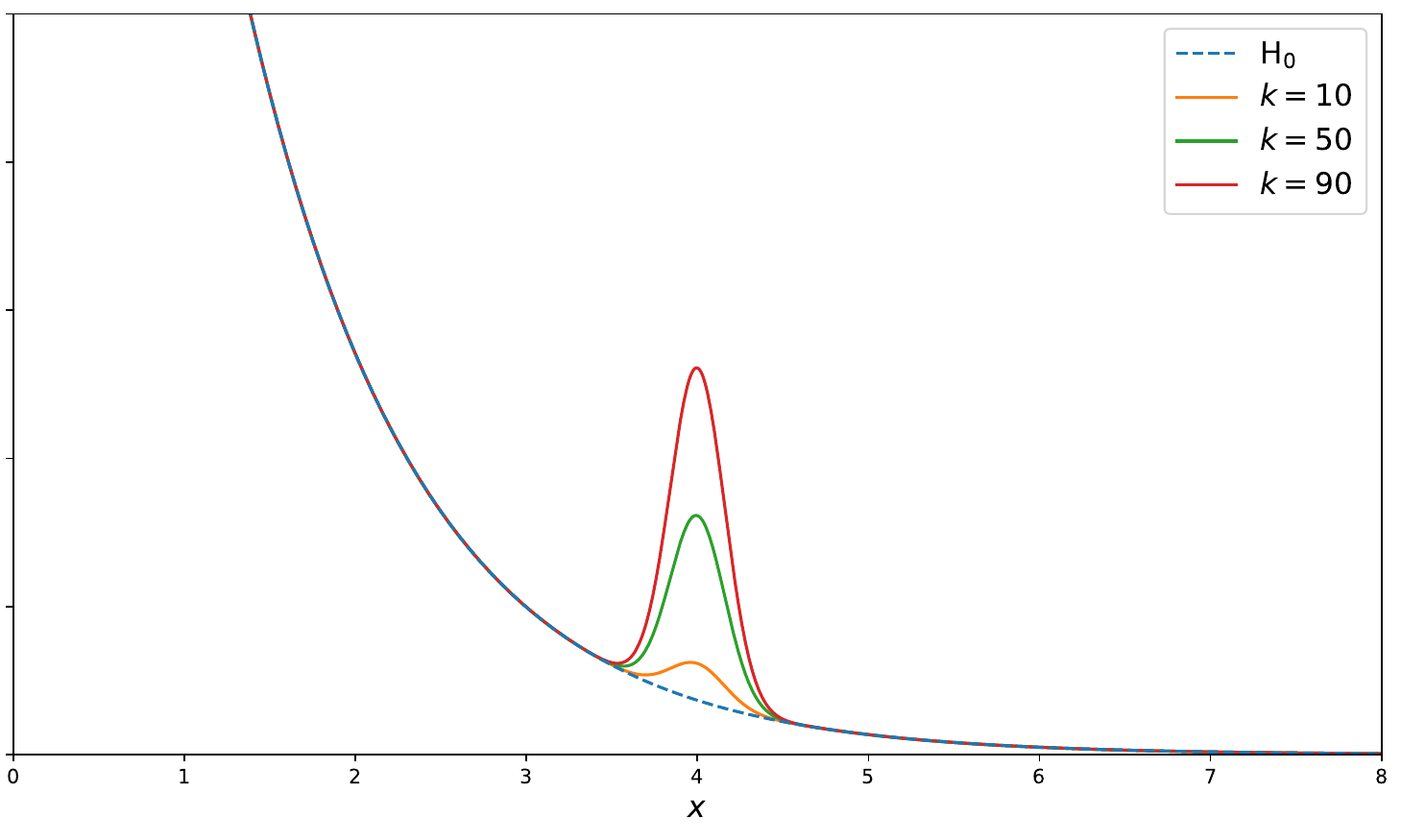}
    \caption{Expo-1D null ($H_0$) and alternative hypothesis density functions varying the multiplier $k$.}
    \label{fig:explo1d-densities}
\end{figure}
\begin{figure}
    \centering
    \includegraphics[width=0.75\linewidth]{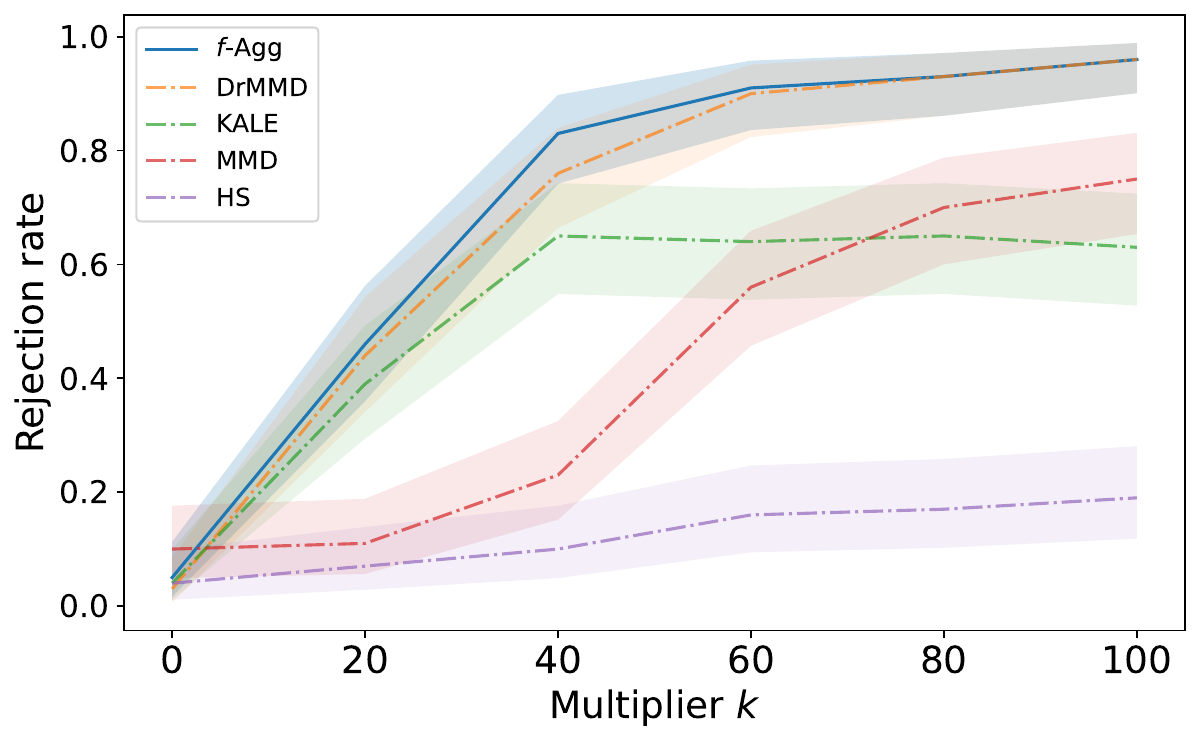}
    \caption{ Detection rate comparison on the Expo-1D task, varying the multiplier $k$ of the perturbation. The DrMMD test achieves the highest power across all values. 
}
    \label{fig:expo1d}
\end{figure}

In this setting, \Cref{fig:expo1d} shows that, excluding the adaptive $f$-Agg test, the DrMMD test achieves the highest statistical power across all values of the multiplier $k$. In contrast, MMD exhibits low power for small $k$, indicating that deregularization makes the resulting statistic more sensitive to this type of departure from the null, compared to the baseline MMD.
 The KALE estimator is also effective at small $k$, but its power stagnates as $k$ increases to greater values than 40. This can occur because the slow growth of the log-likelihood ratio as a function of $k$ poses a greater challenge to log-ratio estimators like KALE than to direct-ratio estimators like DrMMD. We illustrate this phenomenon in \Cref{app:details-expo1d}. \looseness=-1

The superior performance of DrMMD, which is equivalent to MMD with a deregularized kernel, shows that MMD's power can be significantly boosted with a sufficiently expressive kernel. 
Our results show that DrMMD achieves a power increasing with $k$, up to 0.96 in two dimensions, using only 1000 samples from each distribution. This represents a major improvement in both statistical power and sample efficiency.

We  observe that the HS-sigmoid method outperforms the hard-threshold version HS. Still, both Hockey-Stick instantiations are suboptimal for this set of alternatives.
For generic two-sample testing, we do not recommend using the Hockey-Stick divergence over other $f$-divergences.
However, there are settings, such as auditing differential privacy auditing (\Cref{subsec:audit}) and machine unlearning \citep{sekhari2021remember}, where the use of the Hockey-Stick divergence is necessarily required.

\subsection{Auditing Pure Differential Privacy}
\label{subsec:audit}

Differential privacy \citep{dwork2006calibrating} provides a rigorous framework for protecting user data by introducing calibrated noise to query responses, bounding the influence of any single individual on a query response. 
Formally, a randomized mechanism $M$ with outputs in $\mathbb{R}^p$ is $\varepsilon$-differentially private ($\varepsilon$-DP) if and only if, for any  $S \subseteq \mathbb{R}^p$ and any pair of datasets $D$ and $D'$ differing in one record, the probability of obtaining any output in $S$ is nearly the same, that is 
$$P(M(D) \in S) \leq e^{\varepsilon}P(M(D') \in S). $$
Privacy guarantees rely on the correct implementation of the  mechanism $M$. We can audit these guarantees by framing the problem as a two-sample hypothesis test, based on Lemma 5 from \cite{zhu2022optimal}. Specifically, for any correctly implemented $\varepsilon$-DP random mechanism $M$, the Hockey-Stick divergence (with parameter $e^\varepsilon$) between the outputs from two adjacent datasets $D$ and $D'$ must be zero, that is 
$$\mathrm{HS}_{e^\varepsilon\!}(M(D)||M(D')) = \mathrm{HS}_{e^\varepsilon\!}(M(D')||M(D))= 0.$$ 
 
It  follows directly that if our proposed test (\Cref{alg:perm_fuse}, \Cref{sec:testfdiv,sec:statistics}) rejects the null, $\mathrm{HS}_{e^\varepsilon\!}(M(D)||M(D'))=0$, then the mechanism $M$ cannot be private for the given $\varepsilon$. 
This method builds upon methodologies for auditing privacy mechanisms \citep{gilbert2018property, bichsel2021dp, KMRS24}. \looseness=-1

We focus on testing six non-private mechanisms previously evaluated in the literature.  The first four mechanisms, SVT3--SVT6, are incorrect instances of the sparse-vector-technique \citep{LSL17}. The methods mean-1 and mean-2 are incorrect implementations of the Laplace mechanism for the average. As in previous work, we fix $D$ and $D'$ for each mechanism. Details on the audited mechanisms and the baselines can be found in \Cref{app:details-privacy}. 
The results are reported in \Cref{tab:dp-results}. \looseness=-1

Our \Cref{alg:perm_fuse} achieves higher rejection rates for privacy violations while requiring significantly fewer samples than the baseline Hockey-Stick tester from DP-Auditorium, improving the overall sample efficiency of the auditing process. 

Achieving a rejection rate greater than level values for the SVT3 mechanism is noteworthy. While previous techniques detected some privacy violations for SVT3, they required millions of samples to approximate the likelihood ratio \citep{bichsel2021dp}. 

\begin{table}
\tiny
    \centering
    \caption{Privacy violations detection rate for several non-private mechanisms.  HS improves detection rate over DP-auditorium which requires hundreds of thousand of samples to detect some of the mechanisms below.}
    \label{tab:dp-results}
\smallskip
\resizebox{\columnwidth}{!}{
\begin{tabular}{llrrrrrr}
\toprule
 & Test & SVT3 & SVT4 & SVT5 & SVT6 & mean1 & mean2 \\
\midrule
\multirow{2}{*}{$n=500$}
 & HS & 0.095 & 0.070 & 0.785 & 0.085 & 0.970 & 1.0 \\
 & DP-Aud & 0.0 & 0.0 & 0.0 & 0.0 & 0.768 & 0.071 \\
\midrule
\multirow{2}{*}{$n=5000$}
 & HS & 0.175 & 0.090 & 0.995 & 0.165 & 1.0 & 1.0 \\
 & DP-Aud & 0.0 & 0.0 & 0.010 & 0.0 & 1.0 & 0.717 \\
\bottomrule
\vspace{-0.5cm}
\end{tabular}
}
\end{table}

\subsection{Evaluating Machine Unlearning}
\label{subsec:unlearning}
The goal of unlearning is to remove the influence of a specific subset of data $D_f \subseteq D$ often called the forget set, from a trained model $M$. A widely used definition \citep{sekhari2021remember} considers unlearning successful if the resulting model, $M_u$, is statistically indistinguishable  from a model $M_r$ that was retrained from scratch on the remaining data $D\setminus D_f$. This common definition of unlearning quantifies indistinguishability using the Hockey-Stick divergence with parameter $e^{\varepsilon}$. The parameter $\varepsilon$ controls the degree of statistical indistinguishability. Larger $\varepsilon$ allows for greater divergence between the two model distributions, while $\varepsilon=0$ enforces exact unlearning and requires the distributions to be identical.

Evaluating this definition can be framed as a two-sample hypothesis test where the null hypothesis is that the Hockey-Stick divergence of order $e^{\varepsilon}$ between unlearned and retrained models is zero, $\mathrm{HS}_{e^\varepsilon\!}(M_u||M_r)=\mathrm{HS}_{e^\varepsilon\!}(M_r||M_u)=0$, where the randomness comes from the training procedure (e.g., batch-sampling, dropout, etc.). \looseness=-1

This strict definition has the following flaw. Different training procedures (e.g., number of training steps) mean that even two models $M_{r}$ and $M_{r'}$ retrained from scratch on the exact same data will likely produce slightly different parameter distributions. A sensitive two-sample test will correctly detect this difference between $M_r$ and $M_{r'}$, and reject the null hypothesis $\mathrm{HS}_{e^\varepsilon\!}(M_{r'}||M_r)=\mathrm{HS}_{e^\varepsilon\!}(M_r||M_{r'})=0$, even though the unlearning process of retraining from scratch with different parameters is entirely valid.

To confirm that this issue arises in practice,  we trained a CNN on the CIFAR-10  dataset \cite{krizhevsky2009learning} and then applied several simplified unlearning techniques to forget a set of ten randomly selected images. For details on training procedures and unlearning algorithms, see \Cref{app:details-unlearning}.

This flaw is demonstrated in \Cref{tab:unlearning-two-sample-test-exact}. We tested several unlearning methods against a retrained model at levels $\varepsilon=0$ and $\varepsilon=0.1$. 
For $\varepsilon=0.1$, we can only test with Hockey-Stick statistic to quantify a positive separation. 
For $\varepsilon=0$, or exact unlearning, we test with all the statistics. 

For  $\varepsilon=0$ (\Cref{tab:unlearning-two-sample-test-exact}),  all approximate unlearning methods produce a different distribution than the retrained one. 
However, even for $\varepsilon>0$ the test detects the difference between the retrained model and models retrained from scratch on retain data using a different batch size and half as many steps (i.e., `retrain-batch' and `retrain-less').
This shows that the two-sample test, while statistically correct, is not practically useful for verifying that a model does not hold information about specific forget examples. \looseness=-1

We emphasize that this flaw does not represent an issue with our tests which correctly distinguish the difference in distributions of $M_u$ and $M_r$.
Instead, it highlights that the null hypothesis that corresponds to this unlearning definition, $H_0\colon \mathrm{HS}_{e^{\varepsilon\!}}(M_u||M_r)=\mathrm{HS}_{e^{\varepsilon\!}}(M_r||M_u)=0$, is not necessarily equivalent to the null hypothesis that the unlearning process is `successful', as one may intuitively reason about it.
A potentially better-suited null hypothesis to characterize successful unlearning could be the composite goodness-of-fit null hypothesis (e.g., \citealp{key2025composite}) that $M_u$ is identically distributed as some element of the set $\mathcal{M}_r$ consisting of all models retrained from scratch on $D\setminus D_f$ with arbitrary training procedures. 
However, constructing a practical test for this composite null is challenging and remains an open problem. 
In the next section, we instead propose a different approach to fixing this flaw based on three-sample hypothesis testing.
\looseness=-1

We note that the KALE-based method performs poorly on this task, potentially for two reasons. First, the high dimensionality of the sample space ($d=10$) may impact the optimization required to calculate the KALE statistic. This claim is supported by a similar performance drop observed in the two-dimensional perturbed uniforms experiment in \Cref{fig:perturbed-uniform}. Second, the log-likelihood ratio between the distributions may be ill-conditioned. Characterizing the set of alternative distributions for which KALE is optimal is an important direction for future work.

\begin{table}
\tiny
\centering
\caption{Two-sample test power results against the null hypothesis $H_0\colon \mathrm{HS}_{e^{\varepsilon\!}}(M_u||M_r)=\mathrm{HS}_{e^{\varepsilon\!}}(M_r||M_u)=0$, with $\varepsilon=0$ and $\varepsilon=0.1$. A rejection rate of 1.0 means the test detects a distributional difference. 
}
\label{tab:unlearning-two-sample-test-exact}
\smallskip
\begin{tabular}{lcccc|c}
\toprule
Unlearning& DrMMD & KALE & MMD & HS & HS \\
mechanism & \multicolumn{4}{c|}{\textbf{$\varepsilon=0$}}  & $\varepsilon=0.1$ \\
\midrule
Finetune & 1.0 & 0.2 & 1.0 & 1.0 & 1.0 \\
Pruning & 1.0 & 0.0 & 1.0 & 1.0 & 0.1 \\
SSD & 1.0 & 0.0 & 1.0 & 1.0 & 0.1 \\
Random label & 1.0 & 0.0 & 1.0 & 1.0 & 0.0 \\
\midrule
Retrained-less & 1.0 & 0.5 & 1.0 & 1.0 & 1.0 \\
Retrained-batch & 1.0 & 0.0 & 1.0 & 1.0 & 0.4 \\
\bottomrule
\vspace{-0.5cm}
\end{tabular}
\end{table}

\paragraph{Three-sample tests.}
To address the highlighted issue, we propose instead to use a {\em  three-sample test}. Instead of checking whether the unlearned model is indistinguishable to a retrained one, this procedure reframes the problem as comparing three distributions: the original model, the unlearned model, and the retrained model. The null hypothesis becomes $H_0: $ Unlearning was effective, measured by closeness of the unlearned model distribution to the retrained model distribution, relative to the distance to the original model distribution. Rejecting the null provides evidence that unlearning was ineffective.

A three-sample test was proposed by \cite{bounliphone2015test} for the MMD. This test relies on the joint asymptotic distribution of {\em two} statistics, $d(M_u, M)$ and $d(M_u, M_r)$, which respectively measure the distance from the unlearned model $M_u$ to the retrained model $M_r$, and from $M_u$ to the original model $M$.   In the event that $M_u\neq M_r\neq M$, the joint distribution of the two quantities $\mathrm{\mathrm{MMD}}(M_u, M)$ and $\mathrm{\mathrm{MMD}}(M_u, M_r)$ is asymptotically Gaussian, yielding a straightforward test with 
$
H_0: 
\mathrm{\mathrm{MMD}}(M_u, M_r)
\leq 
\mathrm{\mathrm{MMD}}(M_u, M) 
$. Details can be found in \Cref{app:details-unlearning}. 

We therefore employ this three-sample MMD test to measure the success of unlearning algorithms.  While it will be of interest to generalize the two-sample $f$-divergence tests to the three-sample setting, this requires establishing asymptotics of the variational $f$-divergence form in \Cref{eq:variational_f_divergence} for divergences of interest (such as the KL and $\chi^2$), which  will be an interesting direction for future work.

The results of the three-sample MMD test are shown in \Cref{tab:three-sample-test}. The test correctly and consistently identifies that the retrained models are ``safe'' (rejection rate = 0.0),  overcoming the flaw of the two-sample approach. Among the approximate methods, only the Random Label technique passed the evaluation, while others like Finetuning, Pruning, and SSD were found to be ineffective (rejection rate = 1.0). We emphasize that our goal is the evaluation of unlearning methods, and not designing good algorithms for unlearning. Consequently, we used simplistic implementations of the unlearning procedures and a more rigorous implementation is needed for ranking the unlearning methods in practice. 
\looseness=-1

\begin{table}
\centering
\caption{Three-sample test. Rejection rate for the relative similarity test. 
A rejection rate of 1.0 means the test detects that the unlearned model is closer to the original model than to the retrained model, a proxy for evidence that the unlearning method is ineffective. 
The test correctly identifies retraining from scratch on retain data as a safe procedure (rejection rate = 0.0). }
\label{tab:three-sample-test}
\smallskip
\tiny
\begin{tabular}{lr}
\toprule
 Method & {Power of relative test} \\
\midrule
Finetune & 1.0 \\
Pruning & 1.0  \\
SSD & 1.0  \\
Random-label & 0.0  \\
\midrule
Retrained-less & 0.0 \\
Retrained-batch & 0.0  \\
\bottomrule
\vspace{-0.5cm}
\end{tabular}
\end{table}

\section{Conclusion}
We introduce a general framework for estimating $f$-divergences, and an accompanying two-sample permutation test. We establish finite sample bounds for the statistics and consistency of the tests, and characterize their sets of detectable alternatives. 
We illustrate the applicability of the Hockey-Stick divergence for differential privacy and unlearning problems. In experiments, we show that a  deregularized MMD (DrMMD) approximation to the $\chi^2$ divergence, and the KALE approximation to the KL divergence, both outperform classical MMD tests in certain domains, notably outlier detection. Finally, we demonstrate a limitation of two-sample tests in assessing unlearning algorithms, and show that this can be overcome with a three-sample test. Theoretically grounding these observations and characterizing optimal estimators for other divergences are interesting directions for future work.

\subsubsection*{Acknowledgments}

Antonin Schrab is supported by a UKRI Turing AI World-Leading Researcher Fellowship on Advancing Modern Data-Driven Robust Al with grant number G111021.
We thank Eleni Triantafillou for helpful guidance in the unlearning experiment.
We thank Pierre Glaser and Zonghao Chen for their help to understand details of their relevant respective works, \citet{GAG21} and \citet{chen2024regularized}.

\bibliographystyle{apalike}
\bibliography{references}

%%%%%%%%%%%%%%%%%%%%%%%%%%%%%%%%%%%%%%%%%%%%%%%%%%%%%%%%%%%%
\section*{Checklist}

% %%% BEGIN INSTRUCTIONS %%%
% The checklist follows the references. For each question, choose your answer from the three possible options: Yes, No, Not Applicable.  You are encouraged to include a justification to your answer, either by referencing the appropriate section of your paper or providing a brief inline description (1-2 sentences). 
% Please do not modify the questions.  Note that the Checklist section does not count towards the page limit. Not including the checklist in the first submission won't result in desk rejection, although in such case we will ask you to upload it during the author response period and include it in camera ready (if accepted).

% \textbf{In your paper, please delete this instructions block and only keep the Checklist section heading above along with the questions/answers below.}
% %%% END INSTRUCTIONS %%%

\begin{enumerate}

  \item For all models and algorithms presented, check if you include:
  \begin{enumerate}
    \item A clear description of the mathematical setting, assumptions, algorithm, and/or model. [Yes] All theorems include assumptions, and algorithms and models clear descriptions.
    \item An analysis of the properties and complexity (time, space, sample size) of any algorithm. [Yes] We include them in the supplementary.
    \item (Optional) Anonymized source code, with specification of all dependencies, including external libraries. [Yes/No/Not Applicable]
    [Yes] We include them with our final version.
  \end{enumerate}

  \item For any theoretical claim, check if you include:
  \begin{enumerate}
    \item Statements of the full set of assumptions of all theoretical results. [Yes] We include them in all theorem statements.
    \item Complete proofs of all theoretical results. [Yes] We include them in the supplementary.
    \item Clear explanations of any assumptions. [Yes] We include them when necessary. 
  \end{enumerate}

  \item For all figures and tables that present empirical results, check if you include:
  \begin{enumerate}
    \item The code, data, and instructions needed to reproduce the main experimental results (either in the supplemental material or as a URL). [Yes] We will include the url pointer to our code with the final version.
    \item All the training details (e.g., data splits, hyperparameters, how they were chosen). [Yes] We include them in the supplementary.
    \item A clear definition of the specific measure or statistics and error bars (e.g., with respect to the random seed after running experiments multiple times). [Yes] We include them in experiments section.
    \item A description of the computing infrastructure used. (e.g., type of GPUs, internal cluster, or cloud provider). [Yes] We include them in the experiments section.
  \end{enumerate}

  \item If you are using existing assets (e.g., code, data, models) or curating/releasing new assets, check if you include:
  \begin{enumerate}
    \item Citations of the creator If your work uses existing assets. [Yes] We include all necessary citations.
    \item The license information of the assets, if applicable. [Yes] We include them in the supplementary.
    \item New assets either in the supplemental material or as a URL, if applicable. [Not Applicable]
    \item Information about consent from data providers/curators. [Not Applicable]
    \item Discussion of sensible content if applicable, e.g., personally identifiable information or offensive content. [Not Applicable]
  \end{enumerate}

  \item If you used crowdsourcing or conducted research with human subjects, check if you include:
  \begin{enumerate}
    \item The full text of instructions given to participants and screenshots. [Not Applicable]
    \item Descriptions of potential participant risks, with links to Institutional Review Board (IRB) approvals if applicable. [Not Applicable]
    \item The estimated hourly wage paid to participants and the total amount spent on participant compensation. [Not Applicable]
  \end{enumerate}

\end{enumerate}

\clearpage
\startcontents[appendices]

\appendix
\thispagestyle{empty}

% Supplementary material: To improve readability, you must use a single-column format for the supplementary material.
\onecolumn
\aistatstitle{Regularized $f$-Divergence Kernel Tests \\ Supplementary Material}
\printcontents[appendices]{l}{1}{\setcounter{tocdepth}{2}}
\clearpage

We give an overview of the supplementary material.
In \Cref{sec:background}, we provide additional background information on common $f$-divergences, on MMD, on permutation tests, on adaptive testing, on KALE, and on DrMMD.
In \Cref{sec:experimental-design}, we run additional experiments.
In \Cref{sec:exp_details}, we provide additional detials on the experimental settings.
In \Cref{sec:alternatives-hs}, we derive a direct optimization procedure for the computation of the Hockey-Stick witness function.
In \Cref{sec:proofs}, we formally prove all the statements presented in \Cref{sec:statistics,sec:testfdiv}.
In \Cref{drmmd_power}, we prove that the DrMMD permutation tests satisfies the same theoretical power guarantees as our proposed tests.

\section{Background}
\label{sec:background}

In this section, we provide background information 
on common $f$-divergences (\Cref{sec:table}), 
on kernel mean embeddings and covariance operators (\Cref{subsec:kernel}), 
on MMD (\Cref{subsec:mmd}), 
on permutation tests (\Cref{subsec:permutation}), 
on adaptive testing (\Cref{sec:adaptive}), 
on implementation details (\Cref{sec:fuse-details}),
and on KALE and DrMMD (\Cref{sec:appendix-kale-drmmd}).

\subsection{Table of $f$-Divergences}
\label{sec:table}

We provide a table of commonly-used $f$-divergences in \Cref{tab:fdiv} with the associated function $f$, the convex conjugate $f^*$, and the witness function $g^\star$.

\begin{table}[h]
\centering
\small
\renewcommand{\arraystretch}{1.35}
\caption{\(f\)-divergences: generator \(f(t)\), convex conjugate \(f^*(u)\), and derivative \(f'(r)\). Recall that the optimal witness \(g^\star(x)=f'(r)\) with $r=\tfrac{dP}{dQ}(x)$. 
We use the Lambert $W$ function.
}
\smallskip
\begin{tabular}{|l|c|c|c|}
\hline
\textbf{$f$-Divergence} & \(\;f(t)\;\) & \(\;f^*(u)\;\) & \(\;f'(r)\;\) \\
\hline
KL (Kullback--Leibler) & \(t\log t\) & \(e^{u-1}\) & \(1+\log r\) \\
\hline
Reverse KL & \(-\log t\) & \(-1-\log(-u),\; u<0\) & \(-\tfrac{1}{r}\) \\
\hline
Jeffreys (symmetrized KL) 
& \((t-1)\log t\) 
& \(W(e^{u-1})+\tfrac{1}{W(e^{u-1})}+u-2\)
& \(1+\log r-\tfrac{1}{r}\) \\
\hline
Jensen--Shannon & \(t\log t-(t+1)\log\frac{t+1}{2}\) & \(-\log(2-e^u), u<\ln(2)\) & \(\log\frac{2r}{r+1}\) \\
\hline
Total Variation & \(\tfrac{1}{2}|t-1|\) & \(u,\;|u|< {1}/{2}\) & \(\mathrm{sign}(r-1)/2\) \\
\hline
Pearson \(\chi^2\) & \((t-1)^2\) & \(u+{u^2}/{4}\) & \(2(r-1)\) \\
\hline
Neyman \(\chi^2\) & \(\tfrac{(1-t)^2}{t}\) & \(2\big(1-\sqrt{1-u}\big),\; u<1\) & \(1-\tfrac{1}{r^2}\) \\
\hline
Vincze--LeCam & \(\tfrac{(t-1)^2}{t+1}\) & \(4-u-4\sqrt{1-u},\; |u|< 1\) & \(1-\tfrac{4}{(r+1)^2}\) \\
\hline
Squared Hellinger & \((\sqrt{t}-1)^2\) & \(\tfrac{u}{1-u},\; u<1\) & \(1-\tfrac{1}{\sqrt{r}}\) \\
\hline
Hellinger Discrimination & \(1-\sqrt{t}\) & \(-1-\tfrac{1}{4u},\; u<0\) & \(-\tfrac{1}{2\sqrt{r}}\) \\
\hline
\(\alpha\)-Divergence $(\alpha)$ & \(\tfrac{4}{1-\alpha^2}\big(1-t^{(1+\alpha)/2}\big)\) 
& \(\tfrac{2}{1+\alpha} \big( \tfrac{\alpha-1}{2} u \big)^{{(1+\alpha)}/{(\alpha-1)}} - \frac{4}{1-\alpha^2}\)
& \(-\tfrac{2}{1-\alpha}r^{(\alpha-1)/2}\) \\
\hline
Cressie--Read (\(\lambda\)) & \(\tfrac{t^\lambda-\lambda(t-1)-1}{\lambda(\lambda-1)}\) 
& \(
\big(
(u(\lambda-1)+1)^{\lambda/(\lambda-1)}-1
\big)/\lambda
\) 
& \(\tfrac{r^{\lambda-1}-1}{\lambda-1}\) \\
\hline
Hockey-Stick (\(\gamma\)) & $\max\{t-\gamma,0\}$  & $\gamma u , \;   0 < u < 1 $  & $\mathds{1}\!\left(r \geq \gamma\right) $  \\
\hline
\end{tabular}
\label{tab:fdiv}
\end{table}

\subsection{Kernel Mean Embedding and Covariance Operator}
\label{subsec:kernel}
Recall that in an RKHS $\mathcal H$ with kernel $k$, the kernel reproducing property states that $\langle f, k(x,\cdot)\rangle = f(x)$ for all $x$.
Given a distribution $P$, the kernel mean embedding is an element $\mu_P \in \mathcal H$ such that $\langle f, \mu_P \rangle = \mathbb E_{X\sim P}[f(X)]$ for all $f\in \mathcal H$. 
It can be expressed as $\mu_P = \mathbb E_{P}[k(X,\cdot)]$ so that 
$$\langle f, \mathbb E_{P}[k(X,\cdot)] \rangle =\langle f, \mu_P \rangle =  \mathbb E_{X\sim P}[f(X)] = \mathbb E_{X\sim P}[\langle f, k(X,\cdot)\rangle]$$ 
for all $f\in \mathcal H$.
The (uncentered) covariance operator $\Sigma_{P}: \mathcal H \to \mathcal H$ is defined as the operator satisfying $\langle f, \Sigma_{P} g \rangle = \mathbb E_{X\sim P}[f(X)g(X)]$ for all $f,g \in \mathcal H$. 
It can be expressed as $\Sigma_{P} = \mathbb E_{P}[k(X,\cdot) \otimes k(X,\cdot)]$, where $\otimes$ is the tensor product operator, so that
\begin{align*}
\langle f, \mathbb E_{P}[k(X,\cdot) \otimes k(X,\cdot)] g \rangle 
&= \langle f, \Sigma_{P} g \rangle \\
&= \mathbb E_{X\sim P}[f(X)g(X)] \\
&= \mathbb E_{X\sim P}[\langle f, k(X,\cdot)\rangle \langle k(X,\cdot), g\rangle]\\ 
&= \mathbb E_{X\sim P}[\langle f\otimes g, k(X,\cdot)\otimes k(X,\cdot)\rangle_{\mathrm{HS}}]\\
&= \mathbb E_{X\sim P}[\langle f, \big(k(X,\cdot)\otimes k(X,\cdot)\big)g\rangle]
\end{align*}
where $\langle \cdot, \cdot \rangle_{\mathrm{HS}}$ is the Hilbert-Schmidt inner product.

\subsection{Maximum Mean Discrepancy}
\label{subsec:mmd}
A popular approach for two-sample tests involves leveraging kernel-based metrics to compare distributions. One such metric is the Maximum Mean Discrepancy (MMD), introduced in  \cite{GBRS+12} to perform a kernel based two-sample test. 
MMD measures the distance between the mean embeddings of two distributions in the feature space induced by a kernel. 

\begin{definition}
\label{def:mmd}
Let $\cH_k$ be a reproducing kernel Hilbert space (RKHS) with corresponding kernel $k(\cdot, \cdot)$. This is, $f(x) = \langle f, k(x, \cdot) \rangle_{\cH_k}$
Then, the MMD between two distributions $P$ and $Q$ is given by
\begin{equation}
    {\mathrm{MMD}}_k(P, Q):= \sup_{f\in \cH_k: \| f\|_{\cH_k}\leq 1} \expected{X \sim P}{f(X)} - \expected{Y \sim Q}{f(Y)}
\end{equation}
\end{definition}

Given a sample $\bZ=(X_1, ..., X_n, Y_1, ...Y_m)$, the minimum variance unbiased estimate for the $\mathrm{MMD}^2_k$ is given by
\begin{align}
    \widehat{\MMD}^2_k  = \frac{1}{n(n-1)} \sum_{i=1}^n \sum_{i'=1, i' \neq i}^n k(X_i, X_{i'})
     + \frac{1}{m(m-1)} \sum_{j=1}^m \sum_{j'=1, j' \neq j}^m k(Y_j, Y_{j'})
   - \frac{2}{mn} \sum_{i =1}^n \sum_{j=1}^m k(X_i, Y_{j}).
\end{align}

\subsection{Permutation Tests}
\label{subsec:permutation}

We restrict ourselves to the two-sample testing framework where both distributions $P$ and $Q$ are unknown and only sample-access is available ($X_1, ..., X_n \overset{\mathrm{i.i.d.}}{\sim} P$, $Y_1, ..., Y_m\overset{\mathrm{i.i.d.}}{\sim} Q$). Recall that we define $\bZ = (\bX, \bY)$. 
As explained in \citet[Chapter 15.2]{lehmann2005testing}, the problem of two-sample testing $H_0\colon \bX \overset{d}{=} \bY$ is equivalent to the problem of testing for exchangeability $H_0\colon \bZ \overset{d}{=} \sigma \bZ$ for permuted samples 
$\sigma \bZ = (Z_{\sigma(1)}, ..., Z_{\sigma(N)})$ with permutation $\sigma\in\cG_N$, where 
$\cG_N$ be the symmetric group of $N\coloneqq m+n$ elements $${\cG_N= \{\sigma: [N] \to [N] : \sigma \text{ is a permutation of } [N]\}}.$$

Let $\tau: \mathbb{Y}^{n + m} \to \mathbb{R}$ be the statistic of interest. \Cref{thm:permutations} uses the fact that under the null hypothesis ($P=Q$), the data vector $\bZ$ and any permuted version $\sigma \bZ$ are equal in distribution (since all samples are i.i.d.). This theorem provides a method to compute a threshold for the test statistic using the $(1-\alpha)$-quantile of statistics computed on the permuted samples (see theorem statement for details). 

\begin{theorem}[Theorem 2, \cite{HG18}]
\label{thm:permutations}
Let $\bG = (id, G_2, . . . , G_w )$ be the vector where $id \in \cG_N$ is the identity and $G_2, ..., G_w$ are elements from $\cG_N$ drawn either uniformly at random without replacement from $\cG_N \setminus \{id\}$ (with $w \leq |\cG_N|$), or drawn with replacement from $\cG$. Let $\tau^{(1)}, ..., \tau^{(w)}$ be the ordered statistics of the values $\tau(g\bZ)$ for $g\in \bG$, and let 
$$
q_{1-\alpha}\coloneqq \tau^{(\lceil (1-\alpha)w\rceil)}
$$ 
be the $(1-\alpha)$-quantile of these values.  
For the null hypothesis $H_0\colon \bZ =^d g\bZ$ for all $g\in \bG$, it holds that
\begin{equation*}
    \mathbb{P}_{H_0, \bG}\big(\tau(Z) >  q_{1-\alpha} \big) \leq \alpha.
\end{equation*}
\end{theorem}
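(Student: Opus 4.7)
The plan is to invoke the standard exchangeability argument for permutation tests. Under $H_0\colon \bZ \overset{d}{=} g\bZ$ for every $g \in \cG_N$, together with the independence of $\bG$ from $\bZ$, the vector of statistics $V = (\tau(G_1 \bZ), \ldots, \tau(G_w \bZ))$ (with $G_1 = id$) should be exchangeable; given this, an elementary order-statistic inequality delivers the conclusion.

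To establish exchangeability of $V$, I would compare it to the auxiliary vector $\widetilde V = (\tau(G'_1 \bZ), \ldots, \tau(G'_w \bZ))$, where $G'_1, \ldots, G'_w$ are either i.i.d.\ uniform on $\cG_N$ (the with-replacement scheme, with $G'_1$ no longer fixed to $id$) or a uniformly random ordered $w$-tuple of distinct elements of $\cG_N$ (the without-replacement scheme). By symmetry of the sampling distribution, $\widetilde V$ is exchangeable, so it suffices to show $V \overset{d}{=} \widetilde V$. Conditioning on $G'_1 = g$ and using $\bZ \overset{d}{=} g^{-1}\bZ$ from $H_0$, I would rewrite
\begin{align*}
\widetilde V \,\big|\, G'_1 = g \;\overset{d}{=}\; \bigl(\tau(\bZ), \tau(G'_2 g^{-1}\bZ), \ldots, \tau(G'_w g^{-1}\bZ)\bigr),
\end{align*}
and then observe that right-multiplication by $g^{-1}$ is a bijection of $\cG_N$ that maps the conditional distribution of $(G'_2, \ldots, G'_w)$ onto the law of $(G_2, \ldots, G_w)$ appearing in $V$. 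Marginalizing over $g$ gives $V \overset{d}{=} \widetilde V$, and hence exchangeability of $V$.

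The remaining step is the classical quantile bound for exchangeable vectors: for every $j$, exchangeability gives $\mathbb{P}(V_j > V_{(k)}) = \mathbb{P}(V_1 > V_{(k)})$, and summing over $j$ together with the deterministic inequality $\sum_j \mathds{1}(V_j > V_{(k)}) \leq w - k$ yields $\mathbb{P}(V_1 > V_{(k)}) \leq (w-k)/w$. Choosing $k = \lceil (1-\alpha) w\rceil$ produces $\mathbb{P}(\tau(\bZ) > q_{1-\alpha}) \leq \alpha$, as required.

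The main subtlety is justifying $V \overset{d}{=} \widetilde V$ in the without-replacement scheme. One has to verify that the right-multiplication bijection $\cG_N \setminus \{g\} \to \cG_N \setminus \{id\}$ correctly transports the conditional law of $(G'_2, \ldots, G'_w)$ given $G'_1 = g$ onto the unconditional law of $(G_2, \ldots, G_w)$ on distinct $(w-1)$-tuples of $\cG_N \setminus \{id\}$. Ties in the statistic are harmless since the theorem only controls the strict inequality $\tau(\bZ) > q_{1-\alpha}$.
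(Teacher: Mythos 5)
Your argument is correct. The paper states this result without proof, citing \citet{HG18}, and your reconstruction is exactly the standard argument from that reference (and from \citealp{romano2005exact}): establish exchangeability of $\big(\tau(G_1\bZ),\dots,\tau(G_w\bZ)\big)$ by comparing with a fully symmetric sampling scheme and transporting the conditional law of $(G'_2,\dots,G'_w)$ given $G'_1=g$ via the group bijection induced by $g^{-1}$, then apply the order-statistic bound $\mathbb{P}(V_1>V_{(k)})\leq (w-k)/w$ with $k=\lceil(1-\alpha)w\rceil$. The only cosmetic point is whether the permutation action composes on the left or the right; either way multiplication by $g^{-1}$ is a bijection mapping $\cG_N\setminus\{g\}$ onto $\cG_N\setminus\{id\}$, so the argument is unaffected, and your handling of ties via the strict inequality is also correct.
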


\Cref{thm:permutations} provides a near exact test that controls type-I error at level $\alpha$. In this paper we focus on carefully designing statistics $\tau$ that can trade-off accuracy of estimating the likelihood ratio and the efficiency in terms of number of samples.
We review a specific type of aggregation for multiple testing that will allow us to circumvent hyper parameter tuning, specifically bandwidth and regularization values. 

While we focus on the permutation method in this paper, we emphasize that an alternative line of work relying on the wild bootstrap exists \citep{fromont2012kernels,fromont2013two,chwialkowski2014wild,SKA+23,pogodin2024practical}. 

\subsection{Adaptive Testing Methods}
\label{sec:adaptive}

It is often the case that several test statistics $\tau_1, ..., \tau_k $ are computed on a fixed sample $\bZ$, for example, using different kernels and kernel bandwidths. These are then combined to potentially achieve more powerful tests. In this paper, we leverage the fuse method introduced in \cite{BSG23} and the aggregation method of \cite{SKA+23}. While AutoML techniques \citep{kubler2022automl} have also been studied  for this purpose,  we focus mostly on the fuse method due to its superior performance (see \Cref{fig:perturbed-uniform}).

\paragraph{Fuse. } This method was introduced and studied by \cite{BSG23} for two-sample hypothesis tests with the MMD. Given a sample $\bZ$, this method, for each permutation $g$, takes a softmax over the values of the individual statistics $\tau_1(\sigma\bZ), ..., \tau_k(\sigma\bZ)$. 

\begin{definition}
Let $\kappa > 0$, and $\cK = \{ \tau: \cY^{n+m} \to \mathbb{R} \} $ be a family of test statistics. Let $\pi$ be a distribution over $\cK$ that is either fixed independently of the data or is a function of the data that remains invariant under permutations of the combined sample $\bZ$. This invariance property ensures that the fusing process respects the exchangeability of samples under the null hypothesis. We define the unnormalized and normalized fused statistics as
\begin{align*}
    \mathrm{FUSE}(\bZ)&:= \frac{1}{\kappa} \log \left( \expected{\tau \sim \pi}{\exp\left(\kappa \tau(\bZ) \right)}\right), \\
\end{align*}
where the statistics can potentially be normalized to account for differences in the scale or variance of different test statistics.
\end{definition}

Following previous work, we let $\pi$ be the uniform distribution over $\cK$, giving equal weight to each considered test statistic. 
In practice, we follow the choice of $\kappa = \sqrt{n(n-1)}$ as used in the original paper \citep{BSG23}.
We introduce in \Cref{alg:perm_fuse} the permutation $f$-divergence test fusing over hyperparameters such as the kernel bandwidths and the regularization values.

\begin{algorithm}[h]
\caption{Single $f$-divergence two-sample test}
\label{alg:perm_fuse}
\DontPrintSemicolon % Avoids printing semicolons at line ends
\SetAlgoLined % Use lines for blocks, or \SetAlgoVlined
% --- Define Inputs ---
\KwIn{Samples $\bX = (x_1, \dots, x_n) \overset{\mathrm{iid}}{\sim}P$,  $\bY = (y_1, \dots, y_m)\overset{\mathrm{iid}}{\sim}Q$, significance level $\alpha \in (0, 1)$, list of hyperparameter configurations $\cC$,  number of permutations $B$, $f$-divergence estimator $\tau: \cY^{m+n} \times \cC$ that receives a sample as the first argument and a hyperparameter configuration as the second argument, $\Fuse$ function.}
% --- Define Output ---
\KwOut{Boolean decision: 1 (rejecting $H_0$), 0 otherwise.}
\BlankLine % Add visual space
% --- Algorithm Steps ---
$\bZ \leftarrow (\bX, \bY)$\tcp*{Combine samples.} \label{line:combine}
 % No line number, typewriter font
$\mathbf{T} \leftarrow \{\tau(\bZ, c) \mid c \in \cC\}$\tcp*{Compute true statistics.} \label{line:true-stats}
$T^{\text{fuse}} \leftarrow \Fuse(\mathbf{T})$\tcp*{Compute the fused observed statistic.} \label{line:fuse_obs}
 \label{line:init_perm_list}
 $\mathcal{G} \leftarrow \{\sigma_1, \dots, \sigma_B\}$ \tcp*{Generate permutations on $n+m$.}
 \For{$\sigma \in \mathcal{G}$}{
    \For{$c\in \cC$}{
        $T_{\sigma,c} \leftarrow \tau(\sigma\bZ, c)$\tcp*{Compute statistic on all configurations.} \label{line:permuted-stats}
        }
        $\mathbf{T}_{\sigma} \leftarrow \{T_{\sigma, c} \mid c \in \cC\}$ \tcp*{Collect statistics for permutation $\sigma$.}
        $T_{\sigma}^\mathrm{fuse} \leftarrow \Fuse( \mathbf{T}_{\sigma})$ \tcp*{Compute fused statistic for permutation $\sigma$.} \label{line:permuted-fuse}
    }
$p_{\text{val}} \leftarrow \big(1 + \big|\{ \sigma_i \in G : T_{\sigma}^\mathrm{fuse} \ge T^{\text{fuse}} \}\big|\big)/(B + 1)$ \tcp*{Compute $p$-value.}\label{line:pval_formula}
\Return{$\mathds{1}(p_{\mathrm{val}} \leq \alpha)$}
\end{algorithm}

\iffalse
\paragraph{MMDAgg. } Some settings require aggregating test statistics with different magnitude scales. Methods such as the soft-maximum used by MMD-fuse are often inappropriate for this task. Different aggregation procedures can be leveraged such as the one introduced in \cite{SKA+23} for aggregating statistics across different kernels and bandwidths.

As the Fuse test, the aggregation test fixes a finite set of hyperparameter configurations $\cC = \{c_1, ... \}$ and $B+1$ permutations, where $\sigma_{0}$ is the identity and proceeds as follows:
\begin{enumerate}
    \item Computes a matrix $T \in \mathbb{R}^{|\cC| \times (B+1)}$ where $t_{ij}$ is the test statistic with parameters $c_i$ for the permutation $\sigma_j$.
    \item Computes a p-value matrix where : 
    $$p(t_{ij}) = \frac{1}{B+1} \sum_{b=0}^B \mathds{1}(t_{ib} \geq t_{ij})$$
    \item Reduces each column taking the minimum function, obtaining a vector $[ \bar p_0, p_1, ..., \bar p_{B} ]$
    \item The single-batch aggregation test is the one that rejects the null when $$p_{SB}:=\frac{1}{B+1}\sum_{i=0}^B \mathds{1}(\bar p_{i} \leq \bar p_0)$$
\end{enumerate}
\fi

\Cref{alg:perm_fuse} introduces a test that combines the fuse and permutations approaches described above and the statistics defined in this  section to compute a p-value for two-sample tests. The test receives two samples $\bX$ and $\bY$, a statistic function derived from an $f$-divergence $\tau(\cdot, \cdot)$,  that takes the concatenation of (potentially permuted) samples $\bZ$ and a hyperparameter configuration $c \in \cC$, where $\cC$ is a set of hyperparameter configurations, a significance value $\alpha$, and a number of permutations $B$. The test first calculates the true statistic for all hyperparameter configurations (\Cref{line:true-stats}), and fuses them into one statistic $T^{\text{fuse}}$ (\Cref{line:combine}). Similarly, computes the statistic on all permutations and all configurations, and fusing the statistics over configurations for all permutations (\Cref{line:permuted-stats}) and for each permutation fuses the statistics across different configurations (\Cref{line:permuted-fuse}). Finally, it uses \Cref{thm:permutations} in \Cref{line:pval_formula} to calculate the p-value and return a binary decision if the p-value is smaller than the significance value $\alpha$. 
Note that, if $\cC$ contains a single configuration, then \Cref{alg:perm_fuse} reduces to a standard permutation test based on the statistic $\tau$.

\paragraph{MMDAgg.}
Aggregating test statistics with different magnitude scales can be challenging, as methods that combine statistics directly (e.g., the soft-maximum in MMD-Fuse) can be dominated by the statistics with the largest scales. We instead the aggregation procedure introduced by \citet[Algorithm 1]{SKA+23}, that circumvents normalization by relying on multiple testing.

We emphasize that aggregated tests have been considered in various other contexts, including independence testing \citep{albert2019adaptive}, goodness-of-fit testing \citep{schrab2022ksd}, efficient testing \citep{schrab2022efficient}, robust testing \citep{schrab2024robust}, imprecise credal testing \citep{chau2024credal}, and learning kernels \citep{zhou2025dual}.
See \citet{schrab2025practical,schrab2025unified,schrab2025optimal} for an overview on these.

The procedure considers a finite set of hyperparameter configurations, $\mathcal{C} = \{c_1, \dots, c_{|\mathcal{C}|}\}$, and uses permutations. $f$-Agg leverages this method to aggregate over different $f$-divergence statistics. 
Essentially, a multiple test is run with the level of each single $f$-divergence having a correction applied to it.
This correction is estimated using the data and extra permutations in order for the test to be as powerful as possible.
The method is summarized in \Cref{alg:meta-test}.

\begin{algorithm}[h]
\caption{$f$-Agg: Aggregated $f$-divergence two-sample test}
\label{alg:meta-test}
\DontPrintSemicolon
\SetAlgoLined
% --- Define Inputs ---
\KwIn{Samples $\bX = (x_1, \dots, x_n) \overset{\mathrm{iid}}{\sim}P$,  $\bY = (y_1, \dots, y_m)\overset{\mathrm{iid}}{\sim}Q$, significance level $\alpha \in (0, 1)$, number of permutations $B$, list $\mathcal{F}$ of functions $f$ with associated $f$-divergence estimators $\tau_f: \cY^{m+n} \times \cC_f$ that receives a sample as the first argument and a hyperparameter configuration as the second argument, lists $\cC_f$ of hyperparameter configurations for each $f\in\mathcal{F}$, Fuse function.}
% --- Define Output ---
\KwOut{Boolean decision: 1 (rejecting $H_0$), 0 otherwise.}
% new
$\bZ \leftarrow (\bX, \bY)$\tcp*{Combine samples.} 
 % No line number, typewriter font
$\sigma_0 \leftarrow \mathrm{Id}$ \tcp*{Set $\sigma_0$ to be the idendity permutation.}
 $\mathcal{G} \leftarrow \{\sigma_0,\sigma_1, \dots, \sigma_B\}$ \tcp*{Generate permutations on $n+m$.}
 \For{$f \in \mathcal{F}$}{\label{ff1}
 \For{$\sigma \in \mathcal{G}$}{
    \For{$c\in \cC_f$}{
        $T_{f, \sigma, c} \leftarrow \tau_f(\sigma\bZ, c)$\tcp*{Compute statistic on all configurations.} 
        }
        $\mathbf{T}_{f,\sigma} \leftarrow \{T_{f, \sigma,c} \mid c \in \cC_f\}$ \tcp*{Collect statistics for permutation $\sigma$.}
        $T_{f,\sigma}^\mathrm{fuse} \leftarrow \Fuse( \mathbf{T}_{f,\sigma})$ \tcp*{Compute fused statistic for permutation $\sigma$.} 
    }
$T_{f}^\mathrm{fuse} \leftarrow T_{f,\sigma_0}^\mathrm{fuse}$\tcp*{Collect observed fused statistics.}
}\label{ff2}
$\mathrm{Test}(u) \leftarrow \mathds{1}(T_{f}^\mathrm{fuse} > \mathrm{Quantile}_{1-u}\{T_{f,\sigma}\mid \sigma\in\mathcal{G}\} \textrm{ for some } f\in\mathcal{F})$ \tcp*{Define aggregated test.} 
$u_\alpha \leftarrow \sup\{u\in[0,1]: \mathbb{P}_{H_0}(\mathrm{Test}(u)=1)\leq \alpha\}$ \tcp*{Compute via bisection with extra permutations.} \label{line:compute-u}
% end new
\Return{$\mathrm{Test}(u_\alpha)$}
\end{algorithm}

To compute the threshold in \Cref{line:compute-u} of \Cref{alg:meta-test}, some additional permuted statistics are computed exactly as in \Cref{ff1} to \Cref{ff2} with a new set of permutations $\mathcal{G}_2=\{\sigma_{B+1},\dots,\sigma_{2B}\}$. Then, these are used to estimate the probability under the null by a Monte-Carlo approximation. 
Finally, a bisection method is performed to determine the largest threshold for which the estimated type I error remains controlled at level $\alpha$.
See \citet[Algorithm 1]{SKA+23} for details.

\subsection{Fuse implementation details}
\label{sec:fuse-details}

We now describe some specific implementation details of \Cref{alg:perm_fuse} that we use throughout the experimental section. 
\begin{enumerate}
    \item We use only perform tests with the gaussian kernel, $k(x,y) = \exp(\|x-y\|^2/\gamma^2)$ where $\gamma$ is the kernel bandwidth. We calculate five different bandwiths using five bandwidths chosen as the uniform discretizations of the interval between half the 5\% and twice the 95\% 
    quantiles of $\{\|z-z'\|_2 : z, z'\in \bZ \}$, as in the original MMD-Fuse paper \citep{BSG23}.
    \item We fuse over the regularization parameter $\lambda$, for $\lambda \in \{0.0001, 0.001, 0.01, 0.1, 1.0, 10.0\}.$
    \item  $f$-divergences can be asymmetric, i.e. $D_f(P||Q) \leq D_f(Q||P)$. Hence, for samples $\bX, \bY$ and configuration of hyperparameters $c\in \cC$,  we use the statistic $\tau_\mathrm{final}((\bX, \bY), c)  = \max \{\tau((\bX, \bY), c), \tau((\bY, \bX), c) \}$, where $\tau$ is an $f$-divergence estimator as described in \Cref{sec:consistent-estimation}.
\end{enumerate}

\paragraph{Complexity.}
Computing the likelihood ration $r(x)$ using expression \Cref{eq:r_lambda} has a time complexity of $O(m^3 + mn + n^2)$, coming from matrix inversion and matrix multiplication. For a given $f$-divergence, we compute this ratio for all hyperparameter configurations $c\in \cC_f$ and all permutations $\sigma \in \cG$, so  the time complexity of a test is given by $O\left(|\cG||\cC_f|(m^3 + mn + n^2)\right)$. 

\subsection{Formulation of KALE and DrMMD}
\label{sec:appendix-kale-drmmd}

The KALE statistic introduced first by \cite{NWJ10} and later studied by \cite{arbel2020generalized} for generative modelling. \cite{arbel2020generalized}. The KALE divergence  corresponds to the following  kernel regularized KL divergence
\begin{align*}
\text{KALE}(P,Q)
\coloneqq~&(1+\tau)\sup_{h \in \cH}\biggl\{ 1 + \int h dP - \int \exp(h)dQ - \frac{\tau}{2}\|h\|^2_{\cH} \biggr\}.
\\
=~&\min_{f>0} \int( f(\log f - 1) +1)dQ + \frac{1}{2\tau}\Big\|\!\int f(x)k(x, \cdot )dQ \Big\|^2_{\cH}.
\end{align*}
where the expression for the dual form in the second equation is from \citet[Equation 6]{GAG21}.

DrMMD, which is a regularized $\chi^2$ $f$-divergence (or de-regularized MMD),  was first proposed by \cite{eric2007testing} in a paper proposing an asymptotic test.
It can be expressed either in its regularized variational form, or by its closed form expression 
\begin{align*}
    \text{DrMMD}(P,Q) &:=  (1 + \tau)\|(\Sigma_{Q} + \tau I)^{-\nicefrac{1}{2}}(\mu_P - \mu_Q) \|^2_{\cH} \\
    & = (1+\tau) \sup_{h \in \cH} \biggl\{ \int h dP - \int \left(\frac{h^2}{4}+h \right)dQ - \frac{\tau}{4}\|h\|^2_{\cH} \biggr\}.
\end{align*}

See \Cref{sec:regularized-f-divergences} for details on general regularized $f$-divergences.

 While DrMMD and KALE have also been studied in the context of gradient flows and particle descent by \cite{chen2024regularized} and \cite{GAG21}, these papers focus on the DrMMD/KALE divergences themselves as measures for optimization and modeling.

In our work, we shift the focus of these divergences to hypothesis testing. Specifically, we use these divergences to construct permutation-based DrMMD/KALE non-asymptotic tests, with adaptation over the kernel and regularization parameter (see \Cref{subsec:permutation}). These adaptive, non-asymptotic tests have not previously been considered in the literature, and we compare them against our proposed non-regularized $f$-divergence tests in our experiments.

For a fixed kernel and fixed regularization parameter, the DrMMD and KALE discrepancies  are computed exactly as in their original papers. Our methodological novelty is first in developing non-asymptotic, permutation tests based on these discrepancies, contrasting with the original asymptotic test proposed for DrMMD, and second, leveraging theoretical insights and techniques from our proposed method to construct adaptive DrMMD/KALE statistics that are robust to the choice of kernel and regularization.

\section{Experimental Design Considerations}
\label{sec:experimental-design}
We begin by presenting experiments that evaluate the performance of various $f$-divergences and regularized $f$-divergences, illustrating the flexibility of the proposed framework. Next, we present an experiment to build intuition about the regularization parameter $\lambda$ for $f$-divergence testing with estimator in \Cref{eq:Df_estimator}. We also show the advantage of leveraging the variational formulation in \Cref{eq:Df_estimator}, even when an estimator for the likelihood ratio  $\frac{dP}{dQ}(x)$ is readily available. 

\subsection{Testing with Various $f$-Divergences}
\label{sec:test-all-f-divergences}

To showcase the flexibility of the proposed method to accommodate regularised and non-regularised $f$-divergences we ran the Expo-1D tests (\Cref{sec:experiments-expo1d}) and perturbed uniform tests (\Cref{sec:experiments-perturbed}) using different regularized and non-regularized $f$-divergence based statistics. 

In \Cref{fig:all-f-divergences-perturbed} we observe that MMD still outperforms all divergences in this setting in dimensions $d=1,2$. The performance decreases for all methods in two dimensions. However, the rejection rate of all but the reversed KL converges to 1 as the perturbation amplitude increases in dimension $d=1$. 

In \Cref{fig:all-f-divergences-expo1d} we see that the KL-based test leveraging the likelihood ratio estimator in \Cref{eq:r_lambda} has similar performance to the DrMMD based test, and these two outperform all other $f$-divergence based tests.

\begin{figure}[h]
    \centering
    \includegraphics[width=0.9\linewidth]{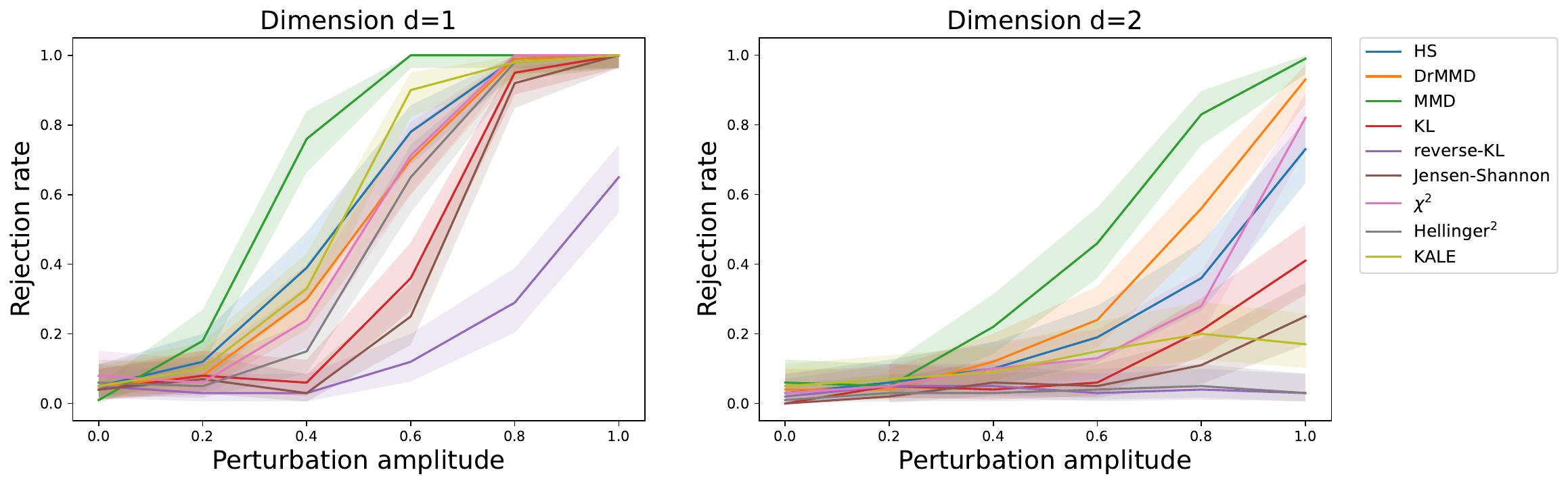}
    \caption{Performance on perturbed $d$-dimensional uniform alternatives with varying perturbation amplitudes for a diverse set of $f$-divergences. MMD outperforms all divergences in this setting. The performance decreases for all methods in two dimensions.}
    \label{fig:all-f-divergences-perturbed}
\end{figure}

\begin{figure}[h]
    \centering
    \includegraphics[width=0.5\linewidth]{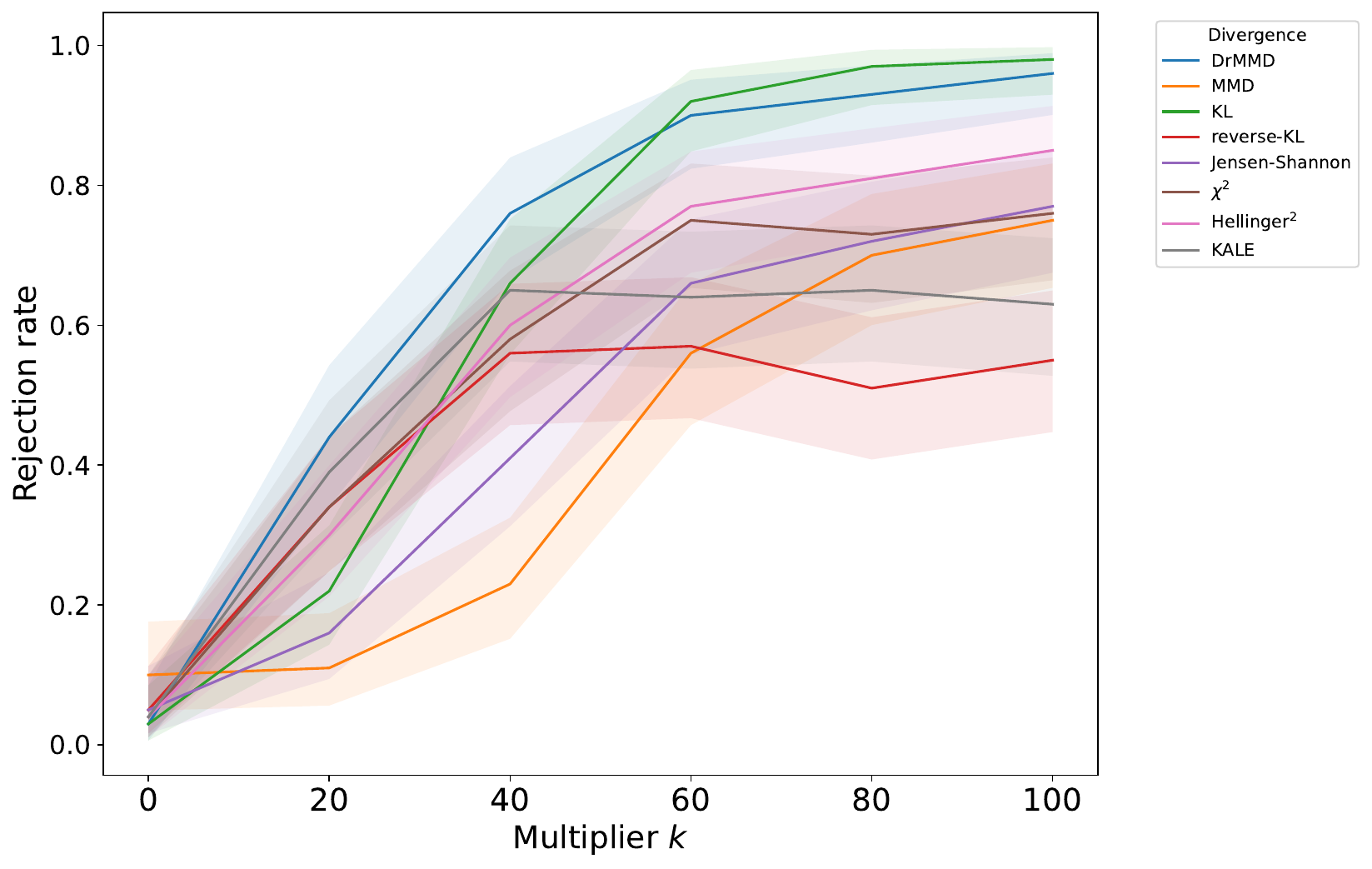}
    \caption{DrMMD (Regularized $\chi^2$) and KL are the top-performing divergences overall in this setting. MMD improves in smooth settings (large $k$). For smaller $k$, the regularized estimators, KALE (regularized log-ratio) and DrMMD (regularized ratio) have superior performance. }
    \label{fig:all-f-divergences-expo1d}
\end{figure}

\subsection{Effect of regularization parameter $\lambda$ and the importance of adaptation}
\label{sec:effect-regularization}
By design, larger values of regularization make the estimation of the (regularized) likelihood more tractable but introduce bias. However, in our tests we do not need to compromise accuracy nor smoothness since we adapt over the parameter $\lambda$ as detailed in \Cref{sec:adaptive}. Therefore, all the experiments in the manuscript are currently run with adaptation over $\lambda$, explaining the high power of our proposed method.

 To showcase the importance of varying and adapting over the  regularisation parameter, we ran the proposed test for the Expo-1D experiment to compare rejection rates across diferent $\lambda$ values for the KL and $\chi^2$ divergence based tests. In this experiment, we still adapt over five kernel bandwidths but do not adapt over the regularization parameter $\lambda$.

 \begin{figure}
     \centering
     \includegraphics[width=1.0\linewidth]{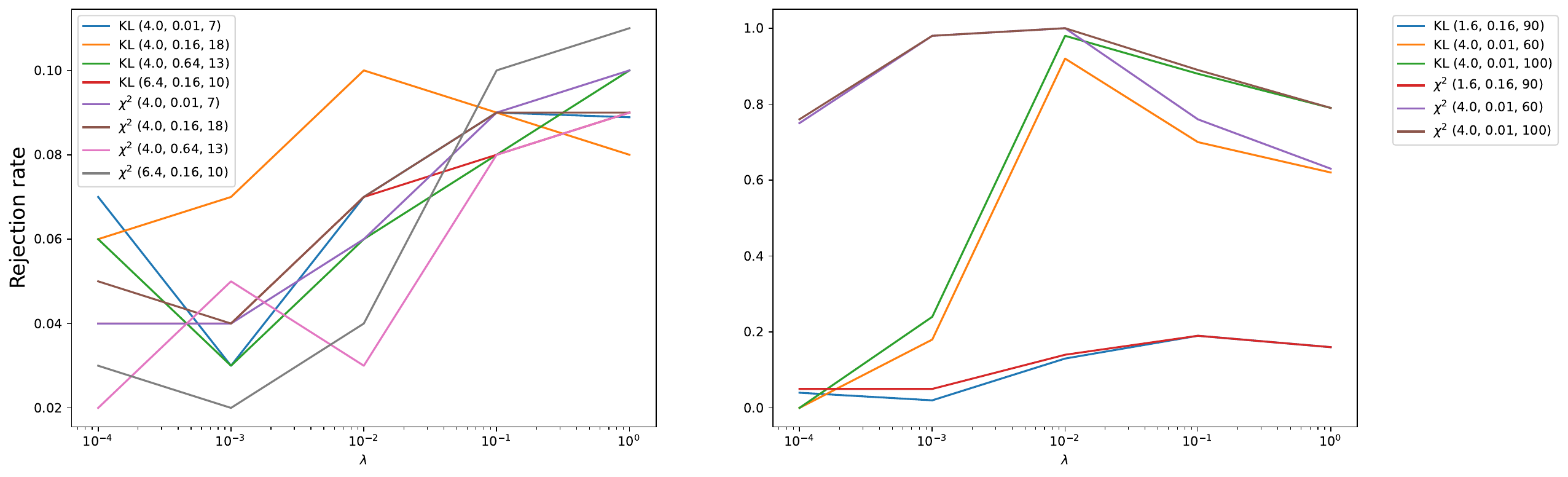}
     \caption{Rejection rate of \Cref{alg:meta-test} for different alternatives, adapting only over the kernel bandwidth. Each line corresponds to a specific $f$-divergence on a given Expo-1D alternative, with hyperparameters $(\mu, \sigma, k)$ in the legend. For details see \Cref{sec:experiments-expo1d}. Alternatives closer to the null (left) achieve higher power with stronger regularisation ($\lambda=1.0$), while larger departures from the null (right) require lower regularisation. }
     \label{fig:sweep-lambda}
 \end{figure}

 We show the results in \Cref{fig:sweep-lambda}. This experiment highlights that optimal detection rates depend on matching the regularization value to the specific alternative. Note that 'easier' cases ($k=100$, right panel) require lower regularization ($\lambda \approx 0.001$), while harder alternatives ($k=10$, left panel) achieve higher power with stronger regularization ($\lambda=1.0$). These results justify our method's need for adaptivity.

\subsection{Comparison of Direct $f$-divergence estimator vs. Variational Estimators}
\label{sec:direct-vs-variational}
In \Cref{sec:consistent-estimation} we introduce an estimator based the variational formulation of $f$-divergences for its flexibility in handling regularized $f$-divergences. However, a direct plug-in approach in \Cref{def:direct-f-divergence} is a natural baseline for standard $f$-divergences.

To address this, we ran new experiments to compare the test power of the direct estimator against our variational approach. 
For clarity, let us introduce:

\begin{align*}
    \text{Direct: } \hspace{1cm} \widehat{D}_d(P||Q) &= \mathbb{E}_Q[f(dP/dQ)]\\
    \text{Variational: } \hspace{1cm} \widehat{D}_v(P||Q) &= \mathbb{E}_P[f’(dP/dQ)]-\mathbb{E}_Q[f^*(f’(dP/dQ)] \\
\end{align*}

\paragraph{Setting.} For testing, we used the Expo-1D dataset as in \Cref{sec:experiments-expo1d}, varying the parameter $k$ to control the departure from the null hypothesis ($k=0$ is the null; $k=100$ is the maximum departure). We compute the test statistic based on $\max(\widehat{D}(P||Q), \widehat{D}(Q||P))$ using either the direct or variational estimators. We run the experiments with $n=1000$ samples from $P$ and $Q$.

\begin{figure}
    \centering
    \includegraphics[width=0.7\linewidth]{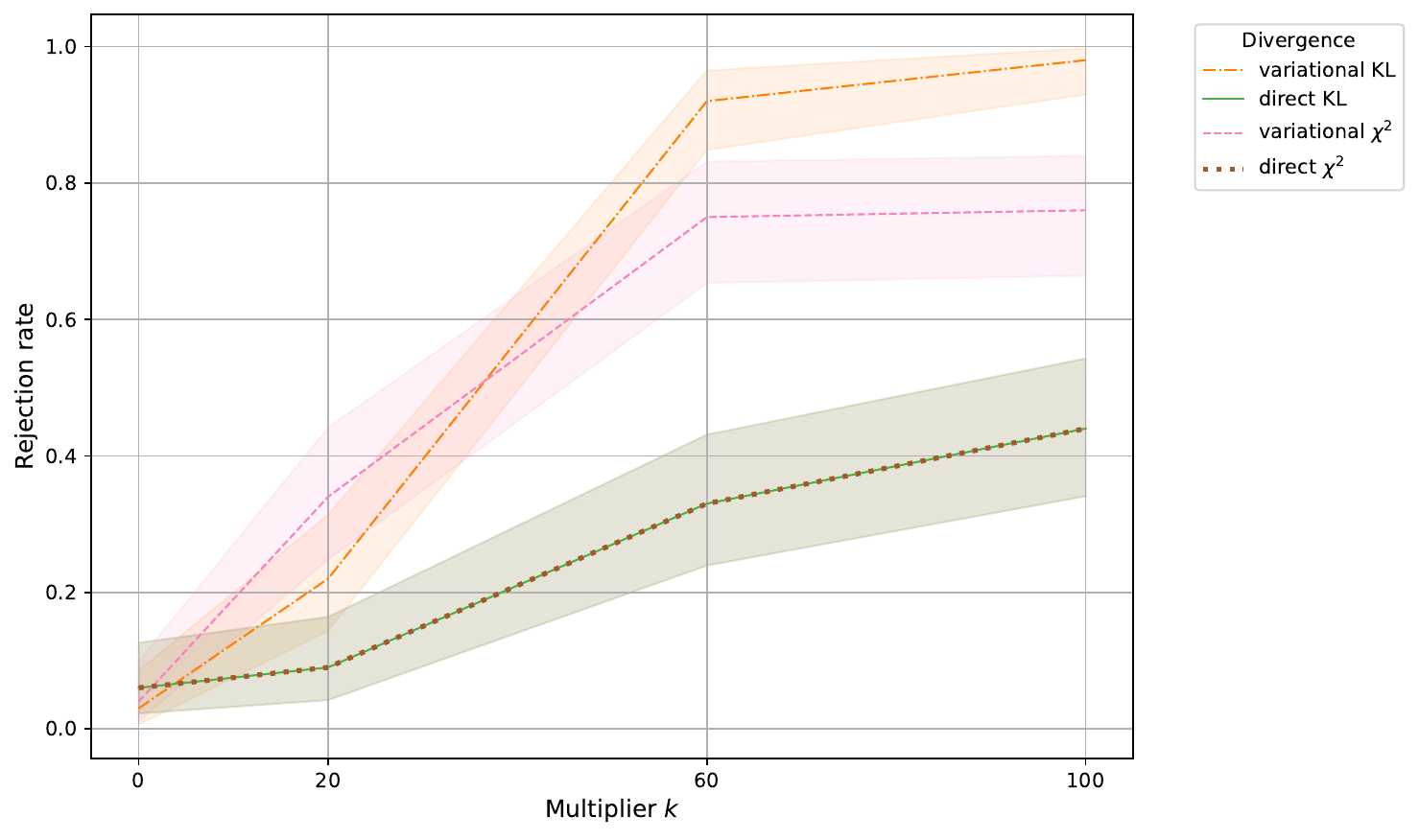}
    \caption{Rejection rate for direct $f$-divergence estimator vs. variational $f$-divergence estimator based two sample tests on the Expo-1D dataset.}
    \label{fig:direct_vs_variational}
\end{figure}

We observe in \Cref{fig:direct_vs_variational} that the variational approach achieves much higher power than the direct approach, which justifies the use of the proposed variational method. 

%\newpage
\section{Experimental Details}
\label{sec:exp_details}

This section provides additional details on the experimental settings of \Cref{sec:experiments}: 
 the Expo-1D experiment (\Cref{app:details-expo1d}),
 differential privacy experiments (\Cref{app:details-privacy}), and machine unlearning (\Cref{app:details-unlearning}).

\subsection{Expo-1D}
\label{app:details-expo1d}
\paragraph{KALE performance in \Cref{fig:expo1d}.}
In \Cref{sec:experiments} we noted that KALE outperforms MMD-based methods for small values of $k$ but after $k=40$ its performance grows more slowly than DrMMD and MMD. \Cref{fig:log-ratio} below shows how the log-ratio of the specific tested densities grows more slowly than the ratio, potentially making it harder for methods that rely on the ratio to continue improving without more samples. 

\begin{figure}[h]
    \centering
    \includegraphics[width=0.7\linewidth]{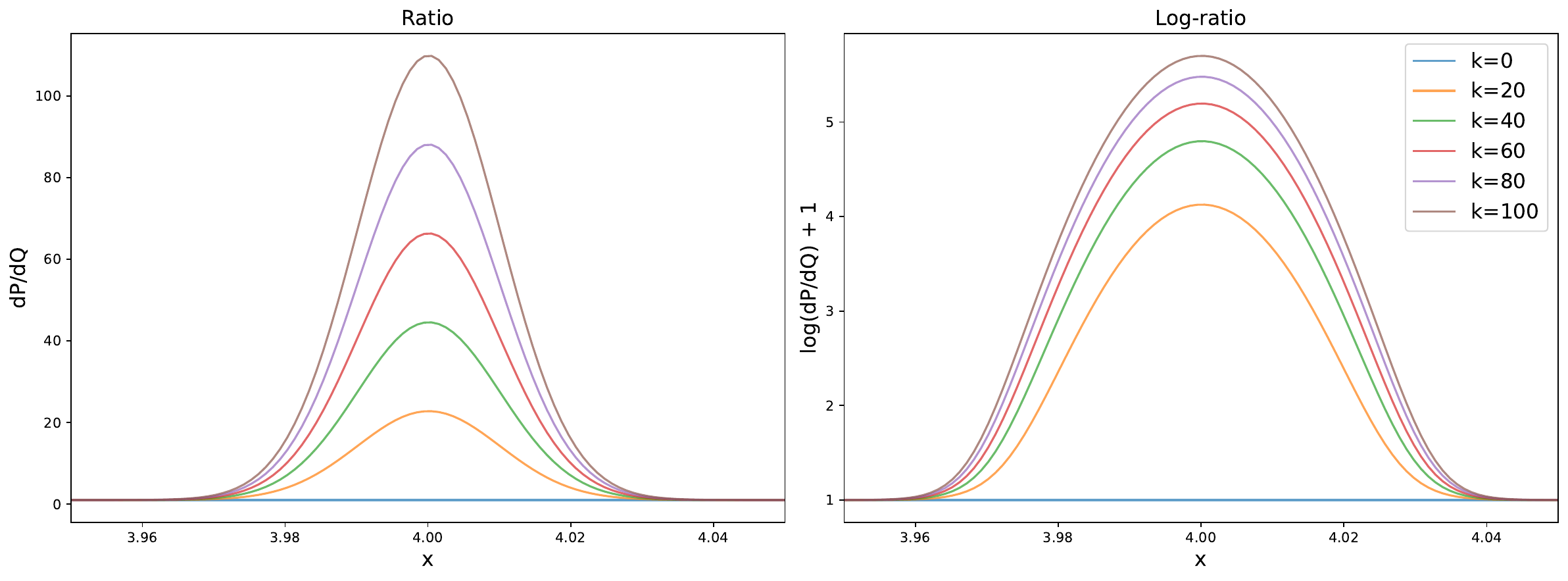}
    \caption{Ratio (left) and Log-Ratio (right) of densities involved in the Expo-1D test. }
    \label{fig:log-ratio}
\end{figure}

\paragraph{Comparison to \cite{grosso2025multiple}.}
In \Cref{tab:divergence_pivot}, we compare the proposed $f$-divergence tests against the results from \cite{grosso2025multiple} using their experimental configuration at a significance level of $\alpha = 0.023$.

The methodology in the prior work involved computing the KALE statistic over 4000 trials, using 200'000 samples from the null distribution; the number of samples from the alternative distribution was not specified. For our $f$-divergence tests, we use 5000 samples from each distribution and report the average statistical power and confidence intervals over 100 trials. We compare our results against the best-performing aggregation method reported by \cite{grosso2025multiple}.

The comparison shows that an $f$-divergence statistic paired with the fuse aggregation method (smax-t in their work) achieves greater statistical power in nearly all settings, using significantly fewer samples. 

These results also align with the behavior illustrated in \Cref{fig:log-ratio}, where KALE performs best for small density ratios where the log-ratio is large, while MMD becomes more powerful as the ratio increases (e.g., for multiplier $k=90$).

\begin{table}[h]
\centering
\caption{Performance of different methods on the Expo-1D dataset on the same parameter configuration as previous work. Each column presents a parameter configuration $(\mu, \sigma, k)$ that configures the alternative $n_a = n_0(x)+k\frac{1}{\sigma\sqrt{2\pi}}\exp\left({-\frac{x-\mu^2}{2\sigma^2}}\right)$. We confirm the intuition in \Cref{fig:log-ratio} that KALE performs better when the density ratio is small, but once the multiplier is large $(k=90)$ MMD is a better statistic.}
\label{tab:divergence_pivot}
\tiny
\smallskip
\begin{tabular}{llllll}
\toprule
$(\mu, \sigma, k)$ & (1.6, 0.16, 90.0) & (4.0, 0.01, 7.0) & (4.0, 0.16, 18.0) & (4.0, 0.64, 13.0) & (6.4, 0.16, 10.0) \\
\midrule
HS & $0.250 \ [0.169, 0.347]$  & $0.020 \ [0.002, 0.070]$  & $0.030 \ [0.006, 0.085]$  & $0.030 \ [0.006, 0.085]$  & $0.030 \ [0.006, 0.085]$  \\
HS-sig & $0.390 \ [0.294, 0.493]$  & $0.020 \ [0.002, 0.070]$  & $0.030 \ [0.006, 0.085]$  & $0.010 \ [0.000, 0.054]$  & $0.040 \ [0.011, 0.099]$  \\
DrMMD & $0.110 \ [0.056, 0.188]$  & $0.220 \ [0.143, 0.314]$  & $0.190 \ [0.118, 0.281]$  & $0.040 \ [0.011, 0.099]$  & $0.320 \ [0.230, 0.421]$  \\
KALE & $0.320 \ [0.230, 0.421]$  & $\mathbf{0.330} \ [0.239, 0.431]$  & $\mathbf{0.420} \ [0.322, 0.523]$  & $\mathbf{0.470} \ [0.369, 0.572]$  & $0.540 \ [0.437, 0.640]$  \\
MMD & $\mathbf{0.620} \ [0.517, 0.715]$  & $0.030 \ [0.006, 0.085]$  & $0.030 \ [0.006, 0.085]$  & $0.030 \ [0.006, 0.085]$  & $0.040 \ [0.011, 0.099]$  \\
\midrule
\cite{grosso2025multiple} & 0.008 $\pm 0.001$ & 0.103 $\pm 0.007$ & 0.012 $\pm 0.002$ & 0.32 $\pm 0.01$ & $\mathbf{0.66} \pm 0.01$ \\
\bottomrule
\end{tabular}
\end{table}

\subsection{Differential Privacy}
\label{app:details-privacy}
\paragraph{Audited mechanisms.}
We consider the 4 non-private variants of the sparse vector technique (SVT) introduced by \cite{LSL17} as algorithms 3--6. This mechanism for releasing a stream of queries compares each query value against a threshold. Each algorithm returns certain outputs for a maximum number of queries $c$.  SVT4 satisfies $(\frac{1+6c}{4})$-DP, and SVT3, SVT5, and SVT6 do not satisfy $\varepsilon$-DP for any finite $\varepsilon$.

The mechanisms mean1 and mean2 were introduced in \cite{KMRS24} and are non-private mechanisms that take the average of $n$ numbers (for any finite $\varepsilon$) and add Laplace noise. Mean1 violates the guarantee by accessing the private number of points $n$, and mean2 one privatizes the number of points to estimate the scale of the noise added to the mean statistic but the mean itself is computed using the non-private number of points.

For each mechanism we fixed the neighboring test datasets specified in \Cref{tab:adjacent_datasets}.

\begin{table}[h]
\centering
\caption{Adjacent Datasets $D$ and $D'$ used to test differential privacy guarantees.}
\label{tab:adjacent_datasets}
\begin{tabular}{lcc}
\toprule
Mechanism & $D$ & $D'$ \\
\midrule
Mean mechanisms & $\{ 1 \}$  & $\{ 1 , 0\}$ \\
SVT mechanisms & $\{ 0, 1  \}$ &  $\{ 0,1,0 \}$ \\
\bottomrule
\end{tabular}
\end{table}

\subsection{Machine Unlearning}
\label{app:details-unlearning}

\paragraph{Model training and data.} For this experiment we used the CIFAR10 dataset \citep{krizhevsky2009learning}. We train a simple convolutional neural network with two convolutional layers, with 16, and 32 features respectively. This is followed by two dense layers that output logits for 10 categories. The model is trained to minimize the cross entropy loss. We train for 10 epochs with a batch size of size 128 on 90\% of the data, and leave 10\% for validation. We target the last two dense layers for modification when applying unlearning techniques. The forget set is constructed by sampling 10 different images at random after shuffling the dataset. 

\paragraph{Unlearning methods in \Cref{sec:experiments}.} There is a variety of unlearning methods with varying characteristics and different applications. They use different techniques that include gradient ascent in the forget set, finetuning in the retain set, pruning, saliency or data correlation metrics. We consider the following algorithms which we believe capture a large portion of these techniques. By no means we optimize the hyperparameter or replicate exactly their algorithms. 

\begin{enumerate}
    \item Selective Synaptic Dampening (SSD) by \cite{foster2024fast}: The core idea of SSD is to  "dampen" the parameters that are most important for the forgotten data, pulling them back toward their original state before training. It avoids damaging the model's overall performance by protecting parameters that are also important for the retain data. It achieves this by:
\begin{itemize}
    \item Calculating parameter importance for both the forget and retain sets (approximated by the squared gradient of the loss)
    \item Creating a "dampening factor" for each parameter that is high if it's important for the forget set but not the retain set.
    \item Minimizing a loss function that penalizes changes from the original model's weights, weighted by these dampening factors.
\end{itemize}

    \item Pruning: Pruning methods address unlearning by pruning the most influential neurons for the forget set, measured by mean activation magnitude when evaluated on the forget set.
    \item Random Label: These methods erase influence of the forget set by training on randomly labeled forget data, and then finetuning on retain data.
    \item Finetuning: these models perform finetuning on the retain set. 
    \item Retraining for a different number of iterations.
    \item Retrain from scratch: Retrains the model on retain data with the exact same train configuration as the original model. 
    \item Retrain less: Retrains the model from scratch on retain data for only 5 epochs, other parameters are fixed. 
    \item Retrain batch: Retrains the model from scratch on retain data with a batch size of 256 (instead of 128). 
\end{enumerate}

To test the unlearning capability of the above methods we first generate 5000 models from each distribution that we split in 10 groups of 500 samples each, for 10 repetitions of each test.

\paragraph{Three sample unlearning-evaluation test. }
We use the following relative similarity test introduced in \cite{bounliphone2015test} in the context of model selection of generative models. 

Let  $P$, $Q$, and $R$ be three distributions over the same domain. Given samples $\bX = (x_1, \dots, x_m) \overset{\mathrm{iid}}{\sim}P$,  $\bY = (y_1, \dots, y_n)\overset{\mathrm{iid}}{\sim}Q$, and $\mathbf{W} = (w_1, \dots, w_r)\overset{\mathrm{iid}}{\sim}R$ such that $P\neq Q$, $P\neq R$, we consider the hypothesis test with null hypothesis $H_0: MMD(P,Q) \leq \mathrm{\mathrm{MMD}}_k(P, R)$ against  the alternative $H_1:=\mathrm{\mathrm{MMD}}_k(P,Q) > \mathrm{\mathrm{MMD}}_k(P, R)$ at significance level $\alpha$, where $\mathrm{\mathrm{MMD}}_k$ is the population MMD for the RKHS with corresponding kernel $k$. 

Theorem 2 in \cite{bounliphone2015test} establishes that 
if  $\mathbb{E}[k(x_i,x_j)] < \infty$ and  $\mathbb{E}[k(y_i,y_j)] < \infty$ and $\mathbb{E}[k(x_i,y_j)] < \infty$, then the joint distribution of the unbiased empirical estimators of $\mathrm{\mathrm{MMD}}_k(P,Q)$ and $\mathrm{\mathrm{MMD}}_k(P,Q)$, ($\widehat{\mathrm{MMD}}_k(P,Q)$ and $\widehat{\mathrm{MMD}}_k(P,Q)$ respectively) satisfy: 
\begin{equation}
\sqrt{m} 
\left( \begin{pmatrix}
\widehat{\mathrm{MMD}}^2_k({\bX}, {\bY}) \\ 
\widehat{\mathrm{MMD}}^2_k({\bX}, {\mathbf{W}})
\end{pmatrix}
-
\begin{pmatrix}
\mathrm{MMD}^2_k({P},{Q}) \\ 
\mathrm{MMD}^2_k({P}, {R})
\end{pmatrix}  
\right)
\overset{d}{\longrightarrow} \mathcal{N} 
\left( 
\begin{pmatrix}
0 \\ 
0
\end{pmatrix},
\begin{pmatrix} 
\sigma_{PQ}^2 & \sigma_{PQ, PR} \\ 
\sigma_{PQ, PR} & \sigma_{PR}^2 
\end{pmatrix} 
\right)
\label{eq:joint_asymptotic_MMD}
\end{equation}

The variance terms $\sigma^2_{PQ}$, $\sigma^2_{PR}$, $\sigma^2_{PQ, PR}$ can be estimated using samples through Equations (2) and (7) in the original paper by \cite{bounliphone2015test}. Consequently, the $p$-value $p$ for the test $H_0$ against $H_1$ can be calculated   using the following one-sided inequality:

\begin{equation}
p \leq \Phi \left( -\frac{\widehat{\mathrm{MMD}}^2({\bX},{\bY})-\widehat{\mathrm{MMD}}^2({\bX},{\mathbf{W}})}{\sqrt{\sigma_{PQ}^2 + \sigma_{PR}^2 - 2 \sigma_{PQ, PR}}} \right), 
\end{equation}

where $\Phi$ is the cumulative distribution function of the standard normal distribution. The test is performed by comparing $p$ to the significance level $\alpha$.

%\clearpage
\section{Direct Optimization of the Regularized Hockey-Stick Witness Function} 
\label{sec:alternatives-hs}

In this section, we derive a direct optimization of the witness function of the regularized Hockey-Stick divergence.

\paragraph{Optimization via semidefinite programming}
Estimating \Cref{eq:hswitness} is hard in practice, when only finite samples $X_1, ..., X_n \sim P$ and $Y_1, ..., Y_n \sim Q$ from the distributions $P$ and $Q$ are available

The primary challenge lies in the constraints $0\leq g(x)\leq1$, which must hold for all $x$. since enforcing these constraints over an infinite set of points is intractable. Simply enforcing it on the sample points $\{X_i\}_i$ and $\{Y_i\}_i$ provides no guarantee that the constraint holds elsewhere.

 Several alternatives present their own challenges: 
\begin{itemize}
    \item Reducing the class of functions $g$ to generalized linear models (GLMs) (e.g. restricting $g$ to the form $g(x) = \langle \phi(x), g \rangle$), introduces a non-convex loss function to map a solution $g$'s outputs to the bounded range $[0,1]$, and making it hard to assert convergence guarantees about the solution.
    \item Replacing the original range constraint with a finite number of constraints over finite samples leads to either unfeasible solutions that violate the range constraint outside of the samples, or computationally  impractical optimization problems for large sample sizes. 
\end{itemize}

Instead, inspired by work on non-negative optimization \citep{MBR20} we first formulate an optimization problem with non-negativity constraints, we reparameterize the witness function as $g(x) = \phi(x)^T A \phi(x),$ where $\bA$ is a hermitian positive semidefinite linear operator $(A\succcurlyeq 0)$, $\phi(x)$ is the empirical feature map associated with the kernel $k(\cdot, \cdot)$, data $z_i = \begin{cases} x_i & \text{ if } i=, 1,..., n\\
y_{i-n} & \text{otherwise}.
\end{cases}$. 
Recall that the empirical feature map for $x \in \cX$ is given by $\phi(x) = \bV^\top v(x)$,  $v(x) = (k(x, z_i))^n_{i=1}$, and $\bV$ is the Cholesky decomposition of $\bK$ , i.e., $\bK = \bV^{\top}\bV.$. This formulation inherently ensures $g(x)\geq 0$ for all $x$ and enables efficient optimization.

\begin{theorem}
\label{thm:primal-hs}

The semidefinite program defined by

\begin{align*}
  \textbf{Primal 1: } \quad  \min_{\bA \succcurlyeq 0} \quad & \frac{e^{\varepsilon}}{n} \sum_{i=1}^n \phi(y_i)^TA\phi(y_i)
  - \frac{1}{n}\sum_{i=1}^n \phi(x_i)^TA\phi(x_i) + \tau\|A\|_F
\end{align*}

 has a unique solution $A^*$ that can be written as 

\begin{equation*}
    A^* = \sum_{i=1}^{2n} B_{ij}\phi(z_i)\phi(z_j)^T, \quad \text{ for some matrix } \quad B \in \mathbb{R}^{2n\times 2n}
\end{equation*}
where $\phi(x) = \bV^\top v(x)$,  $\bV$ is the Cholesky decomposition of $\bK$ , i.e., $\bK = \bV^{\top}\bV$, and $v(x) = (k(x, z_i))^n_{i=1}$ is the empirical feature map and $$z_i = \begin{cases} x_i & \text{ if } i=, 1,..., n\\
y_{i-n} & \text{otherwise}.
\end{cases}$$

\end{theorem}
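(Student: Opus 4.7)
The plan is to use a representer-theorem style argument based on Hilbert--Schmidt orthogonal decomposition with respect to the span of the data features, and then derive uniqueness from convexity.

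First, I would set $P := \operatorname{span}\{\phi(z_1),\dots,\phi(z_{2n})\}$ and split the ambient feature space as $P \oplus P^\perp$. Relative to this split, every self-adjoint operator admits a block form
\[
A = \begin{pmatrix} A_{PP} & A_{PP^\perp} \\ A_{P^\perp P} & A_{P^\perp P^\perp} \end{pmatrix}.
\]
Since each $\phi(z_i)$ lies in $P$, every quadratic form $\phi(z_i)^\top A\phi(z_j)$ appearing in the linear data term depends only on the block $A_{PP}$. I would then compare $A$ to its truncation $\widetilde A$ obtained by zeroing all blocks except $A_{PP}$: principal compressions of PSD operators are PSD, so $A \succeq 0$ implies $\widetilde A \succeq 0$; the linear term is unchanged; and the Pythagorean identity
\[
\|A\|_F^2 = \|A_{PP}\|_F^2 + \|A_{PP^\perp}\|_F^2 + \|A_{P^\perp P}\|_F^2 + \|A_{P^\perp P^\perp}\|_F^2
\]
gives $\|A\|_F \geq \|\widetilde A\|_F$, with equality iff $A = \widetilde A$. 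Hence any minimizer must coincide with its truncation, i.e., be supported on $P$. A spectral decomposition of such a PSD operator, combined with expansion of its eigenvectors in the spanning family $\{\phi(z_i)\}$, then yields the representation $A^* = \sum_{i,j} B_{ij}\phi(z_i)\phi(z_j)^\top$.

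For uniqueness, I would exploit that $\|\cdot\|_F$ is induced by an inner product and is therefore strictly convex on spheres. If $A^{(1)} \neq A^{(2)}$ were two minimizers, their midpoint would also minimize the objective, forcing equality in the triangle inequality for $\|\cdot\|_F$ along the segment. This equality, combined with the linearity of the data term, would force $A^{(1)}$ and $A^{(2)}$ to lie on a common ray through the origin; homogeneity of the objective under scaling along that ray then collapses the ray to a single point, contradicting $A^{(1)} \neq A^{(2)}$ (except in the degenerate case that the optimal value is zero).

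The main obstacle I anticipate is precisely this uniqueness step: $\tau\|\cdot\|_F$ is convex but not strictly convex along rays from the origin, so the usual strict-convexity shortcut does not apply directly, and one must argue separately that the optimal value is nonzero in the non-degenerate regime. I would also be careful to clarify that the uniqueness claim pertains to $A^*$ as an operator: the coefficient matrix $B$ in the representer expansion is generally not unique whenever the features $\{\phi(z_i)\}$ are linearly dependent.
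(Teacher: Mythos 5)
The paper itself gives no proof of this theorem: it is stated and then justified only by appeal to \citet{MBR20}, so your representer argument is a genuine addition. That part of your proposal is correct and is exactly the standard route (and the one used in the cited work): compress $A$ to $\Pi_P A \Pi_P$ where $\Pi_P$ projects onto $\operatorname{span}\{\phi(z_i)\}$, note that the data term is unchanged, that compressions of PSD operators are PSD, and that the Pythagorean identity for the Hilbert--Schmidt norm strictly decreases $\|A\|_F$ unless $A$ is already supported on $P$. Your closing caveat that only the operator $A^*$, not the coefficient matrix $B$, can be unique is also right and worth stating, since $\{\phi(z_i)\}$ need not be linearly independent.

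The genuine gap is in the uniqueness step, and it is worse than you suggest. With the regularizer $\tau\|A\|_F$ (not squared) and no box constraint, the objective is $\langle A, M\rangle_{F} + \tau\|A\|_F$ with $M = \frac{e^{\varepsilon}}{n}\sum_i \phi(y_i)\phi(y_i)^{\top} - \frac{1}{n}\sum_i \phi(x_i)\phi(x_i)^{\top}$, which is positively homogeneous of degree one on the PSD cone: $f(tA)=tf(A)$ for $t\ge 0$. Hence the optimal value is either $0$ (attained at $A=0$, and at every point of a cone of minimizers if any nonzero $A$ achieves $f(A)=0$) or $-\infty$ (if some $A\succeq 0$ has $f(A)<0$). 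There is no ``non-degenerate regime with nonzero optimal value'' to fall back on, so your planned ray-collapsing argument cannot be completed, and the uniqueness claim cannot be proved for Primal~1 as literally written. The resolution is that the statement does not match the dual the paper actually derives: the dual contains $\frac{1}{2\tau}\|[\mathbf{V}\operatorname{Diag}(\alpha)\mathbf{V}^{\top}]_{-}\|_F^2$ and a conjugate $L^*$ computed over the box $[-M,M]^{2n}$, which corresponds to a primal with regularizer $\frac{\tau}{2}\|A\|_F^2$ and constraints $|\phi(z_i)^{\top}A\phi(z_i)|\le M$. For that corrected primal your compression argument applies verbatim, and uniqueness is immediate because a linear term plus $\frac{\tau}{2}\|A\|_F^2$ is strictly convex on the convex feasible set. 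You should prove the theorem in that form (or flag the mismatch) rather than attempt to patch uniqueness for the homogeneous objective.
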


Further, this problem can be efficiently solved because as we show below, the dual has only $2n$ variables, instead of $4n^2$, and for this specific case can be optimized using accelerated proximal splitting methods as FISTA.

\begin{lemma}
\begin{itemize}
\item[(a)] Let $\bK \in \mathbb{R}^{n\times n}$ be the kernel matrix, i.e., $\bK_{i,j}=k(z_i, z_j)$, where $z_i$ is defined in \Cref{thm:primal-hs}. Let $\bV$ be the Cholesky decomposition of $\bK$ , i.e., $\bK = \bV^{\top}\bV.$

Then, the dual of Primal 1 is given by 

\begin{equation}
    \label{eq:dual}
    \inf_{\alpha_x, \alpha_y \in \mathbb{R}^n} M (\|\alpha_x + \mathds{1}/n\|_1 + \|\alpha_y - \frac{e^{\varepsilon}}{n}\mathds{1}\|_1) + \frac{1}{2\tau} \|[\bV \text{Diag}(\alpha)\bV^{\top} ]_{-}\|^2_{F}.
\end{equation}

\item[(b)] If $\alpha^{*}$ is a solution of \Cref{eq:dual}, then 

\begin{equation}
    \bB = \tau^{-1}\bV^{-1}[\bV\text{Diag}(\alpha^{*})\bV^{\top}]_{-}\bV^{-\top}
\end{equation}

is a solution for the problem \textbf{Primal 1}. 
 \end{itemize}
\end{lemma}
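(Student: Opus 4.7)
The plan is to derive the dual by standard Lagrangian--Fenchel analysis for the SDP defined by Primal 1, and then read off the primal-recovery formula in part (b) from the resulting KKT stationarity condition. First I would rewrite the primal in trace form using $\phi(z)^\top A\phi(z)=\langle A,\phi(z)\phi(z)^\top\rangle$, so the objective becomes linear in $A$ (with coefficients $-1/n$ on $\phi(x_i)\phi(x_i)^\top$ and $+e^\varepsilon/n$ on $\phi(y_i)\phi(y_i)^\top$) plus the quadratic regularizer $\tau\|A\|_F^2$, subject to $A\succcurlyeq 0$ together with the pointwise bound constraints $0\le\phi(z_i)^\top A\phi(z_i)\le M$ (these are what produce the $M$ factor in the stated dual, so they must be implicitly present in the SDP).

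Next I would attach dual variables: non-negative scalars $\alpha_i^+,\alpha_i^-$ for the upper and lower bound at each $z_i$, and a PSD multiplier for the cone constraint. Combining the upper/lower multipliers into a single unconstrained scalar $\alpha_i=\alpha_i^+-\alpha_i^-$ (and using $|\alpha_i^+|+|\alpha_i^-|\ge|\alpha_i|$) converts the bound-multiplier contribution into $\ell_1$ terms. Absorbing the fixed data-driven coefficients $-1/n$ and $+e^\varepsilon/n$ via the reparameterizations $\alpha_x\leftarrow\alpha_x+\mathds{1}/n$, $\alpha_y\leftarrow\alpha_y-(e^\varepsilon/n)\mathds{1}$ produces exactly $M\bigl(\|\alpha_x+\mathds{1}/n\|_1+\|\alpha_y-(e^\varepsilon/n)\mathds{1}\|_1\bigr)$. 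All that remains in the Lagrangian is a quadratic in $A$ driven by the data-spanned operator $\Phi\,\mathrm{Diag}(\alpha)\,\Phi^\top$, where $\Phi$ collects the $\phi(z_i)$ as columns.

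The core computation is the inner minimization of this quadratic over $A\succcurlyeq 0$: it is a projection of $-(2\tau)^{-1}\Phi\,\mathrm{Diag}(\alpha)\,\Phi^\top$ onto the PSD cone. Passing to the Cholesky basis $\bK=\bV^\top\bV$ identifies the relevant $n$-dimensional subspace on which this operator acts nontrivially, and inside that subspace the operator is similar to $\bV\,\mathrm{Diag}(\alpha)\,\bV^\top$. The PSD projection then becomes precisely the negative-part operator $[\cdot]_-$ on this $n\times n$ matrix, and the value of the projection squared gives the final dual term $\tfrac{1}{2\tau}\|[\bV\,\mathrm{Diag}(\alpha)\,\bV^\top]_-\|_F^2$. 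Strong duality is justified by Slater's condition (feasibility of $A=0$), which gives equality of primal and dual optima.

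For part (b) I would invoke KKT stationarity: the optimal $A^\star$ equals the PSD projection of $-\tau^{-1}\Phi\,\mathrm{Diag}(\alpha^\star)\,\Phi^\top$, and transporting this projection back through $\bV^{-1}$ into the coefficient representation $A^\star=\sum_{i,j}\bB_{ij}\phi(z_i)\phi(z_j)^\top$ of Theorem~\ref{thm:primal-hs} yields $\bB=\tau^{-1}\bV^{-1}[\bV\,\mathrm{Diag}(\alpha^\star)\,\bV^\top]_-\bV^{-\top}$; positive semidefiniteness of $\bB$ is automatic since $[\cdot]_-$ is PSD by construction and a similarity transform by $\bV^{-1}$ preserves this. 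The main obstacle will be tracking the change of basis via $\bV$ so that the infinite-dimensional PSD projection in operator space translates cleanly to the finite negative-part operator on $n\times n$ matrices, and justifying the representer-style reduction that confines the optimizer to the span of $\{\phi(z_i)\}_{i=1}^{2n}$ so that the dual is genuinely finite-dimensional in $\alpha$ rather than in an infinite-dimensional Lagrange multiplier.
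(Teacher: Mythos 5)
Your proposal is correct in substance but takes a genuinely different, self-contained route from the paper. The paper's proof is essentially a two-line reduction: it writes the objective as $L(g(z_1),\dots,g(z_{2n}))+\tau\|A\|_F$ for a linear loss $L$ with domain $[-M,M]^{2n}$, invokes Theorem~2 of \citet{MBR20} as a black box to obtain the dual template $\sup_\alpha -L^*(\alpha)-\frac{1}{2\tau}\|[\bV\,\mathrm{Diag}(\alpha)\bV^\top]_-\|_F^2$ together with the primal-recovery map of part (b), and then only computes the conjugate $L^*$ explicitly (the separate \Cref{lem:conjugate-hs}), which is where the $M$-weighted $\ell_1$ terms come from. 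You instead re-derive the content of that cited theorem from first principles: Lagrangian/Fenchel duality with explicit bound multipliers, the inner PSD-cone projection yielding the negative-part operator in the Cholesky basis, and KKT stationarity for part (b). The two formalizations of the $M$ factor are equivalent (your implicit constraints $|g(z_i)|\le M$ with multipliers $\alpha_i^\pm$ versus the paper's restriction of the domain of $L$ to the box, whose conjugate is the same support-function expression), and your observation that such a bound \emph{must} be present is a fair reading, since Primal~1 as literally written contains no bound constraint and would otherwise produce no $M$ term. What your approach buys is transparency and independence from the external reference; what it costs is that the two steps you yourself flag as the main obstacles --- the representer-style reduction confining the optimizer to $\mathrm{span}\{\phi(z_i)\}$ so the dual is finite-dimensional, and the identification of the PSD projection with $[\cdot]_-$ after the change of basis $\bK=\bV^\top\bV$ --- are precisely the nontrivial content that the citation to \citet{MBR20} supplies for free, so a complete write-up along your lines would be substantially longer than the paper's.
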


\begin{proof}
   Define the loss function $L$ on $2n$ variables$$L(x_1, ..., x_n, y_1, ..., y_n) = \frac{e^{\varepsilon}}{n}\sum_{i=1}^ng(y_i) - \frac{1}{n}\sum_{i=1}^ng(x_i).$$ It follows from Theorem 2 in \cite{MBR20} that the dual problem of Primal 1 with objective  $$\min_{\bA \succcurlyeq 0} \quad  L(x_1, ...x_n, y_1, ..., y_n) + \tau\|A\|_F,$$ is defined by 

   $$\sup_{\alpha \in \mathbb{R}^n} - L^*(\alpha) - \frac{1}{2\tau} \|[\bV \text{Diag}(\alpha)\bV^{\top} ]_{-}\|^2_{F}.$$

   replacing $L^*$ by the conjugate in \Cref{lem:conjugate-hs} yields the desired result. 
\end{proof}    

\begin{lemma}
\label{lem:conjugate-hs}
Let $L:[-M, M]^{2n} \to \mathbb{R}$ be a loss function defined as 

\begin{equation}
    L(x_1, ..., x_n, y_1, ..., y_n) = \frac{e^{\varepsilon}}{n}\sum_{i=1}^ng(y_i) - \frac{1}{n}\sum_{i=1}^ng(x_i).
\end{equation}

Then its convex conjugate $L^{*}:\mathbb{R}^{2n}(\alpha) \to \mathbb{R}$ is defined as 

\begin{equation}
    \label{eq:loss-conjugate}
    L^*(\alpha_x, \alpha_y) = M (\|\alpha_x + \mathds{1}/n\|_1 + \|\alpha_y - \frac{e^{\varepsilon}}{n}\mathds{1}\|_1)
\end{equation}
\end{lemma}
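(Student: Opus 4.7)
The plan is to compute $L^\ast$ directly from its Fenchel definition, exploiting the fact that $L$ is simply a linear functional on the box $[-M,M]^{2n}$. First I would write
\begin{equation*}
L^\ast(\alpha_x,\alpha_y) \;=\; \sup_{(x,y)\in[-M,M]^{2n}} \Bigl\langle \alpha_x,x\Bigr\rangle + \Bigl\langle \alpha_y,y\Bigr\rangle - L(x,y),
\end{equation*}
and substitute the definition of $L$ to obtain
\begin{equation*}
L^\ast(\alpha_x,\alpha_y) \;=\; \sup_{(x,y)\in[-M,M]^{2n}} \sum_{i=1}^{n}\Bigl(\alpha_{x,i}+\tfrac{1}{n}\Bigr)x_i \;+\; \sum_{i=1}^{n}\Bigl(\alpha_{y,i}-\tfrac{e^{\varepsilon}}{n}\Bigr)y_i.
\end{equation*}

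Next, since the objective is a separable linear function and the feasible set is the product $[-M,M]^{2n}$, I would decouple the supremum coordinate-by-coordinate. For each scalar subproblem of the form $\sup_{t\in[-M,M]} ct$, the maximum is attained at $t = M\,\mathrm{sign}(c)$, yielding the value $M|c|$. Applying this to every coordinate and summing recovers
\begin{equation*}
L^\ast(\alpha_x,\alpha_y) \;=\; M\sum_{i=1}^{n}\Bigl|\alpha_{x,i}+\tfrac{1}{n}\Bigr| \;+\; M\sum_{i=1}^{n}\Bigl|\alpha_{y,i}-\tfrac{e^{\varepsilon}}{n}\Bigr| \;=\; M\Bigl(\|\alpha_x+\mathds{1}/n\|_1 + \|\alpha_y - \tfrac{e^{\varepsilon}}{n}\mathds{1}\|_1\Bigr),
\end{equation*}
which is the claimed identity.

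There is no real obstacle here; the lemma is a textbook calculation once one observes that $L$ is linear and the constraint set is a box. The only minor subtlety worth flagging in the write-up is ensuring that the coordinate-wise separability is valid, which follows from the product structure of $[-M,M]^{2n}$, and confirming that $L^\ast$ is finite on all of $\mathbb{R}^{2n}$ (so no $+\infty$ appears) because the feasible set is compact. The formula for the conjugate of a box-constrained linear function, $f(t)=ct$ on $[-M,M]$ giving $f^\ast(\alpha)=M|\alpha-c|$, is the only computational ingredient, and the rest is bookkeeping.
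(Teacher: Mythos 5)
Your proposal is correct and follows essentially the same route as the paper: write out the Fenchel conjugate, use linearity and the product structure of the box to separate coordinates, and apply $\sup_{t\in[-M,M]}ct=M|c|$ termwise. If anything, your sign bookkeeping for the $x$- and $y$-blocks is more consistent with the lemma's stated argument ordering than the paper's own substitution, but the argument is the same.
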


\begin{proof}

The convex conjugate $L^*(\alpha)$ for function $L(x)$ is defined as:
\begin{equation*}
    L^*(\alpha) = \sup_{x \in [-M, M]^{2n}} \left( \langle \alpha, x \rangle - L(x) \right)
\end{equation*}

Substituting $L(x)$ into the definition and grouping terms by each $x_i$:
\begin{align*}
    L^*(\alpha) &= \sup_{x \in [-M, M]^{2n}} \left( \sum_{i=1}^{2n} \alpha_i x_i - \left( \frac{e^{\varepsilon}}{n} \sum_{i=1}^{n} x_i - \frac{1}{n} \sum_{i=n+1}^{2n} x_i \right) \right) \\
    &= \sup_{x \in [-M, M]^{2n}} \left( \sum_{i=1}^{n} \alpha_i x_i - \frac{e^{\varepsilon}}{n} \sum_{i=1}^{n} x_i + \sum_{i=n+1}^{2n} \alpha_i x_i + \frac{1}{n} \sum_{i=n+1}^{2n} x_i \right) \\
    &= \sup_{x \in [-M, M]^{2n}} \left( \sum_{i=1}^{n} \left(\alpha_i - \frac{e^{\varepsilon}}{n}\right)x_i + \sum_{i=n+1}^{2n} \left(\alpha_i + \frac{1}{n}\right)x_i \right)
\end{align*}

Since the objective is a sum of separable terms, the supremum of the sum is the sum of the suprema:
\begin{equation*}
    L^*(\alpha) = \sum_{i=1}^{n} \sup_{x_i \in [-M, M]} \left(\alpha_i - \frac{e^{\varepsilon}}{n}\right)x_i + \sum_{i=n+1}^{2n} \sup_{x_i \in [-M, M]} \left(\alpha_i + \frac{1}{n}\right)x_i
\end{equation*}
For any constant $c$, the solution to $\sup_{z \in [-M, M]} (c \cdot z)$ is $M \cdot |c|$. Applying this, the supremum for the first term is $M \left|\alpha_i - \frac{e^{\varepsilon}}{n}\right|$ and for the second is $M \left|\alpha_i + \frac{1}{n}\right|$.

Substituting these back and simplifying gives the final result in both summation and vector notation:
\begin{align*}
    L^*(\alpha) &= \sum_{i=1}^{n} M \left|\alpha_i - \frac{e^{\varepsilon}}{n}\right| + \sum_{i=n+1}^{2n} M \left|\alpha_i + \frac{1}{n}\right| \\
    &= M \left( \sum_{i=1}^{n} \left|\alpha_i - \frac{e^{\varepsilon}}{n}\right| + \sum_{i=n+1}^{2n} \left|\alpha_i + \frac{1}{n}\right| \right) \\
    &= M \left( \left\|\alpha_y - \frac{e^{\varepsilon}}{n}\mathds{1}\right\|_1 + \left\|\alpha_x + \frac{1}{n}\mathds{1}\right\|_1 \right).
\end{align*}

\end{proof}

 \paragraph{Limitations of the Direct Optimization Approach. } 
 
 In practice, we do not use the direct optimization approach described above. The RKHS function class it employs (see \Cref{eq:hstau}) is not suited for representing the potentially discontinuous Hockey-Stick witness function from \Cref{eq:hswitness}.

\Cref{fig:witness-direct} empirically illustrates this limitation using two Gaussian distributions, $P = \mathcal{N}(0,1)$ and $Q = \mathcal{N}(3,1)$. The figure shows that the direct method fails to capture the sharp, non-linear structure of the true witness function, yielding an overly smooth approximation. In contrast, the threshold-based method from \Cref{sec:statistics} provides a much better fit. We present this direct method nonetheless, as it could be promising for future work exploring more suitable function classes.

\bigskip

\begin{figure}[h]
    \centering    \includegraphics[width=0.7\linewidth]{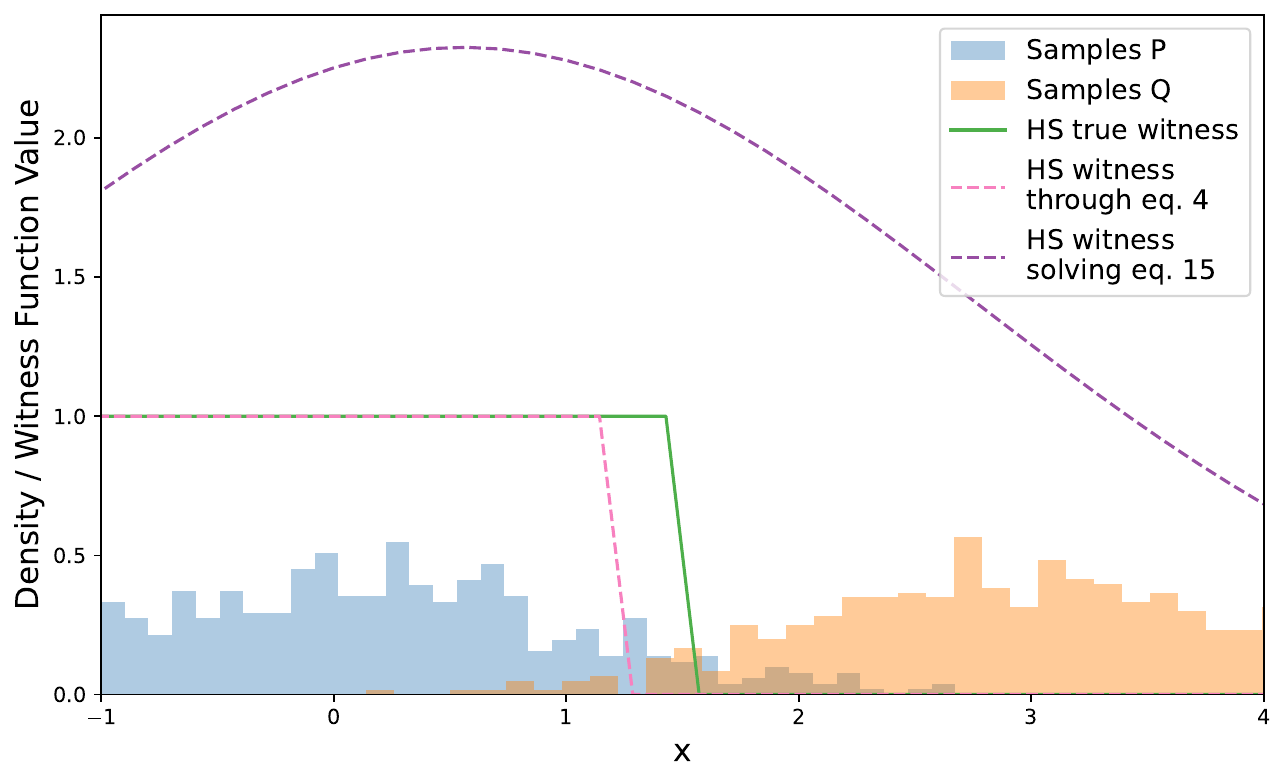}
    \caption{Hockey-Stick witness function plots for two Gaussian distributions, $P=\mathcal{N}(0,1)$ and $Q=\mathcal{N}(3,1)$. The true witness $g^\star(x) = \mathds{1}(\frac{dP}{dQ}(x)\geq \gamma)$ is plotted in green (HS true witness). The witness estimated via our proposed methods, i.e., $\mathds{1}({\widehat{r}_\lambda(x) \geq \gamma})$ as in \Cref{eq:hs_estimator} is plotted in pink using the estimator of \Cref{eq:r_lambda}. The witness obtained via the direct optimization framework of \Cref{sec:alternatives-hs} (\Cref{eq:dual}) is plotted in purple. HS witness estimation solving \Cref{eq:dual} fails to capture the sharp discontinuity due to the smoothing effect of the RKHS function class. }
    \label{fig:witness-direct}
\end{figure}

\clearpage

\section{Proofs}
\label{sec:proofs}

In this section, we provide the proofs of all results presented in \Cref{sec:statistics,sec:testfdiv}: 
\begin{itemize}
\item \Cref{proof1}: proof of \Cref{res:witness_convergence},
\item \Cref{proof2}: proof of \Cref{res:fdiv_convergence},
\item \Cref{proof3}: proof of \Cref{res:asymptotic_power},
\item \Cref{proof4}: proof of \Cref{res:non_asymptotic_power}.
\end{itemize}

We start by recalling some notation.
We write $a\lesssim b$ if there exists some constant $C>0$ (in our setting independent of the sample sizes) such that $a\leq C b$.
We note that for any constant $a\geq 1$ and $\eta\in(0,e^{-1})$, we have
$$
\ln(a/\eta) 
= 
\ln(a) + \ln(1/\eta)
\leq (\ln(a) + 1) \ln(1/\eta)
\lesssim
\ln(1/\eta).
$$
As such, events holding with probability $1-\eta/a$ for any $a\geq 1$ result in the same rate $\ln(1/\eta)$.
Rigorously, if we have $K$ events each holding with probability $1-\eta/K$, then they all hold simultaneously with probability $1-\eta$ and the rate for each is 
$
\ln(K/\eta)
\lesssim 
\ln(1/\eta)
$.
Knowing this, for simplicity, we simply write that each event holds \emph{with probability $1-\eta$} instead of $1-\eta/K$ for a specified number of events $K$. This allows to be flexible in the number of events that are considered (which changes often), and it results in the same final rate in any case.

We emphasize the relevance of the work of \citet{xu2022importance}, who establish convergence results for density ratio estimation relying on the kernel-based unconstrained least-squares importance fitting method of \citet{kanamori2012statistical}.
While their results are of great independent interest, they are not applicable to our setting, hence, justifying the need for our thorough convergence analysis. 
Their rates are derived under regularity assumptions which differ from ours, and hence, both rates are not directly comparable.

\subsection{Proof of \Cref{res:witness_convergence}}
\label{proof1}

Recall that \Cref{res:witness_convergence} states that, 
under \Cref{assump1},
for $\lambda_{N\!,\theta} = N^{-1/2(\theta+1) }$ where $N=\min(m,n)$, it holds with probability at least $1-\eta$ that
\begin{equation}
\label{dpdq-r1}
\left\|
\frac{dP}{dQ}-\widehat{r}_{\lambda_{N\!,\theta}}
\right\|_{\mathcal H}
\lesssim 
N^{-\frac{\theta-1}{2(\theta+1)}}
\sqrt{\ln(1/\eta)}
\end{equation}
for $\eta\in(0,e^{-1})$, with the assumption that the kernel is bounded by $K$ everywhere and that $\mu_P-\mu_Q\in\mathrm{Ran}(\Sigma_Q^\theta )$ for some $\theta\in (1,2]$.

\begin{proof}[Proof of \Cref{res:witness_convergence}]
Recall that the kernel mean embeddings $\mu_P$ and $\mu_Q$, as well as the covariance operator $\Sigma_Q$, are defined in \Cref{subsec:kernel}.

We follow the work of \citet{chen2024regularized}.
Let
$$
\mathcal F(h) 
\coloneqq 
\int h \,\mathrm{d}P
-\int \left(h+\frac{h^2}{4}\right) \,\mathrm{d}Q
$$
and
$$
\widehat{\mathcal F}(h) 
\coloneqq 
\frac{1}{m}\sum_{i=1}^m h(X_i) 
- 
\frac{1}{n}\sum_{i=1}^n \left(h(Y_i)+\frac{h(Y_i)^2}{4}\right).
$$
Let $N\coloneqq \min(m,n)$.
Then, let
\begin{align*}
h_0 
&\coloneqq 
\argmax_{h\in\mathcal H}
\mathcal F(h) 
= 2\left(\frac{\mathrm{d}P}{\mathrm{d}Q}-1\right)
\overset{(\star)}{=} 2\Sigma_Q^{-1}(\mu_P-\mu_Q),\\
h_\lambda 
&\coloneqq 
\argmax_{h\in\mathcal H}
\mathcal F(h)
- \frac{\lambda}{4}\|h\|_{\mathcal H}^2
= 2\big(\Sigma_Q+\lambda I\big)^{-1}(\mu_P-\mu_Q),\\
\widehat{h}_\lambda 
&\coloneqq 
\argmax_{h\in\mathcal H}
\widehat{\mathcal F}(h) 
- \frac{\lambda}{4}\|h\|_{\mathcal H}^2
= 2\big(\Sigma_{\widehat Q}+\lambda I\big)^{-1}(\mu_{\widehat P}-\mu_{\widehat Q}),
\end{align*}
where $(\star)$ holds since, for all $f\in\mathcal H$, it holds that
$$
\left\langle f, \Sigma_Q \left( \frac{\mathrm{d}P}{\mathrm{d}Q} - 1 \right)\right\rangle
=
\int f  \left[ \frac{\mathrm{d}P}{\mathrm{d}Q} - 1 \right]  \,\mathrm{d}Q
=
\int f  \,\mathrm{d}P - \int f  \,\mathrm{d}Q
=
\langle f, \mu_P-\mu_Q\rangle,
$$
implying that $\Sigma_Q \!\left( \frac{\mathrm{d}P}{\mathrm{d}Q} - 1 \right)=\mu_P-\mu_Q$.
By \citet[Proposition 6.1]{chen2024regularized}, we have
$$
\widehat{h}_\lambda
= 
2\left(\widehat{r}_\lambda-1\right).
$$
Hence, proving \Cref{dpdq-r1}, is equivalent to proving that, with probability at least $1-\eta$, it holds that
\begin{equation}
\label{eq:witness_convergence_2}
\big\|
h_0-\widehat{h}_{\lambda_{N\!,\theta}}
\big\|_{\mathcal H}
\lesssim 
N^{-\frac{\theta-1}{2(\theta+1)}} 
\sqrt{\ln(1/\eta)}
\end{equation}
where
$\lambda_{N\!,\theta} = N^{-1/2(\theta+1) }$.
We start with the decomposition
$$
\big\|\widehat{h}_{\lambda} - h_0\big\|_{\mathcal H}
\leq 
\big\|{h}_{\lambda} - h_0\big\|_{\mathcal H}
+
\big\|\widehat{h}_{\lambda} - h_{\lambda}\big\|_{\mathcal H}.
$$
Assume that $\mu_P-\mu_Q\in\mathrm{Ran}(\Sigma_Q^\theta )$ for some $\theta\in(1,2]$.
Then, applying \Cref{lem_h_1,lem_h_2}, we obtain
\begin{equation}
\label{aa1}
\big\|\widehat{h}_{\lambda} - h_0\big\|_{\mathcal H}
\lesssim
\lambda^{\theta-1}
+
\frac{1}{\lambda^2}\sqrt{\frac{{\ln(1/\eta)}}{{N}}}.
\end{equation}
holding with proability at least $1-\eta$.
Letting $\lambda_{N\!,\theta}=N^{-1/2(\theta+1) }$ in order to equate both terms, we get
$$
\big\|\widehat{h}_{\lambda_{N\!,\theta}} - h_0\big\|_{\mathcal H}
\lesssim 
N^{-\frac{\theta-1}{2(\theta+1)}} 
\sqrt{\ln(1/\eta)}.
$$
also holding with proability at least $1-\eta$.
This concludes the proof of \Cref{res:witness_convergence}.
\end{proof}

\begin{proposition}
\label{lem_h_1}
Assume that the kernel is bounded by $K$ and that $\mu_P-\mu_Q\in\mathrm{Ran}(\Sigma_Q^\theta )$ for some $\theta\in(1,2]$.
Then, we have
$$
\big\|{h}_{\lambda} - h_0\big\|_{\mathcal H}
\lesssim \lambda^{\theta-1}.
$$
\end{proposition}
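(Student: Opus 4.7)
The plan is a standard source-condition / bias analysis for Tikhonov regularization in the operator $\Sigma_Q$. First I would exploit the closed-form expressions
\[
h_0 = 2\Sigma_Q^{-1}(\mu_P-\mu_Q), \qquad h_\lambda = 2(\Sigma_Q+\lambda I)^{-1}(\mu_P-\mu_Q),
\]
and combine them using the resolvent identity $A^{-1}-B^{-1} = A^{-1}(B-A)B^{-1}$ with $A=\Sigma_Q$ and $B=\Sigma_Q+\lambda I$, giving
\[
h_0 - h_\lambda \;=\; 2\lambda\,\Sigma_Q^{-1}(\Sigma_Q+\lambda I)^{-1}(\mu_P-\mu_Q).
\]

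Next I would invoke the source condition $\mu_P-\mu_Q \in \mathrm{Ran}(\Sigma_Q^\theta)$, writing $\mu_P-\mu_Q = \Sigma_Q^\theta v$ for some $v\in\mathcal H$. Since $\Sigma_Q$ is self-adjoint, positive and compact, its functional calculus commutes with all the factors above, so that
\[
h_0 - h_\lambda \;=\; 2\lambda\,\Sigma_Q^{\theta-1}(\Sigma_Q+\lambda I)^{-1} v.
\]
Taking norms in $\mathcal H$ yields
\[
\|h_0-h_\lambda\|_{\mathcal H}
\;\leq\; 2\,\bigl\|\lambda\,\Sigma_Q^{\theta-1}(\Sigma_Q+\lambda I)^{-1}\bigr\|_{\mathrm{op}}\,\|v\|_{\mathcal H},
\]
and since $\|v\|_{\mathcal H}$ is a constant independent of $\lambda$, it suffices to bound the operator-norm factor by $\lambda^{\theta-1}$.

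The remaining work is a spectral estimate for the scalar function
\[
\varphi_\lambda(t) \;\coloneqq\; \frac{\lambda\, t^{\theta-1}}{t+\lambda}, \qquad t\geq 0,
\]
over the spectrum of $\Sigma_Q$, which by the spectral theorem controls the operator norm. With $\theta-1 \in (0,1]$, a two-case analysis does the job: for $t\leq\lambda$, $\varphi_\lambda(t) \leq t^{\theta-1} \leq \lambda^{\theta-1}$, and for $t>\lambda$, $\varphi_\lambda(t) \leq \lambda\, t^{\theta-2} \leq \lambda\,\lambda^{\theta-2} = \lambda^{\theta-1}$ since $\theta-2\leq 0$. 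Hence $\sup_{t\geq 0}\varphi_\lambda(t) \leq \lambda^{\theta-1}$, yielding $\|h_0-h_\lambda\|_{\mathcal H} \lesssim \lambda^{\theta-1}$ as claimed.

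The only subtlety I anticipate is the boundary behavior at $t=0$: if $\Sigma_Q$ is not invertible on the whole of $\mathcal H$, then $h_0 = 2\Sigma_Q^{-1}(\mu_P-\mu_Q)$ must be interpreted on $\mathrm{Ran}(\Sigma_Q)$ (closure), and the source condition with $\theta>1$ ensures $\mu_P-\mu_Q$ lies in the appropriate range so that $\Sigma_Q^{\theta-1}v$ is a well-defined element of $\mathcal H$; once this is noted, the spectral bound above goes through unchanged. This is the main conceptual point; the rest is routine operator calculus.
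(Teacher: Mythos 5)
Your proposal is correct and follows essentially the same route as the paper's proof: both reduce to bounding the scalar function $\lambda t^{\theta-1}/(t+\lambda)$ by $\lambda^{\theta-1}$ over the spectrum of $\Sigma_Q$ (the paper via the eigenbasis expansion and a weighted AM--GM inequality, you via operator functional calculus and a two-case split at $t=\lambda$), and both verifications are valid. Your closing remark on the interpretation of $\Sigma_Q^{-1}$ at the bottom of the spectrum is a fair point that the paper handles only implicitly through the eigen-expansion.
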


\begin{proof}[Proof of \Cref{lem_h_1}]
First, note that
$$
\big\|{h}_{\lambda} - h_0\big\|_{\mathcal H}
= 2\left\|\left(\big(\Sigma_{Q}+\lambda I\big)^{-1}-\Sigma_{Q}^{-1}\right)(\mu_P-\mu_Q)\right\|_{\mathcal H}
$$
does not necessarily converge to 0 as $\lambda$ tends to 0 because the operator $\Sigma_Q^{-1}$ is unbounded.
Now, assuming that $\mu_P-\mu_Q\in\mathrm{Ran}(\Sigma_Q^\theta )$ for some $\theta\in(1,2]$, there exists $w\in\mathcal{H}$ such that $\mu_P-\mu_Q = \Sigma_Q^\theta  w$.
Let $\{\sigma_i,e_i\}_{i=1}^\infty$ be an orthonormal eigenbasis of $\Sigma_Q$ with $(\sigma_i)_{i=1}^\infty$ all non-negative.
Then, writing $w_i=\langle w, e_i\rangle$, we have
$$
\mu_P-\mu_Q
=
\sum_{i=1}^\infty
\sigma_i^\theta w_i e_i
$$
where $\sum_{i=1}^\infty w_i^2 <\infty$.
We obtain
\begin{align*}
\frac{1}{2}
\|{h}_{\lambda} - h_0\|_{\mathcal H}
&\leq 
\left\|\left(\big(\Sigma_{Q}+\lambda I\big)^{-1}-\Sigma_{Q}^{-1}\right)(\mu_P-\mu_Q)\right\|_{\mathcal H}\\ 
&=
\left\|
\sum_{i=1}^\infty
\left(
\frac{1}{\sigma_i+\lambda}-\frac{1}{\sigma_i}
\right)
\sigma_i^\theta w_i e_i
\right\|_{\mathcal H}\\
&\leq 
\left\|
\sum_{i=1}^\infty
\frac{\lambda}{(\sigma_i+\lambda)\sigma_i}
\sigma_i^\theta w_i e_i
\right\|_{\mathcal H}\\
&=
\left\|
\sum_{i=1}^\infty
\frac{\lambda^{2-\theta}\sigma_i^{\theta-1}}{(\sigma_i+\lambda)}
\lambda^{\theta-1} w_i e_i
\right\|_{\mathcal H}\\
&\leq
\lambda^{\theta-1} 
\left\|
\sum_{i=1}^\infty
w_i e_i
\right\|_{\mathcal H}\\
&=
\lambda^{\theta-1} 
\left\|
w
\right\|_{\mathcal H}\\
&\lesssim \lambda^{\theta-1}
\end{align*}
using the AM-GM inequality to get that 
${\lambda^{2-\theta}\sigma_i^{\theta-1}}\leq (2-\theta)\lambda + (\theta-1)\sigma_i\leq \lambda+\sigma_i$ for $\theta\in(1,2]$. This completes the proof.
\end{proof}

\begin{proposition}
\label{lem_h_2}
Let $N=\min(m,n)$, $\eta\in(0,e^{-1})$ and $\lambda\leq 1$.
With probability at least $1-\eta$, it holds that
$$
\big\|h_{\lambda}- \widehat{h}_{\lambda}\big\|_{\mathcal H}
\lesssim
\frac{1}{\lambda^2}\sqrt{\frac{{\ln(1/\eta)}}{{N}}}.
$$
\end{proposition}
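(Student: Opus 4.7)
\textbf{Plan of proof for \Cref{lem_h_2}.}
The plan is to use a standard resolvent identity to split the difference $h_\lambda - \widehat{h}_\lambda$ into a ``mean-embedding'' error and a ``covariance'' error, and then concentrate each of these terms separately using Hoeffding-type inequalities for Hilbert-space–valued random variables. Writing $A \coloneqq \Sigma_Q + \lambda I$, $B \coloneqq \Sigma_{\widehat Q} + \lambda I$, $x \coloneqq \mu_P - \mu_Q$, and $\widehat{x} \coloneqq \mu_{\widehat P} - \mu_{\widehat Q}$, so that $h_\lambda = 2A^{-1}x$ and $\widehat{h}_\lambda = 2B^{-1}\widehat{x}$, I would start from the identity
\begin{equation*}
A^{-1} x - B^{-1} \widehat{x} \;=\; A^{-1}(x - \widehat{x}) \;+\; A^{-1}(B-A)\,B^{-1}\widehat{x},
\end{equation*}
where $B - A = \Sigma_{\widehat Q} - \Sigma_Q$. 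Taking norms and using $\|A^{-1}\|_{\mathrm{op}},\,\|B^{-1}\|_{\mathrm{op}} \le 1/\lambda$, this gives
\begin{equation*}
\tfrac{1}{2}\bigl\|h_\lambda - \widehat{h}_\lambda\bigr\|_{\mathcal H}
\;\le\; \frac{1}{\lambda}\,\bigl\|x - \widehat{x}\bigr\|_{\mathcal H}
\;+\; \frac{1}{\lambda^2}\,\bigl\|\Sigma_Q - \Sigma_{\widehat Q}\bigr\|_{\mathrm{op}}\,\bigl\|\widehat{x}\bigr\|_{\mathcal H}.
\end{equation*}

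Next I would bound each factor on the right. The term $\|\widehat{x}\|_{\mathcal H} = \|\mu_{\widehat P} - \mu_{\widehat Q}\|_{\mathcal H}$ is deterministically at most $2\sqrt{K}$ since the kernel is bounded by $K$. For $\|x - \widehat{x}\|_{\mathcal H}$ I would use the triangle inequality and control each of $\|\mu_P - \mu_{\widehat P}\|_{\mathcal H}$ and $\|\mu_Q - \mu_{\widehat Q}\|_{\mathcal H}$ via Hoeffding's inequality in Hilbert space (equivalently, the bounded differences / Pinelis inequality), applied to i.i.d.\ bounded random elements $k(X_i,\cdot)$ and $k(Y_i,\cdot)$, each of norm at most $\sqrt{K}$. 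This yields the bound $\sqrt{K/m\,\ln(1/\eta)}$ and $\sqrt{K/n\,\ln(1/\eta)}$, respectively, each holding with probability at least $1-\eta$, so together (and since $N = \min(m,n)$) we get $\|x - \widehat{x}\|_{\mathcal H} \lesssim \sqrt{\ln(1/\eta)/N}$.

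For the operator norm term, I would bound $\|\Sigma_Q - \Sigma_{\widehat Q}\|_{\mathrm{op}} \le \|\Sigma_Q - \Sigma_{\widehat Q}\|_{\mathrm{HS}}$ and again apply a Hoeffding-type inequality in the Hilbert–Schmidt space of operators to the i.i.d.\ bounded rank-one operators $k(Y_i,\cdot)\otimes k(Y_i,\cdot)$, whose HS norm is at most $K$. This produces a bound of order $K\sqrt{\ln(1/\eta)/n} \lesssim \sqrt{\ln(1/\eta)/N}$ with probability at least $1-\eta$. Combining everything through a union bound and using $\lambda \le 1$ so that $1/\lambda \le 1/\lambda^2$ absorbs the first term into the second, we obtain
\begin{equation*}
\bigl\|h_\lambda - \widehat{h}_\lambda\bigr\|_{\mathcal H}
\;\lesssim\;
\frac{1}{\lambda}\sqrt{\frac{\ln(1/\eta)}{N}} + \frac{1}{\lambda^2}\sqrt{\frac{\ln(1/\eta)}{N}}
\;\lesssim\;
\frac{1}{\lambda^2}\sqrt{\frac{\ln(1/\eta)}{N}},
\end{equation*}
with probability at least $1-\eta$, as required.

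\textbf{Main obstacle.} The algebraic decomposition is routine; the only nontrivial ingredient is invoking the correct high-probability concentration inequalities for the mean embedding and covariance operator in a Hilbert space. These are standard (e.g., Pinelis-style inequalities, or the Rosasco–Belkin–De~Vito bounds for covariance operators) under the boundedness of the kernel, so the step requires care in bookkeeping (union bounds across the four events and the precise form of the sub-Gaussian constants) but no new ideas. One must also verify that the resulting dependence on $\lambda$ is indeed $1/\lambda^2$ and not better, which is dictated by the second term where two resolvents appear multiplicatively.
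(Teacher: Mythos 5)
Your proposal is correct and follows essentially the same route as the paper: the same resolvent-type decomposition into a covariance term (bounded by $\lambda^{-2}\|\Sigma_{\widehat Q}-\Sigma_Q\|$ times a deterministically bounded mean-embedding difference) and a mean-embedding term (bounded by $\lambda^{-1}\|(\mu_P-\mu_Q)-(\mu_{\widehat P}-\mu_{\widehat Q})\|$), each concentrated by standard Hilbert-space Hoeffding-type inequalities, then combined using $\lambda\le 1$. The only cosmetic difference is which factors carry hats in the two terms of the splitting (the paper uses $(A^{-1}-B^{-1})x + B^{-1}(x-\widehat x)$ and cites a lemma of \citet{chen2024regularized} for the concentration steps), which changes nothing in the bounds.
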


\begin{proof}[Proof of \Cref{lem_h_2}]
We use the decomposition
\begin{align*}
    \frac{1}{2}\big\|h_{\lambda}- \widehat{h}_{\lambda}\big\|_{\mathcal H}
=~
&\big\|
\big(\Sigma_Q+\lambda I\big)^{-1}(\mu_P-\mu_Q)
-
\big(\Sigma_{\widehat Q}+\lambda I\big)^{-1}(\mu_{\widehat P}-\mu_{\widehat Q})
\big\|_{\mathcal H}
\\
\leq~
&\big\|
\big(\Sigma_Q+\lambda I\big)^{-1}(\mu_P-\mu_Q)
-
\big(\Sigma_{\widehat Q}+\lambda I\big)^{-1}(\mu_P-\mu_Q)
\big\|_{\mathcal H}\\
&+
\big\|
\big(\Sigma_{\widehat Q}+\lambda I\big)^{-1}(\mu_P-\mu_Q)
-
\big(\Sigma_{\widehat Q}+\lambda I\big)^{-1}(\mu_{\widehat P}-\mu_{\widehat Q})
\big\|_{\mathcal H}.
\end{align*}
The two terms can be bounded in a similar fashion to \citet[Equations 70 and 71]{chen2024regularized}.
The first term is bounded as
\begin{align*}
&\big\|
\big(\Sigma_Q+\lambda I\big)^{-1}(\mu_P-\mu_Q)
-
\big(\Sigma_{\widehat Q}+\lambda I\big)^{-1}(\mu_P-\mu_Q)
\big\|_{\mathcal H}\\
\leq~ &\Big\|
    \big(\Sigma_Q+\lambda I\big)^{-1}
    -
    \big(\Sigma_{\widehat Q}+\lambda I\big)^{-1}
\Big\|_{\mathrm{HS}}
\big\|\mu_P-\mu_Q
\big\|_{\mathcal H}\\
\lesssim~ &\Big\|
    \big(\Sigma_Q+\lambda I\big)^{-1}
    \left(
        \big(\Sigma_{\widehat Q}+\lambda I\big)
        -
        \big(\Sigma_{Q}+\lambda I\big)
    \right)
    \big(\Sigma_{\widehat Q}+\lambda I\big)^{-1}
\Big\|_{\mathrm{HS}}\\
\lesssim~ &\Big\|
    \big(\Sigma_Q+\lambda I\big)^{-1}
\Big\|_{\mathrm{op}}
\Big\|
    \big(\Sigma_{\widehat Q}+\lambda I\big)^{-1}
\Big\|_{\mathrm{op}}
\Big\|
    \Sigma_{\widehat Q}-\Sigma_Q
\Big\|_{\mathrm{HS}}\\
\lesssim~ &\frac{1}{\lambda^2}
\Big\|
    \Sigma_{\widehat Q}-\Sigma_Q
\Big\|_{\mathrm{HS}}\\
\lesssim~ &\frac{1}{\lambda^2}
\sqrt{\frac{\ln(1/\eta)}{N}}
\end{align*}
using the fact that $\big\|\mu_P-\mu_Q
\big\|_{\mathcal H}\leq 2\sqrt{K}$, and
where the last equality holds with probability at least $1-\eta/2$ by \citet[Lemma B.8]{chen2024regularized}.
Using that same reference, we can bound the second term as
\begin{align*}
&\big\|
\big(\Sigma_{\widehat Q}+\lambda I\big)^{-1}(\mu_P-\mu_Q)
-
\big(\Sigma_{\widehat Q}+\lambda I\big)^{-1}(\mu_{\widehat P}-\mu_{\widehat Q})
\big\|_{\mathcal H}\\
\leq~ &\Big\|
    \big(\Sigma_{\widehat Q}+\lambda I\big)^{-1}
\Big\|_{\mathrm{op}}
\big\|
(\mu_{P}-\mu_{Q})
-
(\mu_{\widehat P}-\mu_{\widehat Q})
\big\|_{\mathcal H}\\
\lesssim~ &\frac{1}{\lambda}
\left(
    \big\|
    \mu_{P}-\mu_{\widehat P}
    \big\|_{\mathcal H}
    +
    \big\|
    \mu_{Q}-\mu_{\widehat Q}
    \big\|_{\mathcal H}
\right)\\
\lesssim~ &\frac{1}{\lambda}
\sqrt{\frac{\ln(1/\eta)}{N}}
\end{align*}
where the last equality holds with probability at least $1-\eta/2$. 
We conclude that, with probability at least $1-\eta$, it holds that
\begin{align*}
\big\|h_{\lambda}- \widehat{h}_{\lambda}\big\|_{\mathcal H}
\lesssim 
\left(
\frac{1}{\lambda^2}
+\frac{1}{\lambda}
\right)
\sqrt{\frac{\ln(1/\eta)}{N}}
\lesssim \frac{1}{\lambda^2}
\sqrt{\frac{\ln(1/\eta)}{N}}
\end{align*}
since $\lambda\leq 1$.
\end{proof}

\subsection{Proof of \Cref{res:fdiv_convergence}}
\label{proof2}

Recall that \Cref{res:fdiv_convergence} states that,
under \Cref{assump1,assump2},
for $\lambda_{N\!,\theta} = N^{-1/2(\theta+1) }$, we have
$$
\big|
\widehat{D}_{f,\lambda_{N\!,\theta}}-D_f
\big|
\lesssim 
\left(
N^{-\frac{\theta-1}{2(\theta+1)}} 
+ \widetilde{N}^{-1/2}
\right)
\sqrt{\ln(1/\eta)}
$$
holding with probability at least $1-\eta$ over the $N$ samples used for estimating the ratio $\frac{dP}{dQ}$.

\begin{proof}[Proof of \Cref{res:fdiv_convergence} for $f$-divergences with $f$ twice continuously differentiable on $(0,\infty)$]
    In that setting, we assume that $\frac{dP}{dQ}$ belongs to $[c,C]$, for some $0<c<1<C<\infty$ and that $\mathsf{C}N^{-\frac{\theta-1}{2(\theta+1)}} \!\sqrt{\ln(1/\eta)} \leq c/2$ with $\mathsf{C}$ as in \Cref{C}.
Using \Cref{res:witness_convergence}, we can then ensure that, with probability at least $1-\eta$, for any $u\in\mathcal Z$, we have $\frac{dP}{dQ}(u)$ and $\widehat{r}_{\lambda_{N\!,\theta}}(u)$ belonging to $[\overline{c},\overline{C}]$ for $\overline{c}=c/2$ and $\overline{C}=C+c/2$.

First, we prove that $f'$ and $f^*\circ f'$ are both Lipschitz on $[\overline{c},\overline{C}]$.
Since $f'$ is continuously differentiable, it is also Lipschitz by the Mean Value Theorem, as for all $x,y\in[\overline{c},\overline{C}]$, we have
\begin{equation}
\label{lip}
|f'(x)-f'(y)|
~\leq~
|x-y| \sup_{\xi\in[\overline{c},\overline{C}]}|f''(\xi)|
~\lesssim ~
|x-y|
\end{equation}
since the supremum is finite as $f''$ is continuous on $[\overline{c},\overline{C}]$.
From the fact that $f$ is twice continuously differentiable and convex (the function defining an $f$-divergence is convex by definition), we deduce that the conjugate $f^*$ is continuously differentiable and strictly convex on the range of $f'$ with $(f^*)'(f'(x)) = x$ for all $x$.
Applying the Mean Value Theorem, we have
\begin{equation}
\label{lip2}
|f^*(f'(x))-f^*(f'(y))|
~\leq~
|f'(x)-f'(y)| \sup_{\xi\in\mathrm{Im}\big(f'\!\big|_{[\overline{c},\overline{C}]}\big)
}|(f^*)'(\xi)|~\lesssim ~
|x-y|
\end{equation}
as the supremum is equal to $\overline{C}$ and using \Cref{lip}.
This proves that $f'$ and $f^*\circ f'$ are both Lipschitz on $[\overline{c},\overline{C}]$.

We then have
\begin{equation}
\label{b1}
\begin{aligned}
&\big|
\widehat{D}_{f,\lambda_{N\!,\theta}}-D_f
\big| \\
\leq~
&\Bigg|
\frac{1}{\widetilde m}\sum_{i=1}^{\widetilde m} f'\!\big(\widehat{r}_{\lambda_{N\!,\theta}}(\widetilde{X}_i)\big) 
-
\mathbb{E}_{Z\sim P}\!\left[f'\!\left(\frac{dP}{dQ}(Z)\right)\right]
\Bigg|
+ 
\Bigg|
\frac{1}{\widetilde n}\sum_{j=1}^{\widetilde n} f^*\!\big(f'\!\big(\widehat{r}_{\lambda_{N\!,\theta}}(\widetilde{Y}_j)\big)\big)
-
\mathbb{E}_{W\sim Q}\!\left[f^*\!\left(f'\!\left(\frac{dP}{dQ}(W)\right)\right)\right]
\Bigg|.
\end{aligned}
\end{equation}
Since $f'$ and $f^*\circ f'$ are both Lipschitz, the two terms can be bounded similarly, hence, we focus on the first one, which is
\begin{align}
&\Bigg|
\frac{1}{\widetilde m}\sum_{i=1}^{\widetilde m} f'\!\big(\widehat{r}_{\lambda_{N\!,\theta}}(\widetilde{X}_i)\big) 
-
\mathbb{E}_{Z\sim P}\!\left[f'\!\left(\frac{dP}{dQ}(Z)\right)\right]
\Bigg|
\label{b2}
\\ 
\leq~
&
\frac{1}{\widetilde m}\sum_{i=1}^{\widetilde m} 
\Bigg|
f'\!\big(\widehat{r}_{\lambda_{N\!,\theta}}(\widetilde{X}_i)\big) 
-
f'\!\left(\frac{dP}{dQ}(\widetilde{X}_i)\right)
\Bigg|
+
\Bigg|
\frac{1}{\widetilde m}\sum_{i=1}^{\widetilde m} 
f'\!\left(\frac{dP}{dQ}(\widetilde{X}_i)\right)
-
\mathbb{E}_{Z\sim P}\!\left[f'\!\left(\frac{dP}{dQ}(Z)\right)\right]
\Bigg|
\label{b3}
\\ 
\leq~
&
\frac{1}{\widetilde m}\sum_{i=1}^{\widetilde m} 
\Bigg|
f'\!\big(\widehat{r}_{\lambda_{N\!,\theta}}(\widetilde{X}_i)\big) 
-
f'\!\left(\frac{dP}{dQ}(\widetilde{X}_i)\right)
\Bigg|
+
\sqrt{\frac{\ln(1/\eta)}{\widetilde N}}
\label{b4}
\\
\lesssim~
&\frac{1}{\widetilde m}\sum_{i=1}^{\widetilde m} 
\Bigg|
\widehat{r}_{\lambda_{N\!,\theta}}(\widetilde{X}_i)
-
\frac{dP}{dQ}(\widetilde{X}_i)
\Bigg|
+
\sqrt{\frac{\ln(1/\eta)}{\widetilde N}}
\\
\lesssim~
&
\left\|
\frac{dP}{dQ}-\widehat{r}_{\lambda_{N\!,\theta}}
\right\|_{\mathcal H}
+
\sqrt{\frac{\ln(1/\eta)}{\widetilde N}}\\
\lesssim~
&\left(
N^{-\frac{\theta-1}{2(\theta+1)}} 
+ \widetilde{N}^{-1/2}
\right)
\sqrt{\ln(1/\eta)}
\end{align}
where we have used Hoeffding's inequality and the fact that $f'$ takes bounded values on $[\overline{c},\overline{C}]$, as well as the rate of \Cref{dpdq-r1} from \Cref{res:witness_convergence} in the last inequality.
This gives the desired result.

\end{proof}

\begin{proof}[Proof of \Cref{res:fdiv_convergence} for the Total-Variation]
In that setting, for $X\sim P$ and $Y\sim Q$, we assume that the densities of $\widehat{r}_{\lambda_{N\!,\theta}}(X)$ and of $\widehat{r}_{\lambda_{N\!,\theta}}(Y)$ exist and are bounded on $[\gamma/2,3\gamma/2]$ by some $G>0$. We also assume that $\mathsf{C}N^{-\frac{\theta-1}{2(\theta+1)}} \!\sqrt{\ln(1/\eta)} \leq \gamma/2$ with $\mathsf{C}$ as in \Cref{C}.
For the Total-Variation, we have $\gamma=1$.

For the Total-Variation, we have $f(t) = \frac{1}{2}|t-1|$ which is not differentiable at $\gamma=1$. 
Everywhere else, we have $f'(t) = \frac{1}{2}\sign(t-1)$ and the optimal witness function indeed is $\frac{1}{2}\sign\!\left(\frac{dP}{dQ}-1\right)$. 
We choose to use the convention that $\sign(0) = 1$ and extend $f'(1)=\frac{1}{2}$. 
The convex conjugate $f^*$ is simply the identity on $\left\{-\frac{1}{2},\frac{1}{2}\right\}$.

The reasoning of \Cref{b1,b2,b3,b4} still holds, we are then left with the task of bounding the following term (for which we leverage \Cref{res:witness_convergence})
\begin{equation}
\label{c1}
\begin{aligned}
\frac{1}{\widetilde m}\sum_{i=1}^{\widetilde m} 
\Bigg|
f'\!\big(\widehat{r}_{\lambda_{N\!,\theta}}(\widetilde{X}_i)\big) 
-
f'\!\left(\frac{dP}{dQ}(\widetilde{X}_i)\right)
\Bigg|
&= 
\frac{1}{\widetilde m}\sum_{i=1}^{\widetilde m} 
\mathds{1}\!\left(
    f'\!\big(\widehat{r}_{\lambda_{N\!,\theta}}(\widetilde{X}_i)\big) 
\neq
f'\!\left(\frac{dP}{dQ}(\widetilde{X}_i)\right)
\right)\\ 
&=
\frac{1}{\widetilde m}\sum_{i=1}^{\widetilde m} 
\mathds{1}\!\left(
\gamma\in\left(
    \min\left\{\widehat{r}_{\lambda_{N\!,\theta}}(\widetilde{X}_i),\frac{dP}{dQ}(\widetilde{X}_i)\right\}
,
\max\left\{\widehat{r}_{\lambda_{N\!,\theta}}(\widetilde{X}_i),\frac{dP}{dQ}(\widetilde{X}_i)\right\}
\right]
\right)\\ 
&\leq
\frac{1}{\widetilde m}\sum_{i=1}^{\widetilde m} 
\mathds{1}\!\left(
    \big|\widehat{r}_{\lambda_{N\!,\theta}}(\widetilde{X}_i)-\gamma\big|
\leq 
\mathsf{C}N^{-\frac{\theta-1}{2(\theta+1)}}
\sqrt{\ln(1/\eta)}
\right)\\ 
&\leq
\mathbb{P}_{\widetilde{X}\sim P}\!\left(
\big|\widehat{r}_{\lambda_{N\!,\theta}}(\widetilde{X})-\gamma\big|
\leq
\mathsf{C}N^{-\frac{\theta-1}{2(\theta+1)}}
\sqrt{\ln(1/\eta)}
\right) + \sqrt{\frac{\ln(1/\eta)}{\widetilde N}}\\
&\leq 2G
\mathsf{C}N^{-\frac{\theta-1}{2(\theta+1)}}\sqrt{\ln(1/\eta)}
+ \sqrt{\frac{\ln(1/\eta)}{\widetilde N}}\\
&\lesssim
\left(
N^{-\frac{\theta-1}{2(\theta+1)}} 
+ \widetilde{N}^{-1/2}
\right)
\sqrt{\ln(1/\eta)}
\end{aligned}
\end{equation}
since $\mathsf{C}N^{-\frac{\theta-1}{2(\theta+1)}} \!\sqrt{\ln(1/\eta)} \leq \gamma/2$ implies that $\big[
\gamma-
\mathsf{C}N^{-\frac{\theta-1}{2(\theta+1)}}\sqrt{\ln(1/\eta)}
,
\gamma+
\mathsf{C}N^{-\frac{\theta-1}{2(\theta+1)}}\sqrt{\ln(1/\eta)}
\big]\subseteq [\gamma/2,3\gamma/2]$ on which the densities of $\widehat{r}_{\lambda_{N\!,\theta}}(X)$ and of $\widehat{r}_{\lambda_{N\!,\theta}}(Y)$ for $X\sim P$ and $Y\sim Q$,
are bounded by $G$.
This concludes the proof for the case of the Total-Variation.

\end{proof}

\begin{proof}[Proof of \Cref{res:fdiv_convergence} for the Hockey-Stick divergence]
In that setting, the assumptions are the same as for the Total-Variation (i.e., see \Cref{assump2}).

For the Hockey-Stick, we have $f(t) = \max\big(t-\gamma,0\big)$ which is not differentiable at $\gamma$. 
Everywhere else, we have $f'(t) = \mathds{1}(t\geq\gamma)$ and the optimal witness function indeed is $\mathds{1}\!\left(\frac{dP}{dQ}\geq\gamma\right)$.
We extend the definition of $f'$ to $f'(\gamma)=1$.
The convex conjugate is simply equal to $f^*(u)=\gamma u$. 

The exact same reasoning of \Cref{c1} holds using the $\gamma$ parameter of the Hockey-Stick divergence.
This concludes the proof.
\end{proof}

\subsection{Proof of \Cref{res:asymptotic_power}}
\label{proof3}

Recall that \Cref{res:asymptotic_power} claims that the proposed permutation test (\Cref{eq:permutation_test}), 
under \Cref{assump1,assump2},
is consistent in the sense that, for any fixed $P\neq Q$, we have
$$
\mathbb{P}(\mathrm{reject }~ H_0)\to 1
\textrm{ as } N,\widetilde{N}\to\infty.
$$

\begin{proof}[Proof of \Cref{res:asymptotic_power}]
Following \citet[Lemma 8 and Appendix E.6---see also \citealp{kim2025differentially}]{kim2023differentially}, it suffices to prove that the following two conditions are satisfied. 
First, for the fixed alternative $P\neq Q$, the estimator $\widehat{D}_{f,\lambda_{N\!,\theta}}$ needs to converge to some constant which is strictly positive.
This is a simple application of \Cref{res:fdiv_convergence} since it implies that $\widehat{D}_{f,\lambda_{N\!,\theta}}$ converges to $D_{f}(P\;\|\;Q)>0$.
The second condition is that, conditioned on the data, the permuted statistic converges to zero, where the randomness is taken with respect to the uniformly random draw of permutations. We prove this below.

Recall that we have access to samples 
$x_1,\dots,x_m,y_1,\dots,y_n$ 
used to estimate the ratio $\frac{dP}{dQ}$
and samples
$\widetilde{x}_{1},\dots,\widetilde{x}_{\widetilde m},\widetilde{y}_{1},\dots,\widetilde{y}_{\widetilde n}$ 
used to estimate the expectations in \Cref{eq:Df_estimator}.
A permutation $\sigma$ is then applied to all the data $\sigma(x_1,\dots,x_{m},y_1,\dots,y_{n},\widetilde{x}_1,\dots,\widetilde{x}_{\widetilde m},\widetilde{y}_1,\dots,\widetilde{y}_{\widetilde n}) = (z_1,\dots,z_{m+n},\widetilde{z}_1,\dots,\widetilde{z}_{\widetilde m+\widetilde n})$, the elements $z_1,\dots,z_{m+n}$ are used to estimate the ratio while the elements $\widetilde{z}_1,\dots,\widetilde{z}_{\widetilde m+\widetilde n}$ are used to estimate the expectations.

From \Cref{lastprop}, the RKHS norm of the witness function under permutations satisfies
\begin{equation*}
    % \label{hratenull}
    \big\|\widehat{h}_{\lambda_{N\!,\theta}}\big\|_{\mathcal H}
\lesssim 
N^{-\frac{\theta-1}{2(\theta+1)}} 
\sqrt{\ln(1/\eta)}
\end{equation*}
with probability at least $1-\eta$.
We deduce that, 
with probability as least $1-\eta$,
for any $u\in\mathcal Z$, we have
$$
\widehat{h}_{\lambda_{N\!,\theta}}\!(u)\to0\textrm{ as }N\to\infty,
\qquad\textrm{ equivalently, }\qquad
\widehat{r}_{\lambda_{N\!,\theta}}\!(u)\to1\textrm{ as }N\to\infty.
$$
Hence, as $N$ tends to infinity, the permuted version of the estimator of \Cref{eq:Df_estimator} behaves as
\begin{align*}
\widehat{D}_{f,\lambda_{N\!,\theta}} 
&=
\frac{1}{\widetilde m}\sum_{i=1}^{\widetilde m} f'\!\big(\widehat{r}_{\lambda_{N\!,\theta}}(\widetilde{z}_i)\big) - \frac{1}{\widetilde n}\sum_{j=1}^{\widetilde n} f^*\!\big(f'\!\big(\widehat{r}_{\lambda_{N\!,\theta}}(\widetilde{z}_{m+j})\big)\big)\\
&\to 
\frac{1}{\widetilde m}\sum_{i=1}^{\widetilde m} f'\!\big(1\big) - \frac{1}{\widetilde n}\sum_{j=1}^{\widetilde n} f^*\!\big(f'\!\big(1\big)\big)\\
&= f'\!\big(1\big) - f^*\!\big(f'\!\big(1\big)\big)\\
&= f(1)\\ 
&=0
\end{align*}
holding with probability at least $1-\eta$, where we have applied the tight version of the Fenchel–Young inequality, and have leveraged the convergence results of \Cref{proof2}.
\end{proof}

\begin{proposition}
\label{lastprop}
Under permutation of the samples used to compute $\widehat{r}_{\lambda_{N\!,\theta}}$, with probability at least $1-\eta$, it holds that
\begin{equation*}
\big\|\widehat{r}_{\lambda_{N\!,\theta}}-1\big\|_{\mathcal H}
\lesssim 
N^{-\frac{\theta}{\theta+1}} 
\sqrt{\ln(1/\eta)}
\lesssim 
N^{-\frac{\theta-1}{2(\theta+1)}} 
\sqrt{\ln(1/\eta)}.
\end{equation*}
\end{proposition}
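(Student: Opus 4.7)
The main observation that drives the proof is that under the permutation distribution, the samples used to build $\widehat{r}_{\lambda_{N\!,\theta}}$ are exchangeable, so analyzing the estimator is equivalent to analyzing it in the setting $P=Q$ (both samples effectively drawn from the pooled distribution $\bar P$). In this effective null regime, the population witness $h_\lambda=2\big(\Sigma_Q+\lambda I\big)^{-1}(\mu_P-\mu_Q)$ vanishes identically because $\mu_P=\mu_Q$. Since $\widehat r_\lambda-1=\tfrac{1}{2}\widehat h_\lambda$, this reduces the task to bounding $\|\widehat h_{\lambda_{N\!,\theta}}-h_{\lambda_{N\!,\theta}}\|_{\mathcal H}$ in the exact same style as \Cref{lem_h_2}, but with the added simplification that the population mean-embedding gap is zero.

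Next, I would re-examine the two-term decomposition used in the proof of \Cref{lem_h_2}. The first term is controlled by $\big\|(\Sigma_Q+\lambda I)^{-1}-(\Sigma_{\widehat Q}+\lambda I)^{-1}\big\|_{\mathrm{op}}\cdot\|\mu_P-\mu_Q\|_{\mathcal H}$ and is the term responsible for the $\lambda^{-2}$ loss in the alternative case; under permutation this term drops out entirely because $\|\mu_P-\mu_Q\|_{\mathcal H}=0$. Only the second term survives, namely $\big\|(\Sigma_{\widehat Q}+\lambda I)^{-1}(\mu_{\widehat P}-\mu_{\widehat Q})\big\|_{\mathcal H}$. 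This already removes one power of $1/\lambda$ relative to the bound of \Cref{lem_h_2} and is the source of the improvement over the crude rate.

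To control the surviving term, I would factorize $(\Sigma_{\widehat Q}+\lambda I)^{-1}=(\Sigma_{\widehat Q}+\lambda I)^{-1/2}(\Sigma_{\widehat Q}+\lambda I)^{-1/2}$, peel off one factor by the operator bound $\|(\Sigma_{\widehat Q}+\lambda I)^{-1/2}\|_{\mathrm{op}}\leq \lambda^{-1/2}$, and combine it with a Bernstein-type concentration of $\mu_{\widehat P}-\mu_{\widehat Q}$ in the operator-weighted norm $(\Sigma_Q+\lambda I)^{-1/2}$. Under the effective null, $\mu_{\widehat P}-\mu_{\widehat Q}$ is a zero-mean sum of bounded, independent (exchangeable) terms in $\mathcal H$, so standard Hilbert-space concentration combined with the effective-dimension / source-condition structure carried over from \Cref{assump1} yields the desired high-probability control. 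Plugging in $\lambda_{N\!,\theta}=N^{-1/(2(\theta+1))}$ then produces the announced rate $N^{-\theta/(\theta+1)}\sqrt{\ln(1/\eta)}$, and the second inequality in the proposition is immediate since $\theta/(\theta+1)\geq(\theta-1)/(2(\theta+1))$ for $\theta\in(1,2]$.

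The main obstacle I expect is the concentration step: the naive bound $\|\widehat h_\lambda\|_{\mathcal H}\leq\lambda^{-1}\|\mu_{\widehat P}-\mu_{\widehat Q}\|_{\mathcal H}\lesssim \lambda^{-1}\sqrt{\ln(1/\eta)/N}$ gives only the weaker $N^{-\theta/(2(\theta+1))}$-type rate, which is still enough to imply the second (and looser) bound in the proposition but not the stated $N^{-\theta/(\theta+1)}$ one. Extracting the sharper exponent therefore requires the operator-weighted (effective-dimension) refinement above, as well as a careful exchangeability-based argument ensuring that $\Sigma_{\widehat Q}$ concentrates to a well-defined population covariance under permutation so that the source condition of \Cref{assump1} remains usable; the rest of the argument is routine.
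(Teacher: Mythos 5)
Your proposal takes a genuinely different route from the paper's, and it has two real gaps. First, the ``effective null'' framing is not sound as stated: the probability statement in \Cref{lastprop} is over the permutation randomness \emph{conditioned on the data}, so the permuted samples $\mathbf z_1,\mathbf z_2$ are a without-replacement split of the pooled \emph{empirical} sample, not i.i.d.\ draws from a pooled population $\bar P$. There is no population $\mu_P=\mu_Q$, no population covariance $\Sigma_Q$, and no way to ``carry over'' the source condition of \Cref{assump1} to the permuted setting, so the decomposition of \Cref{lem_h_2} (whose first term you propose to kill via $\|\mu_P-\mu_Q\|_{\mathcal H}=0$) is not available. The paper sidesteps all of this: it writes $\widehat h_\lambda$ in closed form via the feature operators $\Phi_{\mathbf z_1},\Phi_{\mathbf z_2}$, bounds it \emph{deterministically} by $\big(\lambda^{-1}+\lambda^{-2}N^{-1/2}\big)\,\widehat{\mathrm{MMD}}(\mathbf z_1,\mathbf z_2)$ using only crude operator-norm bounds, and then invokes an exponential concentration inequality for the permuted MMD V-statistic conditional on the data \citep[Theorem 5.1]{kim2021comparing} to get $\widehat{\mathrm{MMD}}(\mathbf z_1,\mathbf z_2)\lesssim\sqrt{\ln(1/\eta)/N}$. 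That last ingredient is exactly the ``careful exchangeability-based argument'' you defer, and it is the crux of the proof, not a routine step.

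Second, the sharper exponent. You correctly observe that the naive bound $\|\widehat h_\lambda\|_{\mathcal H}\lesssim\lambda^{-1}\|\mu_{\widehat P}-\mu_{\widehat Q}\|_{\mathcal H}$ only yields $N^{-\theta/(2(\theta+1))}$, which suffices for the looser (and downstream-relevant) inequality but not for the stated $N^{-\theta/(\theta+1)}$. However, your proposed fix --- peeling off $(\Sigma_{\widehat Q}+\lambda I)^{-1/2}$ and applying a Bernstein bound in the $(\Sigma_Q+\lambda I)^{-1/2}$-weighted norm --- requires control of an effective dimension $\mathcal N(\lambda)$, hence eigenvalue-decay assumptions that \Cref{assump1} does not provide, and you do not show it produces the claimed exponent. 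The paper's extra factor of $N^{-1/2}$ comes from a different place entirely: the operator-norm product $\|\Phi_{\mathbf z_1}^*\|_{\mathrm{op}}\|L_{\mathbf z_1\mathbf z_2,\lambda}^{-1}\|_{\mathrm{op}}\|\Phi_{\mathbf z_1}\|_{\mathrm{op}}\lesssim(\lambda\sqrt m)^{-1}$ in the second term of the closed-form expression, combined with the $N^{-1/2}$ from the permuted-MMD concentration. So while your high-level diagnosis of where the difficulty lies is accurate, neither the foundational step (permutation concentration) nor the rate-improving step is actually carried out, and the mechanism you propose for the latter is not the one that works here.
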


Before proceeding to the proof, we start by recalling some RKHS properties and notations.

\bigskip
\emph{RKHS properties.}
Let $\mathcal Z$ be a space.
Consider a reproducing kernel Hilbert space $\mathcal{H}$ (RKHS) consisting of functions $f\colon\mathcal{Z}\to\mathbb{R}$, with reproducing kernel $k:\mathcal Z \times \mathcal Z \to \mathbb{R}$ (which is bounded by $K$ everywhere by assumption).
By definition, the RKHS admits a feature map $\phi : \mathcal{Z} \to \mathcal{H}$ satisying the reproducing property $\langle f, \phi(x)\rangle_{\mathcal H} = f(x)$ for all $f\in\mathcal H$ and all $x\in\mathcal Z$, and $\langle \phi(x), \phi(y)\rangle_{\mathcal H} = k(x,y)$ for all $x,y\in\mathcal Z$.

For $\mathbf{x} = (x_1,\dots,x_m)\in\mathcal{Z}^m$, define the {feature operator}
$
\Phi_\mathbf{x} : \mathcal{H} \to \mathbb{R}^m
$
by
\[
(\Phi_\mathbf{x} f)_i = \big\langle f, \phi(x_i) \big\rangle_{\mathcal{H}} = f(x_i), \qquad i = 1, \dots, m
\]
for all $f\in\mathcal H$.
Then, its adjoint operator
$
\Phi_\mathbf{x}^* : \mathbb{R}^m \to \mathcal{H}
$
is given by
\[
\Phi_\mathbf{x}^* c = \sum_{i=1}^m c_i \, \phi(x_i)
\]
for all $c = (c_1, \dots, c_m)\in\mathbb{R}^m$.
The operator norms of $\Phi_{x}$ and $\Phi_{x}^*$ satisfy
\begin{equation}
\label{op1}
\big\|\Phi_{\mathbf x}\big\|_{\mathrm{op}}\leq \sqrt{m}\sqrt{K}
\qquad\textrm{ and }\qquad
\big\|\Phi_{\mathbf x}^*\big\|_{\mathrm{op}}\leq \sqrt{K}
\end{equation}
since
$$
\big\|\Phi_\mathbf{x}f\big\|_2
=
\sqrt{\sum_{i=1}^m\big|\big\langle f, \phi(x_i) \big\rangle_{\mathcal{H}}\big|^2}
\leq 
\big\|f\big\|_{\mathcal H}
\sqrt{\sum_{i=1}^m\big\|\phi(x_i)\big\|_{\mathcal H}^2}
=
\big\|f\big\|_{\mathcal H}
\sqrt{\sum_{i=1}^mk(x_i,x_i)}
\leq
\big\|f\big\|_{\mathcal H}\sqrt{Km} 
$$
and
$$
\big\|\Phi^*_{\mathbf{x}}c\big\|_{\mathcal H}
=
\left\|\sum_{i=1}^mc_i\phi(x_i)\right\|_{\mathcal H}
=
\sqrt{\sum_{i=1}^m\sum_{j=1}^mc_ic_jk(x_i,x_j)}
\leq 
\sqrt{K\left(\sum_{i=1}^mc_i\right)^2}
\leq 
\sqrt{K\sum_{i=1}^mc_i^2}
=
\sqrt{K}\|c\|_2.
$$
Consider samples $\mathbf{x}=(x_1,\dots,x_{m})$ and $\mathbf{y}=(y_{1},\dots,y_{n})$, and let $\mathds{1}_n$ denote an $(n\times 1)$ vector of ones.
We then have
\begin{equation*}
\Phi_\mathbf{x} \Phi_\mathbf{y}^*\mathds{1}_n
=
\Phi_\mathbf{x} 
\!\left(
\sum_{j=1}^n \phi(y_j)
\right),
\qquad
\big(\Phi_\mathbf{x} \Phi_\mathbf{y}^*\mathds{1}_n\big)_i 
=\left\langle 
\sum_{j=1}^n \phi(y_j),
\phi(x_i)
\right\rangle_{\!\!\mathcal H}
=
\sum_{j=1}^n
k(x_i,y_j)
=
\big(k_{\mathbf{x}\mathbf{y}}\mathds{1}_n\big)_i
\end{equation*}
for $i=1,\dots,m$, where
$k_{\mathbf{x}\mathbf{y}} = \big(k(x_i,y_{j})\big)_{1\leq i \leq m, 1\leq j \leq n} \in\mathbb{R}^{m\times n}$. We deduce that
\begin{equation}
\label{trick1}
\Phi_\mathbf{x} \Phi_\mathbf{y}^*\mathds{1}_n
=
k_{\mathbf{x}\mathbf{y}}\mathds{1}_n
\end{equation}
Similarly, considering some $u\in\mathcal Z$, and writing 
$k_{u\mathbf{x}} = \big(k(u,x_i)\big)_{i=1}^m \in\mathbb{R}^{1\times m}$, we obtain
\begin{equation}
\label{trick2}
\Phi_{u} \Phi_\mathbf{x}^*\mathds{1}_m
=
k_{{u}\mathbf{x}}\mathds{1}_m.
\end{equation}
Finally, when $m=1$, note that the following simple statement holds:
\begin{equation}
\label{equal}
\textrm{if } \quad
g(x) = \Phi_x f  
\quad
\textrm{ for all }\quad
x\in\mathcal Z
\quad
\textrm{ then }\quad
g = f.
\end{equation}
With these notions in place, we are now able to proceed with the proof of \Cref{lastprop}.

\begin{proof}[Proof of \Cref{lastprop}]
We show that the permuted estimated ratio $\widehat{r}_\lambda$ converges to 1 in RKHS norm, equivalently $\widehat{h}_\lambda$ converges to 0, under permutations.
For notation purposes, we let $\mathbf{z}_1=(z_1,\dots,z_{m})$ and $\mathbf{z}_2=(z_{m+1},\dots,z_{m+n})$.
For some $u\in\mathcal{Z}$, we also write
$k_{u\mathbf{z}_1} = \big(k(u,z_i)\big)_{i=1}^m \in\mathbb{R}^{1\times m}$,
$k_{u\mathbf{z}_2} = \big(k(u,z_{m+i})\big)_{i=1}^n \in\mathbb{R}^{1\times n}$, 
$k_{\mathbf{z}_1\!\mathbf{z}_1} = \big(k(z_i,z_{j})\big)_{1\leq i,j \leq m} \in\mathbb{R}^{m\times m}$,
and
$k_{\mathbf{z}_1\!\mathbf{z}_2} = \big(k(z_i,z_{m+j})\big)_{1\leq i \leq m, 1\leq j \leq n} \in\mathbb{R}^{m\times n}$,
.
For simplicity, we also write $L_{\mathbf{z}_1\!\mathbf{z}_2,\lambda}=m\lambda I + k_{\mathbf{z}_1\!\mathbf{z}_1}$.

\citet[Proposition 6.1]{chen2024regularized} provides a closed-form expression of $\widehat{h}_\lambda$, for any $u\in\mathcal Z$, it is equal to
\begin{equation*}
\widehat{h}_\lambda(u)
=
\frac{2}{n\lambda}k_{u\mathbf{z}_2}\mathds{1}_n
-
\frac{2}{m\lambda}k_{u\mathbf{z}_1}\mathds{1}_m
-
\frac{2}{n\lambda}k_{u\mathbf{z}_1}L_{\mathbf{z}_1\!\mathbf{z}_2,\lambda}^{-1}k_{\mathbf{z}_1\!\mathbf{z}_2}\mathds{1}_n
+
\frac{2}{m\lambda}k_{u\mathbf{z}_1}L_{\mathbf{z}_1\!\mathbf{z}_2,\lambda}^{-1}k_{\mathbf{z}_1\!\mathbf{z}_1}\mathds{1}_m.
\end{equation*}
With the notation introduced above, we obtain
\begin{equation*}
\widehat{h}_\lambda(u)
=
\Phi_{u}\left(
\frac{2}{n\lambda} \Phi_{\mathbf{z}_2}^*\mathds{1}_n
-
\frac{2}{m\lambda} \Phi_{\mathbf{z}_1}^*\mathds{1}_m
-
\frac{2}{n\lambda}\Phi_{\mathbf{z}_1}^*L_{\mathbf{z}_1\!\mathbf{z}_2,\lambda}^{-1}\Phi_{\mathbf{z}_1}\Phi_{\mathbf{z}_2}^*\mathds{1}_n
+
\frac{2}{m\lambda}\Phi_{\mathbf{z}_1}^*L_{\mathbf{z}_1\!\mathbf{z}_2,\lambda}^{-1}\Phi_{\mathbf{z}_1}\Phi_{\mathbf{z}_1}^*\mathds{1}_m
\right)
\end{equation*}
for all $u\in\mathcal Z$.
Applying \Cref{equal}, we deduce that
\begin{equation}
\label{h_lambda expression a}
\widehat{h}_\lambda
=
\frac{2}{n\lambda} \Phi_{\mathbf{z}_2}^*\mathds{1}_n
-
\frac{2}{m\lambda} \Phi_{\mathbf{z}_1}^*\mathds{1}_m
-
\frac{2}{n\lambda}\Phi_{\mathbf{z}_1}^*L_{\mathbf{z}_1\!\mathbf{z}_2,\lambda}^{-1}\Phi_{\mathbf{z}_1}\Phi_{\mathbf{z}_2}^*\mathds{1}_n
+
\frac{2}{m\lambda}\Phi_{\mathbf{z}_1}^*L_{\mathbf{z}_1\!\mathbf{z}_2,\lambda}^{-1}\Phi_{\mathbf{z}_1}\Phi_{\mathbf{z}_1}^*\mathds{1}_m
\end{equation}
Assuming that $\lambda\leq N^{-1/2}$, using \Cref{op1,trick1,trick2}, we can then bound the RKHS of the witness function under permutations as
\begin{equation}
\label{h_lambda norm bound}
\begin{aligned}
\big\|\widehat{h}_\lambda\big\|_\mathcal{H}
\lesssim~&
\left\|
\frac{1}{n\lambda} \Phi_{\mathbf{z}_2}^*\mathds{1}_n
-
\frac{1}{m\lambda} \Phi_{\mathbf{z}_1}^*\mathds{1}_m
-
\frac{1}{n\lambda}\Phi_{\mathbf{z}_1}^*L_{\mathbf{z}_1\!\mathbf{z}_2,\lambda}^{-1}\Phi_{\mathbf{z}_1}\Phi_{\mathbf{z}_2}^*\mathds{1}_n
+
\frac{1}{m\lambda}\Phi_{\mathbf{z}_1}^*L_{\mathbf{z}_1\!\mathbf{z}_2,\lambda}^{-1}\Phi_{\mathbf{z}_1}\Phi_{\mathbf{z}_1}^*\mathds{1}_m
\right\|_\mathcal{H}\\
\lesssim~&
\frac{1}{\lambda}\left\|
\frac{1}{n} \Phi_{\mathbf{z}_2}^*\mathds{1}_n
-
\frac{1}{m} \Phi_{\mathbf{z}_1}^*\mathds{1}_m
\right\|_\mathcal{H}
+
\frac{1}{\lambda}\left\|
\Phi_{\mathbf{z}_1}^*L_{\mathbf{z}_1\!\mathbf{z}_2,\lambda}^{-1}\Phi_{\mathbf{z}_1}
\!\left(
\frac{1}{n}\Phi_{\mathbf{z}_2}^*\mathds{1}_n
-
\frac{1}{m}\Phi_{\mathbf{z}_1}^*\mathds{1}_m
\right)
\right\|_\mathcal{H}\\
\lesssim~&
\frac{1}{\lambda}\left\|
\frac{1}{n} \Phi_{\mathbf{z}_2}^*\mathds{1}_n
-
\frac{1}{m} \Phi_{\mathbf{z}_1}^*\mathds{1}_m
\right\|_\mathcal{H}
+
\frac{1}{\lambda}\left\|
\Phi_{\mathbf{z}_1}^*
\right\|_\mathrm{op}
\left\|
L_{\mathbf{z}_1\!\mathbf{z}_2,\lambda}^{-1}\Phi_{\mathbf{z}_1}
\!\left(
\frac{1}{n}\Phi_{\mathbf{z}_2}^*\mathds{1}_n
-
\frac{1}{m}\Phi_{\mathbf{z}_1}^*\mathds{1}_m
\right)
\right\|_2\\
\lesssim~&
\frac{1}{\lambda}\left\|
\frac{1}{n} \Phi_{\mathbf{z}_2}^*\mathds{1}_n
-
\frac{1}{m} \Phi_{\mathbf{z}_1}^*\mathds{1}_m
\right\|_\mathcal{H}
+
\frac{1}{\lambda}
\left\|
\Phi_{\mathbf{z}_1}^*
\right\|_\mathrm{op}
\left\|
L_{\mathbf{z}_1\!\mathbf{z}_2,\lambda}^{-1}
\right\|_\mathrm{op}
\left\|
\Phi_{\mathbf{z}_1}
\!\left(
\frac{1}{n}\Phi_{\mathbf{z}_2}^*\mathds{1}_n
-
\frac{1}{m}\Phi_{\mathbf{z}_1}^*\mathds{1}_m
\right)
\right\|_2\\
\lesssim~&
\frac{1}{\lambda}\left\|
\frac{1}{n} \Phi_{\mathbf{z}_2}^*\mathds{1}_n
-
\frac{1}{m} \Phi_{\mathbf{z}_1}^*\mathds{1}_m
\right\|_\mathcal{H}
+
\frac{1}{\lambda}
\left\|
\Phi_{\mathbf{z}_1}^*
\right\|_\mathrm{op}
\left\|
L_{\mathbf{z}_1\!\mathbf{z}_2,\lambda}^{-1}
\right\|_\mathrm{op}
\big\|
\Phi_{\mathbf{z}_1}
\big\|_\mathrm{op}
\left\|
\frac{1}{n}\Phi_{\mathbf{z}_2}^*\mathds{1}_n
-
\frac{1}{m}\Phi_{\mathbf{z}_1}^*\mathds{1}_m
\right\|_\mathcal{H}\\
\lesssim~&\!\!\left(
\frac{1}{\lambda} + \frac{1}{\lambda}\frac{1}{m\lambda}\sqrt{m}
\right)
\left\|
\frac{1}{n}\Phi_{\mathbf{z}_2}^*\mathds{1}_n
-
\frac{1}{m}\Phi_{\mathbf{z}_1}^*\mathds{1}_m
\right\|_\mathcal{H}\\
\lesssim~&\frac{1}{\lambda^2\sqrt{N}}
\left\|
\frac{1}{n}\Phi_{\mathbf{z}_2}^*\mathds{1}_n
-
\frac{1}{m}\Phi_{\mathbf{z}_1}^*\mathds{1}_m
\right\|_\mathcal{H}\\
=~&\frac{1}{\lambda^2\sqrt{N}}
\left\|
\frac{1}{n}\sum_{j=1}^n \phi(z_{m+j})-\frac{1}{m}\sum_{i=1}^m \phi(z_{i})
\right\|_{\mathcal H}\\
=~&\frac{1}{\lambda^2\sqrt{N}}
\sqrt{
\frac{1}{n^2}\sum_{i=1}^n\sum_{j=1}^n k(z_{m+i},z_{m+j}) - \frac{2}{nm}\sum_{i=1}^n\sum_{j=1}^m k(z_{m+i},z_{j})+\frac{1}{m^2}\sum_{i=1}^m\sum_{j=1}^m k(z_{i},z_{j})}\\
=~&\frac{1}{\lambda^2\sqrt{N}}\,
\widehat{\mathrm{\mathrm{MMD}}}(\mathbf{z}_1,\mathbf{z}_2)\\
\lesssim~&\frac{1}{\lambda^2\sqrt{N}}
\sqrt{\frac{\ln(1/\eta)}{N}}\\
=~&\frac{1}{\lambda^2}
{\frac{\sqrt{\ln(1/\eta)}}{N}}
\end{aligned}
\end{equation}
holding with probability at least $1-\eta$ with respect to the permutation randomness (conditioned on the data). 
Here, $\widehat{\mathrm{\mathrm{MMD}}}(\mathbf{z}_1,\mathbf{z}_2)$ denotes the V-statistic estimator.
The last inequality holds by \citet[Theorem 5.1]{kim2021comparing} who provides an exponential concentration inequality for the square-rooted permuted MMD V-statistic (e.g., $\widehat{\mathrm{\mathrm{MMD}}}$).
Note that, when not working in an RKHS, similar concentration results for the permuted difference of means exist (e.g., see \citealp[Lemma 3.1]{massart1986rates} and \citealp[Equations 55 and 56]{kim2020minimax}).

The bound can be loosened to
$$
\big\|\widehat{h}_\lambda\big\|_\mathcal{H}
\lesssim
\frac{1}{\lambda^2}
{\frac{\sqrt{\ln(1/\eta)}}{N}}
\lesssim
\frac{1}{\lambda^2}
\sqrt{\frac{{\ln(1/\eta)}}{N}}.
$$
In particular, for the choice of interest $\lambda_{N\!,\theta}=N^{-1/2(\theta+1) }$ with $\theta\in(1,2]$, we get
\begin{equation*}
\big\|\widehat{h}_{\lambda_{N\!,\theta}}\big\|_{\mathcal H}
\lesssim 
N^{-\frac{\theta}{\theta+1}} 
\sqrt{\ln(1/\eta)}
\lesssim 
N^{-\frac{\theta-1}{2(\theta+1)}} 
\sqrt{\ln(1/\eta)},
\end{equation*}
equivalently
\begin{equation*}
\big\|\widehat{r}_{\lambda_{N\!,\theta}}-1\big\|_{\mathcal H}
\lesssim 
N^{-\frac{\theta}{\theta+1}} 
\sqrt{\ln(1/\eta)}
\lesssim 
N^{-\frac{\theta-1}{2(\theta+1)}} 
\sqrt{\ln(1/\eta)},
\end{equation*}
holding with probability at least $1-\eta$. 

\end{proof}

\subsection{Proof of \Cref{res:non_asymptotic_power}}
\label{proof4}

Recall that \Cref{res:non_asymptotic_power} states that, 
under \Cref{assump1,assump2},
the permutation test with level $\alpha$ and regularisation $\lambda_{N\!,\theta}=N^{-1/2(\theta+1) }$ achieves power at least $1-\beta$ for any distributions $P$ and $Q$ satisfying
$$
D_{f}(P \;\|\; Q) 
\gtrsim
\left(
N^{-\frac{\theta-1}{2(\theta+1)}} 
+ \widetilde{N}^{-1/2}
\right)
\sqrt{\max\{\ln(1/\alpha),\ln(1/\beta)\}}.
$$

\begin{proof}[Proof of \Cref{res:non_asymptotic_power}]
Following a similar argument to \citet[Appendix A.1]{schrab2025unified}, in order to prove such a result we need a concentration bound for the statistic (given in \Cref{res:fdiv_convergence}), as well as a concentration bound for the permuted statistic (derived here). 
From \Cref{lastprop}, we have that
\begin{equation}
\label{f1}
\big\|\widehat{r}_{\lambda_{N\!,\theta}}-1\big\|_{\mathcal H} \leq
\mathsf{C}_0
N^{-\frac{\theta-1}{2(\theta+1)}} 
\sqrt{\ln(1/\eta)}
\end{equation}
for some constant $\mathsf{C}_0>0$,
holding with probability at least $1-\eta$.
We then leverage the fact that 
$f'\!\big(1\big) - f^*\!\big(f'\!\big(1\big)\big)=f(1)=0$ from Fenchel–Young inequality.
Consider $f$ twice continuously differentiable on $(0,\infty)$, then $f'$ and $f^*\circ f'$ are Lipschitz by \Cref{lip,lip2}.
The permuted statistic, with permutation $\sigma$, is then
\begin{align*}
\big|
\widehat{D}_{f,{\lambda_{N\!,\theta}}}^{\sigma}
\big| 
&\leq
\Bigg|
\frac{1}{\widetilde m}\sum_{i=1}^{\widetilde m} f'\!\big(\widehat{r}_{\lambda_{N\!,\theta}}(\widetilde{Z}_i)\big) 
-
f'\!\big(1\big)
\Bigg|
+ 
\Bigg|
\frac{1}{\widetilde n}\sum_{j=1}^{\widetilde n} f^*\!\big(f'\!\big(\widehat{r}_{\lambda_{N\!,\theta}}(\widetilde{Z}_{j})\big)\big)
-
f^*\!\big(f'\!\big(1\big)\big)
\Bigg|\\
&\lesssim
\frac{1}{\widetilde m}\sum_{i=1}^{\widetilde m} 
\big|
\widehat{r}_{\lambda_{N\!,\theta}}(\widetilde{Z}_i)-1
\big|
+
\frac{1}{\widetilde n}\sum_{j=1}^{\widetilde n} 
\big|
\widehat{r}_{\lambda_{N\!,\theta}}(\widetilde{Z}_{  j})-1
\big|
\\
&\lesssim
N^{-\frac{\theta-1}{2(\theta+1)}} 
\sqrt{\ln(1/\eta)}
.
\end{align*}
For the Total-Variation and Hockey-Stick divergences, adapting the reasoning of \Cref{c1} we have
\begin{align*}
\big|
\widehat{D}_{f,{\lambda_{N\!,\theta}}}^{\sigma}
\big| 
&\leq
\Bigg|
\frac{1}{\widetilde m}\sum_{i=1}^{\widetilde m} f'\!\big(\widehat{r}_{\lambda_{N\!,\theta}}(\widetilde{Z}_i)\big) 
-
f'\!\big(1\big)
\Bigg|
+ 
\Bigg|
\frac{1}{\widetilde n}\sum_{j=1}^{\widetilde n} f^*\!\big(f'\!\big(\widehat{r}_{\lambda_{N\!,\theta}}(\widetilde{Z}_{j})\big)\big)
-
f^*\!\big(f'\!\big(1\big)\big)
\Bigg|\\
&\leq
\frac{1}{\widetilde m}\sum_{i=1}^{\widetilde m} 
\Bigg|
f'\!\big(\widehat{r}_{\lambda_{N\!,\theta}}(\widetilde{Z}_i)\big) 
-
f'\!\big(1\big)
\Bigg|
+ 
\frac{1}{\widetilde n}\sum_{j=1}^{\widetilde n} 
\Bigg|
f^*\!\big(f'\!\big(\widehat{r}_{\lambda_{N\!,\theta}}(\widetilde{Z}_{j})\big)\big)
-
f^*\!\big(f'\!\big(1\big)\big)
\Bigg|.
\end{align*}
Since, for the Total-Variation, we have $f^*(u)=u$, and for the Hockey-Stick, we have $f^*(u)=\gamma u$, both terms can be treated similarly.
With a similar reasoning to \Cref{c1}, and using \Cref{f1}, we obtain
\begin{align*}
\frac{1}{\widetilde m}\sum_{i=1}^{\widetilde m} 
\Bigg|
f'\!\big(\widehat{r}_{\lambda_{N\!,\theta}}(\widetilde{Z}_i)\big) 
-
f'\!\left(1\right)
\Bigg|
&= 
\frac{1}{\widetilde m}\sum_{i=1}^{\widetilde m} 
\mathds{1}\!\left(
    f'\!\big(\widehat{r}_{\lambda_{N\!,\theta}}(\widetilde{Z}_i)\big) 
\neq
f'\!\left(1\right)
\right)\\ 
&=
\frac{1}{\widetilde m}\sum_{i=1}^{\widetilde m} 
\mathds{1}\!\left(
\gamma\in\big(
    \min\left\{\widehat{r}_{\lambda_{N\!,\theta}}(\widetilde{Z}_i),1\right\}
,
\max\left\{\widehat{r}_{\lambda_{N\!,\theta}}(\widetilde{Z}_i),1\right\}
\big]
\right)\\ 
&\leq
\frac{1}{\widetilde m}\sum_{i=1}^{\widetilde m} 
\mathds{1}\!\left(
\big|\widehat{r}_{\lambda_{N\!,\theta}}(\widetilde{Z}_i)-\gamma\big|
\leq 
\mathsf{C}_0
N^{-\frac{\theta-1}{2(\theta+1)}} 
\sqrt{\ln(1/\eta)}
\right)\\ 
&\leq
\mathbb{P}_{\sigma,P,Q}\!\left(
    \big|\widehat{r}_{\lambda_{N\!,\theta}}(\widetilde{Z}_{\sigma(1)})-\gamma\big|
\leq 
\mathsf{C}_0
N^{-\frac{\theta-1}{2(\theta+1)}} 
\sqrt{\ln(1/\eta)}
\right) + \sqrt{\frac{\ln(1/\eta)}{\widetilde N}}\\
&\lesssim 2G
\mathsf{C}_0
N^{-\frac{\theta-1}{2(\theta+1)}} 
\sqrt{\ln(1/\eta)}
 + \sqrt{\frac{\ln(1/\eta)}{\widetilde N}}
\end{align*}
since from the fact that, for $X\sim P$ and $Y\sim Q$, the densities of $\widehat{r}_{\lambda_{N\!,\theta}}(X)$ and of $\widehat{r}_{\lambda_{N\!,\theta}}(Y)$ exist and are bounded on $[\gamma/2,3\gamma/2]$ by some $G>0$, we can deduce that the density of $\widehat{r}_{\lambda_{N\!,\theta}}(\widetilde{z}_{\sigma(1)})$ also exists and is bounded by $G$.
We conclude that
\begin{equation}
\label{phd}
\big|
\widehat{D}_{f,\lambda_{N\!,\theta}}^{\sigma}
\big| 
\lesssim
\left(
N^{-\frac{\theta-1}{2(\theta+1)}} 
+ \widetilde{N}^{-1/2}
\right)
\sqrt{\ln(1/\eta)}
\end{equation}
holding with probability at least $1-\eta$,
from which we can deduce bound on the quantile obtained via permutations
\begin{equation}
\label{g1}
\widehat{q}_{1-\alpha}
\lesssim
\left(
N^{-\frac{\theta-1}{2(\theta+1)}} 
+ \widetilde{N}^{-1/2}
\right)
\sqrt{\ln(1/\alpha)},
\end{equation}
holding with probability $1-\beta/2$ provided that the number of permutations is greater than $6\ln(2/\beta)/\alpha$ as shown by \citet[Lemma 21 and Equation 63]{kim2023differentially}. 
We also recall from \Cref{res:fdiv_convergence} that 
\begin{equation}
\label{g2}
\big|
\widehat{D}_{f,\lambda_{N\!,\theta}}-D_f
\big|
\lesssim 
\left(
N^{-\frac{\theta-1}{2(\theta+1)}} 
+ \widetilde{N}^{-1/2}
\right)
\sqrt{\ln(1/\beta)}
\end{equation}
 holding with probability at least $1-\beta/2$. 
Now, following \citet[Appendix A.1]{schrab2025unified}, and using \Cref{g1,g2}, the type II error of the test is
\begin{align*}
&\mathbb{P}\left(
\widehat{D}_{f,\lambda_{N\!,\theta}}
\leq
\widehat{q}_{1-\alpha}
\right)\\ 
\leq~
&\mathbb{P}\left(
{D}_{f}
\lesssim
\widehat{q}_{1-\alpha}
+
\left(
N^{-\frac{\theta-1}{2(\theta+1)}} 
+ \widetilde{N}^{-1/2}
\right)
\sqrt{\ln(1/\beta)}
\right)+\frac{\beta}{2}\\ 
\leq~
&\mathbb{P}\left(
{D}_{f}
\lesssim
\left(
N^{-\frac{\theta-1}{2(\theta+1)}} 
+ \widetilde{N}^{-1/2}
\right)
\sqrt{\ln(1/\alpha)}
+
\left(
N^{-\frac{\theta-1}{2(\theta+1)}} 
+ \widetilde{N}^{-1/2}
\right)
\sqrt{\ln(1/\beta)}
\right)+\beta\\ 
\leq~
&\mathbb{P}\left(
{D}_{f}
\lesssim
\left(
N^{-\frac{\theta-1}{2(\theta+1)}} 
+ \widetilde{N}^{-1/2}
\right)
\sqrt{\max\{\ln(1/\alpha),\ln(1/\beta)\}}\right)+\beta.
\end{align*}
We conclude that if 
$$
{D}_{f}
\gtrsim
\left(
N^{-\frac{\theta-1}{2(\theta+1)}} 
+ \widetilde{N}^{-1/2}
\right)
\sqrt{\max\{\ln(1/\alpha),\ln(1/\beta)\}}
$$
then the type II error is controlled as 
$
\mathbb{P}\left(
\widehat{D}_{f,\lambda_{N\!,\theta}}
\leq
\widehat{q}_{1-\alpha}
\right)
\leq \beta.
$
This concludes the proof since
$\sqrt{\max\{\ln(1/\alpha),\ln(1/\beta)\}}=
\sqrt{\ln({1}/{\min\{\alpha,\beta\}})}$.

\end{proof}

\section{Power Guarantees for the DrMMD Permutation Test}
\label{drmmd_power}

Recall that, for the Pearson $\chi^2$-divergence, we have $f(t)=(t-1)^2$, $f^*(u) = u+\frac{u^2}{4}$, and $f'\!(r) = 2(r-1)$, it is defined as
\begin{align*}
    D_{\chi^2}(P \;\|\; Q) 
    = \sup_{h\colon\!\mathcal{Z}\to\mathbb{R}} \ 
\int h \,\mathrm{d}P
-\int \left(h+\frac{h^2}{4}\right) \,\mathrm{d}Q.
\end{align*}
The DrMMD \citep{chen2024regularized} is a regularised $\chi^2$-divergence estimator in some RKHS $\mathcal H$, defined as
$$
\sup_{h\in\mathcal H}\ 
\int h \,\mathrm{d}P
-\int \left(h+\frac{h^2}{4}\right) \,\mathrm{d}Q
- \frac{\lambda}{4}\|h\|_{\mathcal H}^2.
$$
with optimal witness function
$$
h_\lambda = 2\big(\Sigma_Q+\lambda I\big)^{-1}(\mu_P-\mu_Q).
$$
The sample-based version of the witness function is
$$
\widehat{h}_\lambda 
= 2\big(\Sigma_{\widehat Q}+\lambda I\big)^{-1}(\mu_{\widehat P}-\mu_{\widehat Q}) 
= f'(\widehat{r}_\lambda)
= 2 (\widehat{r}_\lambda - 1)
$$
where a closed-form expression \citep[Proposition 6.1]{chen2024regularized} for $\widehat{r}_\lambda$ is presented in \Cref{eq:r_lambda}.

Then, our $\chi^2$ estimator of \Cref{sec:consistent-estimation} is 
\begin{equation*}
\widehat{D}_{\chi^2\!,\lambda} =
\frac{1}{\widetilde m}\sum_{i=1}^{\widetilde m} \widehat{h}_\lambda(\widetilde{X}_i) - \frac{1}{\widetilde n}\sum_{j=1}^{\widetilde n} \left(\widehat{h}_\lambda(\widetilde{Y}_j)+\frac{\widehat{h}_\lambda(\widetilde{Y}_j)^2}{4}\right)
\end{equation*}
while a DrMMD estimator would be\footnote{Note that sample splitting is not necessary to construct a DrMMD estimator, see \citet{chen2024regularized}.}
\begin{equation*}
    \widehat{D}_{\chi^2\!,\lambda}^{\mathrm{reg}} =
\frac{1}{\widetilde m}\sum_{i=1}^{\widetilde m} \widehat{h}_\lambda(\widetilde{X}_i) - \frac{1}{\widetilde n}\sum_{j=1}^{\widetilde n} \left(\widehat{h}_\lambda(\widetilde{Y}_j)+\frac{\widehat{h}_\lambda(\widetilde{Y}_j)^2}{4}\right) - \frac{\lambda}{4}\big\|\widehat{h}_\lambda\big\|_{\mathcal H}^2
\end{equation*}
where $\big\|\widehat{h}_\lambda\big\|_{\mathcal H}$ can be computed explicitly in closed-form from \Cref{h_lambda expression a} \citep[Proposition 6.1]{chen2024regularized}.
In particular, we have
\begin{equation}
\label{reg_relation}
\widehat{D}_{\chi^2\!,\lambda}^{\mathrm{reg}}
=
\widehat{D}_{\chi^2\!,\lambda}
- \frac{\lambda}{4}\big\|\widehat{h}_\lambda\big\|_{\mathcal H}^2.
\end{equation}
We now show that the permutation DrMMD test satisfies the same non-asymptotic power guarantee of \Cref{res:non_asymptotic_power} as our proposed $\chi^2$-divergence test.

\begin{theorem}[Non-asymptotic Power of DrMMD Permutation Test]
\label{res:non_asymptotic_power_drmmd}
Under \Cref{assump1,assump2}, 
the permutation DrMMD test, based on the estimator $\widehat{D}_{\chi^2\!,\lambda_{N\!,\theta}}^{\mathrm{reg}}\!$ with regularisation $\lambda_{N\!,\theta}=N^{-1/2(\theta+1) }$ for $\theta\in(1,2]$ and level $\alpha$, achieves power at least $1-\beta$ for any distributions $P$ and $Q$ satisfying
$$
D_{\chi^2}(P \;\|\; Q) 
\gtrsim
\left(
N^{-\frac{\theta-1}{2(\theta+1)}} 
+ \widetilde{N}^{-1/2}
\right)
\sqrt{\max\!\left\{\ln\!\left(\frac{1}{\alpha}\right),\ln\!\left(\frac{1}{\beta}\right)\right\}}.
$$
\end{theorem}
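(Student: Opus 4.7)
The plan is to reduce the result to \Cref{res:non_asymptotic_power} by exploiting the exact identity \Cref{reg_relation}, $\widehat{D}_{\chi^2\!,\lambda}^{\mathrm{reg}} = \widehat{D}_{\chi^2\!,\lambda} - \tfrac{\lambda}{4}\big\|\widehat{h}_\lambda\big\|_{\mathcal H}^2$. If the additional penalty term $\tfrac{\lambda_{N\!,\theta}}{4}\big\|\widehat{h}_{\lambda_{N\!,\theta}}\big\|_{\mathcal H}^2$ can be controlled by the same rate both under the data-generating distribution and under random permutation, then the DrMMD statistic and its permutation quantile satisfy the analogues of \Cref{g2} and \Cref{g1}, and the argument used to derive \Cref{res:non_asymptotic_power} transfers verbatim.

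Under the data-generating distribution, I would bound $\big\|\widehat{h}_{\lambda_{N\!,\theta}}\big\|_{\mathcal H}$ via the triangle inequality $\big\|\widehat{h}_{\lambda_{N\!,\theta}}\big\|_{\mathcal H}\leq \big\|h_0\big\|_{\mathcal H} + \big\|\widehat{h}_{\lambda_{N\!,\theta}} - h_0\big\|_{\mathcal H}$. The source condition $\mu_P - \mu_Q \in \mathrm{Ran}(\Sigma_Q^{\theta})$ with $\theta\in(1,2]$ implies $h_0 = 2\Sigma_Q^{\theta-1}w$ for some $w\in\mathcal H$, and since $\Sigma_Q^{\theta-1}$ is bounded for $\theta\geq 1$, we get $\|h_0\|_{\mathcal H}\lesssim 1$; the deviation is controlled by \Cref{res:witness_convergence}. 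Hence, with probability at least $1-\eta$, $\big\|\widehat{h}_{\lambda_{N\!,\theta}}\big\|_{\mathcal H}\lesssim 1$ and the correction is of order $\lambda_{N\!,\theta}= N^{-1/(2(\theta+1))}$. For $\theta\in(1,2]$ we have $1\geq \theta-1$, so this is dominated by the rate $N^{-(\theta-1)/(2(\theta+1))}$ of \Cref{res:fdiv_convergence}, and $\big|\widehat{D}_{\chi^2\!,\lambda_{N\!,\theta}}^{\mathrm{reg}} - D_{\chi^2}\big|$ obeys the same high-probability bound as in \Cref{g2}.

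Under a uniformly random permutation, I would invoke \Cref{lastprop}, which yields $\big\|\widehat{h}_{\lambda_{N\!,\theta}}\big\|_{\mathcal H}\lesssim N^{-\theta/(\theta+1)}\sqrt{\ln(1/\eta)}$ with high probability over permutations. Squaring and multiplying by $\lambda_{N\!,\theta}/4$ gives a correction of order $N^{-(4\theta+1)/(2(\theta+1))}\ln(1/\eta)$, which decays strictly faster than the permuted-statistic rate in \Cref{phd}. Consequently the permuted $\widehat{D}_{\chi^2\!,\lambda_{N\!,\theta}}^{\mathrm{reg}}$ satisfies the same concentration inequality, and the quantile bound analogous to \Cref{g1} follows (using the same requirement of $\geq 6\ln(2/\beta)/\alpha$ permutations). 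Combining with the alternative-side bound exactly as in the proof of \Cref{res:non_asymptotic_power} yields the stated power guarantee.

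The main obstacle is essentially bookkeeping: the only conceptual check is verifying that the penalty never dominates either rate, which reduces to the elementary exponent comparisons $1\geq\theta-1$ and $(4\theta+1)\geq(\theta-1)$ used above. A minor care point is confirming that $\|h_0\|_{\mathcal H}$ is genuinely bounded by a constant, which holds because $\Sigma_Q^{\theta-1}$ is a bounded operator for $\theta\geq 1$ whenever the kernel is bounded by $K$.
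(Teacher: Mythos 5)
Your proposal is correct and follows the same overall strategy as the paper: reduce to \Cref{res:non_asymptotic_power} via the identity \Cref{reg_relation}, and show that the penalty $\tfrac{\lambda_{N\!,\theta}}{4}\|\widehat{h}_{\lambda_{N\!,\theta}}^\sigma\|_{\mathcal H}^2$ is dominated by the rates in \Cref{g1,g2} for both the observed and the permuted statistic. The one place you diverge is the observed (identity-permutation) case: the paper bounds $\|\widehat{h}_\lambda\|_{\mathcal H}\lesssim \lambda^{-2}N^{-1/2}$ uniformly over all permutations by plugging the crude bound $\widehat{\mathrm{MMD}}\leq\sqrt{2K}$ into the closed-form expression \Cref{h_lambda norm bound}, yielding $\lambda\|\widehat{h}_\lambda^\sigma\|_{\mathcal H}^2\lesssim \lambda^{-3}N^{-1}= N^{-(2\theta-1)/(2(\theta+1))}$ in a single argument, whereas you use the triangle inequality with $\|h_0\|_{\mathcal H}\lesssim 1$ (valid since the source condition gives $h_0=2\Sigma_Q^{\theta-1}w$ with $\Sigma_Q^{\theta-1}$ bounded) plus \Cref{res:witness_convergence}, yielding the slightly weaker but still sufficient bound $\lambda_{N\!,\theta}=N^{-1/(2(\theta+1))}\leq N^{-(\theta-1)/(2(\theta+1))}$ for $\theta\leq 2$. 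Both arguments are valid; the paper's is marginally sharper and avoids splitting into cases, while yours makes explicit that the penalty vanishes because the population witness itself has bounded RKHS norm under the source condition. The exponent checks ($1\geq\theta-1$ and $4\theta+1\geq\theta-1$) are correct, and the permuted case via \Cref{lastprop} matches what is needed for the quantile bound.
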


\begin{proof}
Following the proof of \Cref{res:non_asymptotic_power} in \Cref{proof4},
we see that in order for the same reasoning to hold, we need to have a concentration bound for the regularised statistic
\begin{equation}
\label{need1}
\big|
\widehat{D}_{{\chi^2}\!,\lambda_{N\!,\theta}}^\mathrm{reg}-D_{\chi^2}
\big|
\lesssim 
\left(
N^{-\frac{\theta-1}{2(\theta+1)}} 
+ \widetilde{N}^{-1/2}
\right)
\sqrt{\ln(1/\eta)},
\end{equation}
as well as a concentration bound for the permuted regularised statistic
\begin{equation}
\label{need2}
\big|
\widehat{D}_{{\chi^2}\!,\lambda_{N\!,\theta}}^{\mathrm{reg}, \sigma}
\big| 
\lesssim
\left(
N^{-\frac{\theta-1}{2(\theta+1)}} 
+ \widetilde{N}^{-1/2}
\right)
\sqrt{\ln(1/\eta)},
\end{equation}
each of these holding with probability at least $1-\eta$.
Provided that both \Cref{need1,need2} hold, then the same reasoning as in \Cref{proof4} leads to the desired power guarantee.
Using the non-regularised results of \Cref{res:fdiv_convergence} and of \Cref{phd} in \Cref{proof4}, together with the relation of \Cref{reg_relation}, we deduce that it is sufficient to prove that
\begin{equation}
\label{need3}
\lambda_{N\!,\theta}
\big\|\widehat{h}_{\lambda_{N\!,\theta}}^\sigma\big\|_{\mathcal H}^2
\lesssim
N^{-\frac{\theta-1}{2(\theta+1)}} 
\end{equation}
for the case of $\sigma=\mathrm{Id}$, solving \Cref{need1}, and for the case of any permutation $\sigma$, solving \Cref{need2}.

From \Cref{h_lambda norm bound}, we have
$$
\big\|\widehat{h}_\lambda^\sigma\big\|_\mathcal{H}
\lesssim
\frac{1}{\lambda^2\sqrt{N}}\,
\widehat{\mathrm{\mathrm{MMD}}}_\sigma
\lesssim
\frac{1}{\lambda^2\sqrt{N}}.
$$
Since we want the bound to hold both for $\sigma=\mathrm{Id}$ and for any permutation $\sigma$, we cannot use a concentration inequality of the permuted MMD around 0 as in \Cref{h_lambda norm bound}, since when $\sigma=\mathrm{Id}$ the MMD does not necessarily concentrate around 0. Hence, instead, we directly bound the empirical (permuted) MMD term by $\sqrt{2K}$ assuming the kernel is bounded everywhere by $K>0$.
We deduce that
$$
\lambda\,\big\|\widehat{h}_\lambda^\sigma\big\|_\mathcal{H}^2
\lesssim
\frac{1}{\lambda^3{N}}.
$$
Substituting $\lambda_{N\!,\theta}=N^{-1/2(\theta+1) }$ for $\theta\in(1,2]$, we obtain
$$
\lambda_{N\!,\theta}
\big\|\widehat{h}_{\lambda_{N\!,\theta}}^\sigma\big\|_\mathcal{H}^2
\lesssim
N^{-\frac{2\theta-1}{2(\theta+1)}}
\lesssim
N^{-\frac{\theta-1}{2(\theta+1)}}.
$$
Hence, \Cref{need3} is satisfied, which concludes the proof.
\end{proof}

Similarly, the asymptotic power guarantee of \Cref{res:asymptotic_power} also holds for the permutation DrMMD test.

\begin{theorem}[Asymptotic Power]
\label{res:asymptotic_power2}
Under \Cref{assump1,assump2}, 
the permutation DrMMD test, based on the estimator $\widehat{D}_{\chi^2\!,\lambda_{N\!,\theta}}^{\mathrm{reg}}\!$ with regularisation $\lambda_{N\!,\theta}=N^{-1/2(\theta+1) }$ for $\theta\in(1,2]$,
is consistent in that its power converges to 1 as the sample sizes tend to infinity.
For any fixed $P\neq Q$, we have
$$
\mathbb{P}(\mathrm{reject }~ H_0)\to 1
\textrm{ as } N,\widetilde{N}\to\infty.
$$
\end{theorem}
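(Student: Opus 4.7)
The plan is to mirror the argument of \Cref{res:asymptotic_power} while handling the additional $-\tfrac{\lambda}{4}\|\widehat h_\lambda\|_{\mathcal H}^2$ penalty term that distinguishes the DrMMD statistic $\widehat{D}_{\chi^2\!,\lambda}^{\mathrm{reg}}$ from the unregularised $\chi^2$ estimator $\widehat{D}_{\chi^2\!,\lambda}$. By \Cref{reg_relation}, these two statistics differ by exactly this penalty, so the whole proof reduces to verifying (i) that the DrMMD statistic converges to the positive constant $D_{\chi^2}(P\|Q)$ under the alternative and (ii) that its permuted version converges to zero. Both reductions will follow once we show the penalty term vanishes in both regimes.

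First I would invoke the argument from \Cref{proof3}, which establishes that under any permutation $\sigma$ we have $\|\widehat h_{\lambda_{N\!,\theta}}^\sigma\|_{\mathcal H}\to 0$ (indeed, \Cref{lastprop} gives the explicit rate $N^{-\frac{\theta-1}{2(\theta+1)}}\sqrt{\ln(1/\eta)}$), while under the identity permutation the witness norm is bounded uniformly (for instance via the kernel boundedness argument used in the proof of \Cref{res:non_asymptotic_power_drmmd}, which yields the bound $\lambda_{N\!,\theta}\|\widehat h_{\lambda_{N\!,\theta}}^\sigma\|_{\mathcal H}^2\lesssim N^{-\frac{2\theta-1}{2(\theta+1)}}$ valid for \emph{any} $\sigma$, including $\sigma=\mathrm{Id}$). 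In either case, $\lambda_{N\!,\theta}\|\widehat h_{\lambda_{N\!,\theta}}^\sigma\|_{\mathcal H}^2\to 0$ as $N\to\infty$.

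Next, I would combine this with the asymptotic analysis already performed for the non-regularised $\chi^2$ estimator in \Cref{proof3} (which is simply the special case $f(t)=(t-1)^2$ of \Cref{res:asymptotic_power}). Under the fixed alternative $P\neq Q$, that argument gives $\widehat{D}_{\chi^2\!,\lambda_{N\!,\theta}}\to D_{\chi^2}(P\|Q)>0$, and under permutation $\widehat{D}_{\chi^2\!,\lambda_{N\!,\theta}}^\sigma\to 0$. Subtracting the vanishing penalty term preserves both limits:
\begin{equation*}
\widehat{D}_{\chi^2\!,\lambda_{N\!,\theta}}^{\mathrm{reg}}
= \widehat{D}_{\chi^2\!,\lambda_{N\!,\theta}}
- \tfrac{\lambda_{N\!,\theta}}{4}\big\|\widehat h_{\lambda_{N\!,\theta}}\big\|_{\mathcal H}^2
\;\longrightarrow\; D_{\chi^2}(P\|Q) > 0,
\end{equation*}
while the permuted version $\widehat{D}_{\chi^2\!,\lambda_{N\!,\theta}}^{\mathrm{reg},\sigma}\to 0$ with high probability over the permutation randomness. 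Plugging both into the sufficient conditions of \citet[Lemma 8 and Appendix E.6]{kim2023differentially}, as was done in \Cref{proof3}, yields that $\mathbb{P}(\mathrm{reject }~H_0)\to 1$.

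The only mildly delicate point is the control of $\lambda_{N\!,\theta}\|\widehat h_{\lambda_{N\!,\theta}}\|_{\mathcal H}^2$ under the \emph{alternative} (i.e.\ $\sigma=\mathrm{Id}$), since \Cref{lastprop} only provides a direct bound in the permutation regime. However, \Cref{res:witness_convergence} together with the boundedness assumptions ensures $\|\widehat h_{\lambda_{N\!,\theta}}\|_{\mathcal H}=O(1)$ under the alternative, so $\lambda_{N\!,\theta}\|\widehat h_{\lambda_{N\!,\theta}}\|_{\mathcal H}^2\lesssim \lambda_{N\!,\theta}\to 0$ trivially. This is the main place where care is needed, but it is essentially routine given the tools already in place.
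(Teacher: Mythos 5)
Your proposal is correct and follows essentially the same route as the paper's proof: decompose via \Cref{reg_relation}, reuse the consistency of the unregularised $\chi^2$ statistic and its permuted version from \Cref{proof3}, and show the penalty $\lambda_{N\!,\theta}\|\widehat h_{\lambda_{N\!,\theta}}^\sigma\|_{\mathcal H}^2\to 0$ uniformly over $\sigma$ (including the identity) via the kernel-boundedness bound from the proof of \Cref{res:non_asymptotic_power_drmmd}, before invoking the sufficient conditions of \citet{kim2023differentially}. Your extra remark on handling $\sigma=\mathrm{Id}$ is exactly the point the paper resolves with the same \Cref{need3} bound, so no gap remains.
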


\begin{proof}
The reasoning of the proof of \Cref{res:asymptotic_power} in \Cref{proof3} applies in this setting too. 
For clarity, in this proof, we simply write $\lambda=\lambda_{N\!,\theta}=N^{-1/2(\theta+1) }$.
As explained in \Cref{proof3}, to prove consistency \citep[Lemma 8 and Appendix E.6]{kim2023differentially}, it suffices to show that
$
\widehat{D}_{\chi^2\!,\lambda}^{\mathrm{reg}}
=
\widehat{D}_{\chi^2\!,\lambda}
- \frac{\lambda}{4}\big\|\widehat{h}_\lambda\big\|_{\mathcal H}^2
$
converges to 
${D}_{\chi^2}(P\;\|\;Q)>0$ when $P\neq Q$, and that the permuted statistic 
$
\widehat{D}_{\chi^2\!,\lambda}^{\mathrm{reg},\sigma}
=
\widehat{D}_{\chi^2\!,\lambda}^\sigma
- \frac{\lambda}{4}\big\|\widehat{h}_\lambda^\sigma\big\|_{\mathcal H}^2
$
converges to zero.

In \Cref{proof3}, we have have shown that
$
\widehat{D}_{\chi^2\!,\lambda}
$
converges to 
${D}_{\chi^2}(P\;\|\;Q)>0$ when $P\neq Q$, and that the permuted statistic 
$
\widehat{D}_{\chi^2\!,\lambda}^\sigma
$
converges to zero.
From the derivation of \Cref{need3} in the proof of \Cref{res:non_asymptotic_power_drmmd}, we also know that 
$
\lambda\,\big\|\widehat{h}_\lambda^\sigma\big\|_\mathcal{H}^2
$
converges to zero.
Combining these results concludes the proof.

\end{proof}

\end{document}